\def\eqref#1{equation~\ref{#1}}
\def\1{\bm{1}}
\DeclareMathAlphabet{\mathsfit}{\encodingdefault}{\sfdefault}{m}{sl}
\SetMathAlphabet{\mathsfit}{bold}{\encodingdefault}{\sfdefault}{bx}{n}
\newcommand{\subalign}[1]{%
  \vcenter{%
    \Let@ \restore@math@cr \default@tag
    \baselineskip\fontdimen10 \scriptfont\tw@
    \advance\baselineskip\fontdimen12 \scriptfont\tw@
    \lineskip\thr@@\fontdimen8 \scriptfont\thr@@
    \lineskiplimit\lineskip
    \ialign{\hfil$\m@th\scriptstyle##$&$\m@th\scriptstyle{}##$\hfil\crcr
      #1\crcr
    }%
  }%
}
\title{What Can Transformer Learn with Varying Depth? Case Studies on Sequence Learning Tasks}
\author{
Xingwu Chen\thanks{Department of Computer Science, The University of Hong Kong. Email: \texttt{xingwu@connect.hku.hk}} \and Difan Zou\thanks{Department of Computer Science and Insititute of Data Science, The University of Hong Kong. Email: \texttt{dzou@cs.hku.hk}}
}
\date{}
\theoremstyle{plain}
\newtheorem{theorem}{Theorem}[section]
\newtheorem{proposition}[theorem]{Proposition}
\newtheorem{lemma}[theorem]{Lemma}
\theoremstyle{definition}
\newtheorem{definition}[theorem]{Definition}
\theoremstyle{remark}
\begin{document}

\maketitle

\begin{abstract}
We study the capabilities of the transformer architecture with varying depth. Specifically, we designed a novel set of sequence learning tasks to systematically evaluate and comprehend how the depth of transformer affects its ability to perform memorization, reasoning, generalization, and contextual generalization. We show a transformer with only one attention layer can excel in memorization but falls short in other tasks. 
Then, we show that exhibiting reasoning and generalization ability requires the transformer to have at least two attention layers, while context generalization ability may necessitate three attention layers. Additionally, we identify a class of simple operations that a single attention layer can execute, and show that the complex tasks can be approached as the combinations of these simple operations and thus can be resolved by stacking multiple attention layers. This sheds light on studying more practical and complex tasks beyond our design. Numerical experiments corroborate our theoretical findings.

\end{abstract}

\section{Introduction}

Transformers \citep{vaswani2017attention} have been recognized as the most powerful model to achieve state-of-the-art performances in various deep learning tasks such as vision, natural language process, and decision making \citep{dosovitskiy2020image, brown2020language,chen2021decision}. Its superior performance makes it the most prevalent architecture for building universal foundation models \citep{touvron2023llama,dosovitskiyImageWorth16x162021,devlin2018bert,ying2021transformers,ouyang2022training}, its superiority in capability makes it one of the most widely used architectures for building universal foundation models \citep{bommasani2021opportunities,brown2020language,kaplanScalingLawsNeural2020a}. A central module in the transformer is the attention layer, which performs nonlinear sequence-to-sequence mapping that allows each token to attend to several other tokens based on the semantic relationship. By stacking multiple attention layers, the transformer models have been observed to be surprisingly strong at performing memorization, understanding, and reasoning from the input sequences. 

The remarkable empirical performance of transformer has triggered a series of theoretical studies, which aim to understand the working mechanism of transformer. For instance, some early attempts, including RASP \citep{weissThinkingTransformers2021} and Tracr \citep{lindner2023tracr}, propose to interpret the transformer model by translating its mechanism into programming languages. However, their explanations are still hard to parse and difficult to help obtain quantitative characterizations on the transformer's capability. More recently, people has particularly focused on the capability of transformer in certain aspects, including its universal approximation power 
 \citep{kajitsukaAreTransformersOne2023,yunAreTransformersUniversal2020}, data memorization capacity \citep{kajitsukaAreTransformersOne2023,mahdaviMemorizationCapacityMultiHead2023}, reasoning ability \citep{boix-adseraWhenCanTransformers2023a, fu2023can}, and in-context learning (ICL) \citep{xie2021explanation,garg2022can,baiTransformersStatisticiansProvable2023,vonoswaldTransformersLearnIncontext2023}.

However, these research primarily focuses on specific, simplified tasks that only utilize a subset of the transformer's capabilities. 
In practice, tasks often involve complex combinations of these simpler tasks, rendering them more challenging. Moreover, these studies often assume that the data is well-structured, aligning perfectly with the desired input-output token pairs. In practical scenarios, transformer inputs typically consist of general sequences, with tokens generated through human learning processes. Consequently, it remains unclear whether a given transformer model can effectively handle practical sequence-based tasks that require leveraging multiple aspects of the transformer's capabilities. Specifically, the capacity and limitations of the transformer architecture for addressing diverse sequence learning tasks remain uncertain.

In this paper, we aim to comprehensively understand the performance of the attention-based transformer architecture by investigating whether and how certain tasks can be learned by transformers with varying depth (i.e. the number of attention layers). Specifically, we have designed four sequence learning tasks, including sequence classification, in-context question answering, template matching, and in-context template matching tasks, aiming at assessing and understanding the transformer's memorization, reasoning, generalization, and contextual generalization abilities. Notably, these tasks are correlated and purposely designed to incrementally become harder, based on which we can analyze how these abilities vary depending on the number of attention layers employed and characterize the mechanism of different attention layers. We have then conducted a systematic theoretical analysis to address two key research questions: (1) the minimum number of attention layers required for the transformer to perform the four tasks; and (2) the respective roles of the different attention layers in accomplishing these tasks. Our contributions to the field are summarized as follows:

\begin{itemize}
    \item We propose a new set of sequence learning tasks specifically designed to assess the capabilities of transformers. In contrast to prior research that often concentrates on isolated tasks with well-structured input data, our tasks are systematic, interconnected, and more representative of real-world scenarios (the input data are general sequences generated from human's learning process). By leveraging these tasks, we can accurately evaluate the transformer's proficiency in key areas such as memorization, reasoning, generalization, and contextual generalization, and interpret the underlying mechanism of attention layers.

    \item We then theoretically assess the learning ability of transformer with varying numbers of attention layers by presenting both positive and negative results. In particular, we prove that the transformer with single attention layer can memorize but fails on other tasks. On the opposite, we show that two-layer transformer can successfully perform the reasoning and generalization tasks, and the transformer may need $3$ layers to conduct contextual generalization. We further conduct numerical experiments to validate the theoretical results. These theoretical findings justify the need of more attention layers to accomplish more complicated tasks (that require multi-step reasoning and generalization), which aligns with the emergence phenomenon of transformer \citep{weiEmergentAbilitiesLarge2022}. 

    \item We further provide some evidences regarding the working mechanism of transformer to accomplish the designed tasks. We show that the single attention layer can perform simple copying, parsing, matching, and mapping operations. Then stacking multiple attention layers can achieve the combinations of these operations, thus accomplish the harder tasks. In our experiments, we show that the attention maps of a trained transformer for different tasks are consistent with our findings. This could be of independent interest to understand how transformer tackle more complicated tasks in practice.

\end{itemize}

\section{Related Work}

\textbf{Theoretical Understanding of Transformers.}  Remarkable achievements of transformer leads to various theoretical attempts to understand its underlying mechanisms. These works approach the understanding of transformers from different angles. From a universal-approximation perspective, researchers have proven that transformers can approximate any sequence-to-sequence mapping under mild assumptions about the data distribution and target functions\citep{yunAreTransformersUniversal2020,kajitsukaAreTransformersOne2023,mahdaviMemorizationCapacityMultiHead2023,takakura2023approximation}. In addition to mapping sequences, there is a line of work that investigates the transformer's ability to learn in context \citep{vonoswaldTransformersLearnIncontext2023,gargWhatCanTransformers2023,guoHowTransformersLearn2023,zhangTrainedTransformersLearn2023}, generalize on certain tasks \citep{boix-adseraWhenCanTransformers2023a} and even perform complex instructions \citep{pmlr-v202-giannou23a, liu2022transformers}.  While these works provide useful perspectives on what transformers can do and propose possible mechanisms, they often involve more layers than what is typically used in practice or fall short in explaining real-world tasks involving discrete tokens and functions. Additionally, some works try to understand transformers from a computational perspective, offering valuable insights for understanding important properties such as chain of thought \citep{feng2024towards,merrill2023expresssive,li2024chain}. Although these works show the expressive power and limitations of well-structured transformers for certain tasks, the detailed analysis of expressive power in specific layers remains unclear.

\textbf{Empirical Understanding of Transformers.} In addition to theoretical investigations, researchers have also attempted to understand the mechanisms of transformers through empirical analysis,  such as interpreting trained transformers to derive human-readable representations \citep{lindner2023tracr,friedman2023learning,weissThinkingTransformers2021,zhou2023algorithms}, explaining transformers through probing techniques
\citep{clark2019does,prabhu2022adapting,zou2023representation} or leveraging other large language models \citep{bills2023language}. However, due to the complexity of large language models, the explanations derived from these experiments are often complex and challenging to comprehend. Moreover, these empirical methods only provide insight into \textit{how} the model accomplishes certain tasks, while the underlying mechanisms and the minimum requirements for transformers to learn such algorithms, such as the minimum number of layers and attention heads, remain elusive. In comparison to previous theoretical work, we introduce a practical setting that adapts discrete functions and data. Unlike using random features, we employ an approach that is more easily explainable. Furthermore, we aim to provide a theoretical explanation for why smaller models with fewer layers struggle with certain tasks, instead of relying solely on experimental results. To the best of our knowledge, this is the first study that compares and explains the limitations of small transformers.

\section{Preliminaries}
\label{sec:problem_set}
\begin{figure*}[t]\label{fig:tasks}
\begin{center}
  \includegraphics[width=\textwidth]{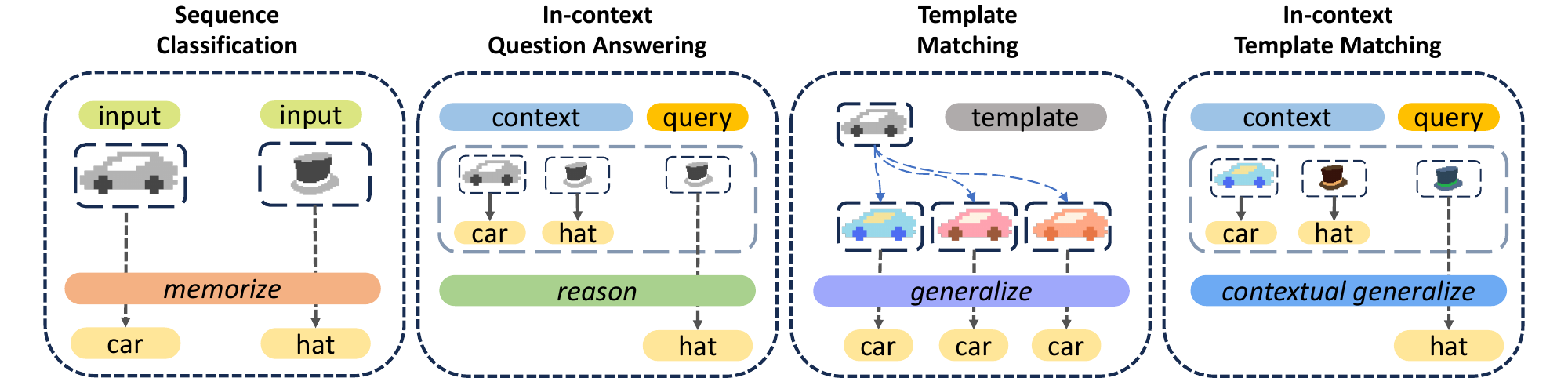}
  \caption{Descriptions of the four sequence learning tasks considered in this work, including (1) sequence classification task; (2) in-context question answering task; (3) template matching task; and (4) in-context template matching task. Here each input, context, and query are represented as sequences consisting of multiple tokens. }
\label{tasks}
\end{center}

\end{figure*}

\textbf{Notations.} The set of indices from $0$ to $n-1$ is denoted by $[n]$. Boldface upper-case $\mathbf{X}$ and lower-case $\mathbf{x}$ represent matrices and vectors, respectively. Specifically, we use $[\cdot]$ as Python index notation where $\mathbf{X}[i,:]$ refers to the $i$-th row of $\mathbf{X}$ and $\mathbf{X}[:,j]$ refers to the $j$-th column of $\mathbf{X}$. Similarly, $\mathbf{x}[i]$ refers to the $i$-th element of $\mathbf{x}$.

\subsection{Attention-only Transformers}
The transformer \citep{vaswani2017attention} is a neural network that can map a matrix $[\mathbf{x}_0,\dots,\mathbf{x}_{n-1}]$ of size $d \times n$ to a sequence $[\mathbf{y}_0,\dots,\mathbf{y}_{n-1}]$. In this work, we consider a transformer with $L$ hidden layers and a classifier output layer:
\begin{equation}\label{eq:transformer}
    \texttt{TF} = \underbrace{f_{\texttt{cls}}}_{\text{classifier}}\circ \underbrace{ \texttt{TF}_L \circ \dots \circ \texttt{TF}_1 }_{L \text{ hidden layers}}.
\end{equation}

Moreover, as mentioned previously, the objective of this work is to investigate the reasoning and generalization ability of the attention calculations in the transformer. Thus in each hidden layer, we choose to explode the MLP module as it performs token-wise operations that may introduce unnecessary distortion to our analysis \citep{zhang2017understanding}. Mathematically, given the representation matrix $\mathbf H^{(l)}\in\mathbb R^{d'\times n}$ in the $(l+1)$-th layer, where $d'$ denotes the dimension of the hidden representations, the transformer layer $\texttt{TF}_{l+1}$ with $m$ attention heads computes:
\begin{align}\label{eq:transformer_attn}
    \mathbf{H}^{(l+1)} &= \texttt{TF}_{l+1}(\mathbf H^{(l)})\notag\\ & = \mathbf{H}^{(l)} + \frac{1}{m}\sum_{i = 1}^{m}\big[\mathbf{V}^{(l)}_i \sigma\big((\mathbf{Q}^{(l)}_i)^\top \mathbf{K}^{(l)}_i\big)\big ],
\end{align}
where $\mathbf{Q}^{(l)}_i = \mathbf{W}^{(l)}_{Q_i}\mathbf{H}^{(l)} $, $\mathbf{K}^{(l)}_i=\mathbf{W}^{(l)}_{K_i}\mathbf{H}^{(l)}$, and $\mathbf{V}^{(l)}_i=\mathbf{W}^{(l)}_{V_i}\mathbf{H}^{(l)}$ are the query, key, and value computed by the $i$-th attention head with learned weight matrices $\mathbf{W}^{(l)}_{Q_i}$, $\mathbf{W}^{(l)}_{K_i}$, and $\mathbf{W}^{(l)}_{V_i}$ respectively. These weight matrices have dimensions $\mathbb{R}^{d' \times d'}$. Besides, $\sigma(z)$ is the activation function, which is set as the ReLU function $ \sigma(z)= \max \{0,z\}$ in this work. 

\noindent\textbf{Classifier Layer:} Given the representation generated by the last hidden layer $\texttt{TF}_L$, i.e, $\mathbf{H}^{(L)}=(\mathbf{h}^{(L)}_0,\dots,\mathbf{h}^{(L)}_{n-1})$, we make use of its last column, i.e., $\mathbf{h}^{(L)}_{n-1}$ to obtain the final prediction  $\mathbf{o} = \mathbf{W}_{O} \mathbf{h}^{(L)}_{n-1} \in \mathbb{R}^{C}$, where $\mathbf{W}_{O} \in \mathbb{R}^{C \times d'}$ is the weight matrix of the classifier layer. Notably, $C$ represents the total number of labels, which can be seen as (1) the vocabulary size for the token prediction task; or (2) the number of classes for the sequence classification task. The prediction result is then achieved by finding the index of the maximum entry of $\mathbf{o}$, i.e., $\hat y = \arg\max_{i\in[C]} \mathbf{o}[i]$.

\noindent\textbf{Positional Encoding and Padding}: Given a sequence of discrete tokens, denoted by $\mathbf{X} = [\mathbf{x}_0,\dots,\mathbf{x}_{n-1}] \in \mathbb{R}^{d \times n}$, the initial representation of each token is composed by the original token embedding, positional encoding, and padding. In particular, note that the hidden dimension is $d'$, the initial representation matrix for the sequence $\mathbf H^{(0)}$ is given by:
\begin{equation}\label{eq:H0}
    \mathbf{H}^{(0)} = \begin{NiceArray}{\left\lgroup ccc \right\rgroup l}
  \mathbf{x}_0 & \cdots & \mathbf{x}_{n-1} & \rightarrow d \times n \\
  \mathbf{p}_0 & \cdots & \mathbf{p}_{n-1} & \rightarrow n \times n \\
  \mathbf{0} & \cdots & \mathbf{0} & \rightarrow (d' - n - d) \times n \\
\end{NiceArray}.
\end{equation}
For the simplicity of analysis, we consider the one-hot positional encoding, i.e., we set $\mathbf{p}_i = [\mathbf{0}_{i},1,\mathbf{0}_{n -  (i+1)}]^\top$ for position $i$. 

\section{Memorization, Reasoning, and Generalization Tasks for Sequences}
\label{sec:tasks}

In this section, we will introduce the tasks designed to assess and understand the capability of transformers for tackling sequences. In particular, four tasks will be designed, which aim to characterize the capability of transformer structure in terms of memorization, reasoning, generalization, and contextual generalization.

\subsection{Memorization: Sequence Classification Task} 
The memorization capability serves as a fundamental theoretical property for transformers.  We start our understanding of the transformer model by characterizing its memorization capability. In particular, we consider the sequence classification task, as shown in \ref{tasks}, one of the most important and successful tasks for transformer-based models \citep{devlin2018bert}.
To formulate the sequence classification task,  we define the dataset $\mathcal{D}_{\texttt{SC}}$ as a collection of $N$ sequence-label examples, each with a different class type. Specifically, $\mathcal{D}_{\texttt{SC}} = \{ (\mathbf{X}^{(0)},y^{(0)}),\dots, (\mathbf{X}^{(N-1)},y^{(N-1)})\}$, where $\mathbf{X} \in \mathbb{R}^{d \times n}$ is a sequence consisting of $n$ discrete tokens from a word alphabet $\mathcal{X}$, and the corresponding labels ${y^{(0)},y^{(1)},\dots,y^{(N-1)}}$ are distinct integer. Before input into the model, we first append a \texttt{CLS} token $\mathbf{c}$ at the end of each sequence, which is widely applied in transformer-based models as a representation of the whole sequence. Then this task is to characterize whether the transformer model can successfully map each sequence to the corresponding label based on the representation corresponding to the last token of the sequence, i.e., $\texttt{CLS}$.

\subsection{Reasoning: In-context Question Answering Task} 
In-context learning \citep{brownLanguageModelsAre2020} refers to the capability of model to learn from the context and provide answers to questions based on examples and their corresponding solutions. To characterize the reasoning capability of the transformer, we consider a simplified in-context learning task, called in-context question answering task, which is summarized in \ref{tasks}. We consider a simple in-context learning problem with several question-answer pairs, the model is required to retrieve the corresponding answer based on the given question from the context. 

To formulate our in-context question-answering task, we define three types of tokens: \textit{question tokens} $\mathcal{Q} = \{\mathbf{q}_0, \mathbf{q}_1, \dots, \mathbf{q}_{n_q-1}\}$, \textit{response sign} $\mathcal{R} =  \{\mathbf{r}\}$, and \textit{answer tokens} $\mathcal{A} = \{\mathbf{a}_0, \mathbf{a}_1, \dots, \mathbf{a}_{n_a-1}\}$. Additionally, we use $\pi_0,\dots,\pi_{n-1}$ and $\pi_0',\dots,\pi_{n-1}'$ denote the indices of the sampled question and answer tokens, $\pi$ and $\pi'$ correspond to permutations. The \textit{response sign} is a special tag widely used in large language models like Llama2 \citep{touvron2023llama} and Galactica \citep{taylorGalacticaLargeLanguage2022a} for guiding the model's behavior in question-answer scenarios. Our data is constructed as follows: we sample $k$ questions from $\mathcal{Q}$, denoted as $(\mathbf{q}_{\pi_0}, \dots, \mathbf{q}_{\pi_{k-1}})$ and $k$ answers $(\mathbf{a}_{\pi'_0}, \dots, \mathbf{a}_{\pi'_{k-1}})$ from $\mathcal{A}$. We then add the response sign $\mathbf{r}$ between each $\mathbf{q}_{\pi_i}$ and  $\mathbf{a}_{\pi'_i}$, resulting in a context block length of $3k$: $\mathbf{B}_{\texttt{IC-QA}}^{(\pi, \pi')} = (\mathbf{q}_{\pi_0}, \mathbf{r}, \mathbf{a}_{\pi'_0}, \dots, \mathbf{q}_{\pi_{k-1}}, \mathbf{r}, \mathbf{a}_{\pi'_{k-1}})$. Next, we randomly choose the question $\mathbf{q}_{\pi_c}$ (where $c \in [k]$)  from the context block, and concatenate it with the final response sign: $[\mathbf{B}_{\texttt{IC-QA}}^{(\pi,\pi')}; \mathbf{q}_{\pi_c}; \mathbf{r}]$.  We denote this data as $\mathbf{E}_{\texttt{IC-QA}}^{(\pi,\pi')}(c)$, a sequence length $n = 3k+2$. Instead of pre-defining question-answer pairs, we consider each question to have $n_a$ possible answers by choosing different permutation $\pi$ and $\pi'$, as the objective is to investigate whether the model can learn to retrieve answers from the context, rather than memorizing the question-answer pairs. In this way, with $n_q$ questions and $n_a$ answers, we can construct a dataset (denoted as $\mathcal{D}_{\texttt{IC-QA}}^{(k)}$) with $A_{n_q}^{k} \cdot A_{n_a}^{k} \cdot k$ examples, where $A_{n}^{m} = \frac{n!}{(n-m)!}$ denotes the number of ways to choose $m$ elements from a set of $n$ elements. Then, the task is to characterize that given any context block $\mathbf{E}_{\texttt{IC-QA}}^{(\pi,\pi')}(c) =[\mathbf{B}_{\texttt{IC-QA}}^{(\pi,\pi')}; \mathbf{q}_{\pi_c}; \mathbf{r}] $, whether the transformer can correctly output the desired answer $\mathbf a_{\pi_c'}$.

\subsection{Generalization: Template Matching Task} Motivated by the learning process of humans, where we abstract new things into different patterns for further analysis. Inspired by the template task designed in \citet{boix-adseraWhenCanTransformers2023a} for studying the generalization ability of transformer, we consider a similar template matching task to investigate whether transformers with varying attention layers have the ability to generalize. In particular, we first deliver the formal definition of the template.
\begin{definition}
\label{def:template}
A \textbf{template} is a string $\boldsymbol{t} \in {\mathcal{W}}^l$, where $\mathcal{W}$ is an alphabet of ``wildcards''. A \textbf{substitution map} is an injection function $s : \mathcal{W} \rightarrow \mathcal{X}$ that maps wildcards to real word symbols $\mathcal{X}$. Here, $\mathcal{X}$ can be seen as the alphabet of tokens in language or pixel blocks in an image. Different wildcards should be mapped to different tokens to ensure that each sequence can be mapped to one and only one template. We write $\texttt{sub}(\boldsymbol{t},s) \in {\mathcal{X}}^l$ for the sequence where each wildcard is substituted with the corresponding token: $\texttt{sub}(\boldsymbol{t}, s)_i = s(t_i)$. 
A \textbf{template labeling mapping} is a mapping from a template to the class index $f : {\mathcal{W}}^n \rightarrow \mathbb{Z}^*$.
\end{definition}

In general, the template can be understood as the abstract concept of the data, i.e., in \ref{tasks}, ``car'' is the concept of the car image with different colors.
To construct our dataset, we first define a template set with all possible templates of length $n_{\texttt{tmpl}}$: $\mathcal{T} = \{\boldsymbol{t}_0,\boldsymbol{t}_1,\dots,\boldsymbol{t}_{n_{\texttt{tmpl}}-1}\}$. We then use a \textit{template labeling mapping} $f$ to map each template $\boldsymbol{t}_i$ to a class $y_i$. After that, we use a set of $n_{\texttt{map}}$ \textit{substitution maps} $\mathcal{S} = \{s_0,s_1,\dots,s_{n_{\texttt{map}}-1}\}$ to generate data from the template to a real word sequence. We write the dataset as $\mathcal{D}_{\texttt{tm}} = \big\{\big(\texttt{sub}(\boldsymbol{t}_i,s_j),y_i)\big): i\in[n_{\texttt{tmpl}}], j\in[n_{\texttt{map}}]\big\}$, where $y_i = f(\boldsymbol{t}_i)$ denotes the template label and $\texttt{sub}(\boldsymbol {t}, s)$ denotes the sequence of real-word symbols that follow the template $\boldsymbol {t}$ and token mapping function $s$.

Similar to the sequence classification task, we also append a $\texttt{CLS}$ token at the end of the input sequence for generating the prediction. However, to investigate the generalization ability, the transformer cannot simply memorize all possible sequences but requires to learn their abstract patterns, i.e., the templates, to make the correct prediction. Then, the task is to characterize that given a sequence generated via $\texttt{sub}(\boldsymbol {t}_{k}, s_i)$, whether the transformer can identify the template $\boldsymbol {t}_k$ and output the correct prediction $y_k$. We call the model can \textit{generalize on template $\boldsymbol {t}_k$} if it can correctly predict all possible sequences generated by $\boldsymbol {t}_k$ and $s_i$.

\subsection{Contexture Generalization: In-context Template Matching Task } 

We then consider a more complex and general problem that is designed as the combination of in-context question-answering and template matching tasks, which requires the model to perform both reasoning and generalization simultaneously. This task is summarized in \ref{tasks}.

In particular, we formulate our problem by replacing the question in the context block $\mathbf{B}_{\texttt{IC-QA}}$ from a simple token $\mathbf{q}$ to the template data $\texttt{sub}(\boldsymbol{t},s)$. To construct our dataset, we need to define a set of templates $\mathcal{T} = \{\boldsymbol{t}_0,\boldsymbol{t}_1,\dots,\boldsymbol{t}_{l_{\texttt{tmpl}}-1}\}$. All templates have the same length $l$. Rather than pre-defining a mapping from the template $\boldsymbol{t}$ to a class label $y$, we follow the construction process in the previous in-context question-answering task. We first randomly choose $k$ templates from $\mathcal{T}$ and $k$ answers from the answer token set $\mathcal{A}$: $(\boldsymbol{t}_{\pi_0}, \dots, \boldsymbol{t}_{\pi_{k-1}})$ and $(\mathbf{a}_{\pi'_0}, \dots, \mathbf{a}_{\pi'_{k-1}})$. Then we can consider $k$ different substitution mapping function $s_{\pi''_0},\dots,s_{\pi''_{k-1}}$ for  $\boldsymbol{t}_{\pi_0}, \dots, \boldsymbol{t}_{\pi_{k-1}}$ to generate sequences of real-world symbols, denoted as $\mathbf X_{\pi_0},\dots,\mathbf X_{\pi_{k-1}}$, where $\mathbf X_{\pi_i} = \texttt{sub}(\boldsymbol {t}_{\pi_i}, s_{\pi''_{i}})$.
Then, the context block is defined as $\mathbf{B}_{\texttt{IC-TM}}^{(\pi,\pi')} = \big(\mathbf X_0,\mathbf{r} ,\mathbf{a}_{\pi'_0},\dots, \mathbf{X}_{\pi_{k-1}},\mathbf{r},\mathbf{a}_{\pi'_{k-1}})$. Then we randomly choose a query template $\boldsymbol {t}_{\pi_c}$ with $c\in\{0,\dots,k-1\}$ and use a new mapping function $s_{\pi''_k}$ to get the sequence of real-world symbols $\mathbf X_{\pi_c}$. Then, the entire input sequence is defined as $\mathbf{E}_{\texttt{IC-TM}}^{(\pi,\pi')}(c) = [\mathbf{B}_{\texttt{IC-TM}}^{(\pi,\pi')}, \mathbf X_{\pi_c}, \mathbf r]$, and the desired answer should be $\mathbf a_{\pi_c'}$. Then, the entire dataset, denoted as $\mathcal D_{\texttt{IC-TM}}$, is the collection of all sequences-answer pairs that generated by using all possible templates, answers, and mapping functions. 

Compared with the in-context learning question-answering and template matching tasks, this task requires the model to reason from the context and generalize to the unseen data. For instance, in \ref{tasks}, the model needs to first identify the template/concept of the query image (which is ``hat''), and then seeks the answer from the context (there is an example image using the same template and providing the answer ``hat''). In this task, the model should capture the similarity between each question (generalization) and retrieve the answer from the context (reasoning). 

\textbf{Summary and Discussion.}
We provide data examples and a more detailed comparison for the four tasks in \cref{sec:examples_data}. Note that we employ these tasks to assess the model's capacity, i.e. for the given architecture, especially the transformer with different attention layers, \textit{what} the model can do  and \textit{how} the model do it.  Specifically, we aim to determine whether there exists a particular configuration of the transformer model, such that all examples in the dataset can be perfectly learned. This ability is independent of the training process; our focus is solely on the ability of the transformer's architecture for tackling these tasks.

\section{Main Results}
\label{sec:main_results}
In this section, we present our main findings regarding the aforementioned tasks. We will focus on characterizing how transformer model performs on these tasks with varying attention layers. We will prove both negative and positive results on the capability of transformer when different numbers of attention layers are stacked.

\subsection{Single-Layer Transformer Can Memorize}
\label{sec:mem_main}
We commence our investigation by examining the memorization capability of a single-layer transformer. In this scenario, the model's objective is to accurately classify $N$ sequences with distinct labels. In particular, we will show that given sufficient heads, a single-layer transformer has the capability to memorize all data points. We summarize this result in the following Theorem.

\begin{theorem}
\label{prop:mem_succ}
   For any dataset of the sequence classification task, denoted by $D_{\texttt{SC}}$, let $d$ be the token dimension, and $n$ be the length of the sequence (i.e., number of tokens). Then there exists a transformer $\texttt{TF}$ with $L = 1$ attention layer, $n$ attention heads, and model embedding dimension $d' = \max\{nd,d+n\}$ such that for all $(\mathbf X, y)\in\mathcal D_{\texttt{SC}}$, it holds that $\texttt{TF}(\mathbf X)=y$\footnote{Here we slightly abuse the notation use $\texttt{TF}(\mathbf X)$ the denote the prediction result of the input $\mathbf X$. Similar notations will be used in other theorems.}.
\end{theorem}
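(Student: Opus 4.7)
The plan is to design the single attention layer so that its $n$ heads jointly copy the $n$ input tokens into $n$ disjoint blocks of the \texttt{CLS}-token hidden vector, producing a representation that uniquely identifies the input sequence; a linear output matrix then separates these pairwise distinct vectors by a nearest-centroid-style classifier.

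For the attention construction, I would exploit the one-hot positional encoding $\mathbf{p}_i=e_i$. For head $i\in\{0,\dots,n-1\}$, choose $\mathbf{W}_{K_i}$ so that $\mathbf{K}_i[:,n-1]=e_i$ and $\mathbf{K}_i[:,j]=\mathbf{0}$ for $j<n-1$ (only the \texttt{CLS} position writes a nonzero key, and it points to coordinate $i$); choose the $i$-th row of $\mathbf{W}_{Q_i}$ to read the $i$-th coordinate of the positional-encoding block so that $\mathbf{Q}_i[i,j]=\mathbb{1}[j=i]$. Then $(\mathbf{Q}_i^\top \mathbf{K}_i)[j,n-1]=\mathbb{1}[j=i]$, and this nonnegative indicator survives the ReLU activation $\sigma$. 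Letting $\mathbf{W}_{V_i}$ copy the first $d$ coordinates (the token embedding) into coordinates $[id,(i+1)d)$, the contribution of head $i$ at the \texttt{CLS} column is $\mathbf{V}_i[:,i]$, which places $\mathbf{x}_i$ in block $i$ and is zero elsewhere. Summing over the $n$ heads and adding the residual yields
\begin{equation*}
\mathbf{v}_j \;:=\; \mathbf{h}^{(1)}_{n-1} \;=\; \mathbf{h}^{(0)}_{n-1} + \tfrac{1}{n}\big[\mathbf{x}_0^{(j)};\,\mathbf{x}_1^{(j)};\,\dots;\,\mathbf{x}_{n-1}^{(j)};\,\mathbf{0}\big],
\end{equation*}
with the concatenation occupying the first $nd$ coordinates; the hidden width $d'=\max\{nd,d+n\}$ is exactly large enough to simultaneously hold the initial token$+$PE block (using $d+n$) and the full concatenation (using $nd$). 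Because the $N$ input sequences are pairwise distinct, the vectors $\{\mathbf{v}_j\}_{j\in[N]}$ are pairwise distinct in $\mathbb{R}^{d'}$.

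For the classifier $\mathbf{W}_O$, I would use a nearest-centroid construction, which is needed because $N$ may greatly exceed $d'$, so one cannot simply invert a Gram matrix. A key observation lets a bias be smuggled into the linear head: because the appended \texttt{CLS} token $\mathbf{x}_{n-1}=\mathbf{c}$ is the same across all examples, the attention output in block $n-1$ is the fixed vector $\mathbf{c}/n$, so together with the sequence-independent residual $\mathbf{h}^{(0)}_{n-1}$, coordinate $k^\star=(n-1)d$ of $\mathbf{v}_j$ is a constant $\kappa>0$ (choose $\mathbf{c}[0]=1$ to guarantee $\kappa\geq 1/n$). For each label $c=y^{(j)}$ present in the dataset, set
\begin{equation*}
\mathbf{W}_O[c,k]=\begin{cases}2\,\mathbf{v}_j[k], & k\neq k^\star,\\ 2\,\mathbf{v}_j[k^\star]-\kappa^{-1}\|\mathbf{v}_j\|^2, & k=k^\star,\end{cases}
\end{equation*}
and set the rows indexed by labels not appearing in the dataset to zero. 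A direct calculation gives $(\mathbf{W}_O\mathbf{v}_{j'})[y^{(j)}]=2\mathbf{v}_j^\top\mathbf{v}_{j'}-\|\mathbf{v}_j\|^2=\|\mathbf{v}_{j'}\|^2-\|\mathbf{v}_j-\mathbf{v}_{j'}\|^2$, which is strictly maximized at $j=j'$ (distance zero) because the $\mathbf{v}_j$'s are pairwise distinct, so $\arg\max_c(\mathbf{W}_O\mathbf{v}_{j'})[c]=y^{(j')}$.

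The main obstacle lies in this last step rather than the attention step: with no explicit bias in the output head and potentially $N\gg d'$, neither linear interpolation nor generic-position hyperplane separation applies directly. The nearest-centroid trick avoids the dimensionality problem but intrinsically requires a bias, and extracting a usable bias from the architecture itself rests on the observation that a token identical across all inputs (the \texttt{CLS}) produces, after one attention layer, a sequence-independent coordinate that can be repurposed as a bias channel. Verifying the attention arithmetic and reducing the $\arg\max$ to the squared-distance inequality $\|\mathbf{v}_j-\mathbf{v}_{j'}\|^2>0$ are then routine.
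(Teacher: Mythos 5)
Your proposal is correct, and the first half coincides with the paper's construction: the paper's \cref{lem:seq_mapping} likewise uses $n$ heads whose attention columns are position-indicator vectors (built from the one-hot positional encodings) so that head $i$ deposits $\mathbf{x}_i$ into the $i$-th disjoint block of the \texttt{CLS} representation; the only cosmetic difference is that the paper spends one head on cancelling the residual stream so the final vector is exactly the concatenation, whereas you keep the residual and correctly observe that it is a constant offset common to all examples. Where you genuinely diverge is the classifier. The paper's \cref{lem:vector_mapping} needs the $N$ representations to be pairwise \emph{non-parallel} (which is why it cancels the residual and assumes non-parallel token embeddings) and then builds $\mathbf{W}_O$ column by column through an induction: append a copy of an existing column, then add a small perturbation $\Delta\mathbf{w}_k$ solving a two-row linear system and rescaled so as not to disturb the earlier classifications. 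Your nearest-centroid construction instead requires only that the representations be pairwise \emph{distinct}, and it handles the absence of an explicit bias by exploiting the coordinate fed by the shared \texttt{CLS} token, which is constant and positive across all inputs and can therefore carry the $-\kappa^{-1}\lVert\mathbf{v}_j\rVert^2$ correction. This buys a non-inductive, one-shot argument with a weaker hypothesis (distinctness rather than pairwise linear independence) and a cleaner verification, namely $(\mathbf{W}_O\mathbf{v}_{j'})[y^{(j)}]=\lVert\mathbf{v}_{j'}\rVert^2-\lVert\mathbf{v}_j-\mathbf{v}_{j'}\rVert^2$; the paper's inductive route buys nothing extra here and is more delicate to check. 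Both arguments are valid, so I see no gap in your proposal.
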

We first remark that the goal of \cref{prop:mem_succ} is to demonstrate the ability of the single-layer transformer for the memorization task, while the (horizontal) model size, i.e., number of heads and embedding dimensions,  are not optimized. It is possible to further sharpen our analysis, e.g., applying the techniques in \citet{mahdaviMemorizationCapacityMultiHead2023}, to relax the conditions on the (horizontal) size of the transformer model.

To achieve this, we show that a single attention layer, with $n$ attention heads, can perform the \textbf{mapping} operation to transformer the input sequence, formulated as a matrix of embeddings (of dimension $\mathbb R^{d\times n}$), to a distinct vector representation. Moreover, we show that these vector representations are linearly independent. 
Then the output classifier layer, equipped with the weight matrix  $\mathbf{W}_{O} \in \mathbb{R}^{N \times d'}$ ($N$ denotes the number of total labels), can map each vector representation to a probability vector, where the index of the largest entry corresponds to the desired sequence label. The full proof and construction of the transformer weights can be found in \cref{sec:prof_mem}.

\cref{prop:mem_succ} demonstrates that the one attention layer is sufficient for memorization. However, it is important to note that memorization alone cannot guarantee other more challenging and critical abilities such as reasoning and generalization. Characterizing the ability of transformer in these aspects will be the focus of the subsequent subsections. 

\subsection{Two-Layer Transformer Performs Reasoning}
\label{sec:icl_main}
Then we explored the ability of transformers to reason using simple in-context learning tasks. Previous research has investigated similar tasks, using induction heads \cite{olsson2022context} and transformer circuits \cite{elhage2021mathematical}, to assess the transformer's reasoning ability. However, the theoretical basis for these observations remains unclear, the connection between the number of attention layers and the reasoning ability has not been thoroughly studied.

In this section, we theoretically characterize the reasoning performance of single-layer and two-layer transformer models on the in-context question-answering task. First, we provide the following theorem to show that any single-layer transformer cannot perfectly perform the reasoning task. 
\begin{theorem}
\label{prop:lsa_fail}
Let $\mathcal D_{\texttt{IC-QA}}$ be a dataset of the in-context question-answering task and  $n$ be the number of question-answer pairs. Then for any transformer with $L = 1$ attention layer, no matter how many heads are applied, there exists at least one data point $(\mathbf{E}_{\texttt{IC-QA}}^{(\pi,\pi')}(c), \mathbf a_{\pi_c'})\in \mathcal D_{\texttt{IC-QA}}^{(k)}$ such that $\texttt{TF}\big(\mathbf{E}_{\texttt{IC-QA}}^{(\pi,\pi')}(c)\big)\neq \mathbf a_{\pi_c'}$.
\end{theorem}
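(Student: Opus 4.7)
My plan is to exploit the following structural fact: because the last token is always the response sign $\mathbf{r}$ and always sits at position $n-1$, the one-hot positional encoding makes the initial embedding $\mathbf{h}_{n-1}^{(0)}$ identical for every input in $\mathcal{D}_{\texttt{IC-QA}}^{(k)}$. Consequently, for a one-layer model every quantity derived from position $n-1$ inside the attention update is a fixed constant across the dataset. Combined with the fact that $\sigma$ is applied entrywise (there is no softmax normalization that couples positions through a denominator), \eqref{eq:transformer_attn} at column $c=n-1$ collapses to
\begin{equation*}
    \mathbf{h}_{n-1}^{(1)} \;=\; \mathbf{h}_{n-1}^{(0)} \;+\; \sum_{j=0}^{n-1} \phi\bigl(\mathbf{h}_j^{(0)}\bigr), \qquad \phi(\mathbf{h}) := \tfrac{1}{m}\sum_{i=1}^{m} \sigma\bigl((\mathbf{W}_{Q_i}\mathbf{h})^\top (\mathbf{W}_{K_i}\mathbf{h}_{n-1}^{(0)})\bigr)\,\mathbf{W}_{V_i}\mathbf{h}.
\end{equation*}
Because the one-hot positional encoding makes $\mathbf{h}_j^{(0)}$ a function of the token at position $j$ alone (together with the fixed index $j$), this writes $\mathbf{h}_{n-1}^{(1)}$ as an additive sum of per-position, content-only contributions.

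\textbf{Four-example gadget.} Take $k=2$ question--answer pairs and fix two question tokens $\mathbf{q}_1,\mathbf{q}_2\in\mathcal{Q}$ and two answer tokens $\mathbf{a}_1,\mathbf{a}_2\in\mathcal{A}$. I will consider the four length-$8$ inputs
\begin{align*}
    \mathbf{E}_1 &= (\mathbf{q}_1,\mathbf{r},\mathbf{a}_1,\mathbf{q}_2,\mathbf{r},\mathbf{a}_2,\mathbf{q}_1,\mathbf{r}), & \text{target } \mathbf{a}_1, \\
    \mathbf{E}_2 &= (\mathbf{q}_1,\mathbf{r},\mathbf{a}_1,\mathbf{q}_2,\mathbf{r},\mathbf{a}_2,\mathbf{q}_2,\mathbf{r}), & \text{target } \mathbf{a}_2, \\
    \mathbf{E}_3 &= (\mathbf{q}_1,\mathbf{r},\mathbf{a}_2,\mathbf{q}_2,\mathbf{r},\mathbf{a}_1,\mathbf{q}_1,\mathbf{r}), & \text{target } \mathbf{a}_2, \\
    \mathbf{E}_4 &= (\mathbf{q}_1,\mathbf{r},\mathbf{a}_2,\mathbf{q}_2,\mathbf{r},\mathbf{a}_1,\mathbf{q}_2,\mathbf{r}), & \text{target } \mathbf{a}_1.
\end{align*}
All four examples agree on the tokens at positions $0,1,3,4,7$; at each of the remaining positions $2,5,6$ the multiset of tokens across $\{\mathbf{E}_1,\mathbf{E}_4\}$ equals the multiset across $\{\mathbf{E}_2,\mathbf{E}_3\}$ (e.g.\ at position $6$ both pairs cover $\{\mathbf{q}_1,\mathbf{q}_2\}$, and likewise at positions $2$ and $5$ both pairs cover $\{\mathbf{a}_1,\mathbf{a}_2\}$). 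Substituting into the additive decomposition gives the vector identity
\begin{equation*}
    \mathbf{h}_{n-1}^{(1)}(\mathbf{E}_1) + \mathbf{h}_{n-1}^{(1)}(\mathbf{E}_4) \;=\; \mathbf{h}_{n-1}^{(1)}(\mathbf{E}_2) + \mathbf{h}_{n-1}^{(1)}(\mathbf{E}_3).
\end{equation*}

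\textbf{Linear contradiction and the main obstacle.} Applying the linear readout $\mathbf{W}_O$ and taking the signed margin $g(\mathbf{E}) := \mathbf{o}(\mathbf{E})[\mathbf{a}_1]-\mathbf{o}(\mathbf{E})[\mathbf{a}_2]$ preserves the above identity, yielding $g(\mathbf{E}_1)+g(\mathbf{E}_4)=g(\mathbf{E}_2)+g(\mathbf{E}_3)$. But correctness on all four examples would force $g(\mathbf{E}_1),g(\mathbf{E}_4)>0$ and $g(\mathbf{E}_2),g(\mathbf{E}_3)<0$, so the left-hand side is strictly positive while the right-hand side is strictly negative---a contradiction. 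Hence at least one example is misclassified, regardless of the weights or the number of heads. The one delicate step is justifying the clean additive, token-only decomposition of $\mathbf{h}_{n-1}^{(1)}$: it hinges on (i) $\mathbf{h}_{n-1}^{(0)}$ being input-invariant (because the last token is always $\mathbf{r}$ at a fixed position), (ii) the entrywise $\sigma$ activation (no softmax-induced normalization mixing positions), and (iii) having only a single attention layer, so positions cannot first mix before the read-out. Once the decomposition is established, the combinatorial gadget and linear cancellation are routine.
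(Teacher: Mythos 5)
Your proposal is correct and follows essentially the same route as the paper: the paper's Proposition on linear dependence of single-layer outputs is exactly your additive, per-position, token-only decomposition of $\mathbf{h}_{n-1}^{(1)}$ (made possible by the fixed last token and the entrywise activation), and its four-sequence counterexample with the sum identity $\mathbf{o}^{(0)}+\mathbf{o}^{(1)}=\mathbf{o}^{(2)}+\mathbf{o}^{(3)}$ is, up to relabeling, your gadget $\mathbf{E}_1,\dots,\mathbf{E}_4$ with the same sign contradiction on the margins. The only difference is that the paper packages the cancellation step as a general ``dependent sequences'' lemma rather than inlining it, which is cosmetic.
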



\cref{prop:lsa_fail} suggests for any single-layer transformer, there must exist at least one data point that cannot be correctly predicted, suggesting its inability to perfectly tackle the in-context question-answer task. The idea to prove this is to show that single-layer attention function can preserve the linear dependency (defined in terms of the set operations, see \cref{sec:limitation_dependency}). In other words, if multiple input sequences, such as the entire dataset, exhibit some dependence, then the corresponding outputs of the single-layer attention will also display linear dependence. By leveraging this linear dependence in the outputs, we can demonstrate that the attention function fails to successfully learn all question-answering tasks. The detailed proof can be found in \cref{sec:prf_lsa_fail}.

Moreover, we claim that \cref{prop:lsa_fail} is not limited to the ReLU attention, but can also apply to softmax attention when using single head in\cref{sec:soft_max} (extending to multiple head case is left for future study). Then, we show that, in the following theorem, a two-layer transformer can resolve the issue of the first-layer transformer and perfectly reason all sequences in the dataset. 


\begin{theorem}
\label{prop:lsa_succ}
For any dataset of the in-context question-answering task, denoted by $\mathcal D_{\texttt{IC-QA}}$, let $k$ be the number of question-answer pairs (the sequence length is $n=3k+2$) and $d$ be the dimension of the token embedding. There exists a transformer $\texttt{TF}$ with $L = 2$ attention layers, $1$ attention head, and $d' = d+n$ such that for all $\big(\mathbf{E}_{\texttt{IC-QA}}^{(\pi,\pi')}(c),\mathbf a_{k_c'}\big)\in \mathcal D_{\texttt{IC-QA}}$, it holds that $\texttt{TF}(\mathbf{E}_{\texttt{IC-QA}}^{(\pi,\pi')}(c))= \mathbf a_{k_c'}$.
\end{theorem}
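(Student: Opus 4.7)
My plan is to explicitly construct weights realizing a two-layer ``induction head'' circuit. Layer~1 aggregates each in-context triple $(\mathbf{q}_{\pi_i},\mathbf{r},\mathbf{a}_{\pi'_i})$ into the response-sign position that separates them, and simultaneously deposits the query question $\mathbf{q}_{\pi_c}$ into the final $\mathbf{r}$. Layer~2 then uses the final $\mathbf{r}$ as a query, matches it against the stored questions at the context response signs, and copies out the answer bound to the unique matching question. Throughout, I use the one-hot positional encoding introduced in \eqref{eq:H0} as an address bus, and I rely on the ReLU activation for exact thresholding.

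\textbf{Layer~1 (neighbor aggregation).} I choose $\mathbf{W}^{(0)}_{Q_1},\mathbf{W}^{(0)}_{K_1}$ so that, using the orthogonality of the one-hot position vectors $\mathbf{p}_i$, the pre-activation entry of the query-key product equals $1$ exactly when $|i-j|=1$ and is non-positive otherwise. After the ReLU, every position attends uniformly to its two immediate neighbors; the only boundary case is the final $\mathbf{r}$ at position $n-1=3k+1$, which has only the left neighbor $\mathbf{q}_{\pi_c}$. Setting $\mathbf{W}^{(0)}_{V_1}$ to project onto the token-embedding block, the residual update writes the sum of the neighbors' embeddings into the token-embedding rows. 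Consequently, each context response sign at position $3i+1$ now stores $\mathbf{r}+\mathbf{q}_{\pi_i}+\mathbf{a}_{\pi'_i}$ in its embedding rows, while the final response sign stores $\mathbf{r}+\mathbf{q}_{\pi_c}$; stray content deposited at non-$\mathbf{r}$ positions will be masked out by Layer~2.

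\textbf{Layer~2 (matching and retrieval).} I design $\mathbf{W}^{(1)}_{Q_1},\mathbf{W}^{(1)}_{K_1}$ so that the query at the last column reads off $\mathbf{q}_{\pi_c}$ (by cancelling the known $\mathbf{r}$ component), while the key at each position $j$ reads off the question part of whatever Layer~1 stored there, gated by a position-parity mask $\mathbf{1}[j \bmod 3 = 1]$ assembled as a linear combination of the one-hot positional rows. After ReLU, only the unique context response sign whose stored question equals $\mathbf{q}_{\pi_c}$ survives, producing a one-hot attention weight. The value matrix $\mathbf{W}^{(1)}_{V_1}$ then extracts the answer component $\mathbf{a}_{\pi'_i}$ from the selected position and writes it into the last column, whereupon the classifier $\mathbf{W}_O$, constructed as in the proof of \cref{prop:mem_succ}, decodes $\mathbf{a}_{\pi'_c}$.

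\textbf{Main obstacle.} The delicate step is Layer~2, where a single head must simultaneously (i)~strictly separate the matching question from every mismatch, (ii)~suppress all non-response-sign positions where Layer~1 deposited mixed content, and (iii)~leave the answer component cleanly extractable by the value matrix. I would address all three by routing token content and positional content through disjoint row-blocks of the residual stream, and by working with the mild assumption that the question and answer subalphabets span linearly independent subspaces of $\mathbb{R}^d$ (which can always be arranged by a fixed preprocessing of the token embeddings). Under this setup, the ReLU performs exact selection, and the entire construction fits within the budget $d'=d+n$, since the token block and the positional block share no rows.
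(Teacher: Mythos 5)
Your construction is correct and follows essentially the same copy-then-match induction-head strategy as the paper's proof: a position-instructed first layer that binds each question to its answer, followed by a second layer that matches the query question (read off the final response sign) against the stored questions and extracts the bound answer via a value projection onto the answer block. The only difference is cosmetic --- the paper aggregates all three tokens of each Q-R-A block into every position of that block via a block-diagonal instructive attention (so no positional mask is needed in layer 2), whereas you aggregate nearest neighbors onto the response signs and compensate with the $j \bmod 3 = 1$ mask; both variants are directly realizable with the paper's instructive- and constrained-attention lemmas, and your handling of the final response sign's harmless self-match is sound.
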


Our proof, i.e., the construction of such a two-layer transformer model, draws inspiration from  \cite{friedman2023learning}, which shows that the two-layer transformer can perform a \textbf{copying-matching} procedure to accomplish the template matching task.  We construct the first layer to perform the \textbf{copying} operation among question and answer tokens to aggregate each question and the corresponding answer together. The second layer is implemented as an induction head \cite{olsson2022context} to perform the \textbf{matching} operation between the token representations (which already aggregate the question and answer together) with the same question (i.e., the query question), and then output the desired answer. The detailed construction is in \cref{sec:cons_icl}.

By combining \cref{prop:lsa_fail} and \cref{prop:lsa_succ}, we can conclude that it requires two attention layers to perfectly perform the reasoning. However, the 2-layer attention-only transformer can do more than just copying and matching. Next, we will show that 2-layer transformers can also accomplish the generalization task through a different mechanism.

\subsection{Two-Layer Transformer Can Generalize}
\label{sec:tmpl_main}

In this part, we shift our focus to the generalization ability of transformers. Specifically, we consider the template matching task, where each template has a distinct label, and sequences that follow from the same template will be assigned by the same label. Our goal is to investigate whether and how transformers can successfully perform this task, i.e., identify the template of the input sequence and predict its label, for all possible sequences. This serves as the necessary condition for the generalization of transformer \citep{boix-adseraWhenCanTransformers2023a}. Similar to the findings in \cref{sec:icl_main}, we also observe that a single-layer transformer fails to accurately learn this generalization process, which is summarized in the following theorem.


\begin{theorem}
\label{prop:tmplt_fail} Let $\mathcal D_{\texttt{TM}}$ be a dataset of the template matching task and $n$ be the sequence length. Then for any transformer with $L = 1$ attention layer, no matter how many heads are applied, there exists at least one data $(\texttt{sub}(\boldsymbol {t},s), y)\in \mathcal D_{\texttt{TM}}$, generated via a template $\boldsymbol {t}$ and a mapping $s$, such that $\texttt{TF}(\texttt{sub}(\boldsymbol {t},s))\neq y$.
\end{theorem}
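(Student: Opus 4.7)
The plan is to mirror the ``preservation of linear dependency'' approach used for \cref{prop:lsa_fail}, but with a dependency tailored to template matching rather than question--answer retrieval. The central structural observation is that the output of a single attention layer, read off at the \texttt{CLS} position, is \emph{additive} over the input token positions. Concretely, I aim to show that for any single-layer transformer $\texttt{TF}$, any number of heads, and any embedding dimension $d'$, the pre-classifier vector at the last column can be written as $\mathbf{h}^{(1)}_{n-1} = \mathbf{c}_0 + \sum_{p=0}^{n-2} G_p(\mathbf{x}_p)$, so after the linear head the final output has the form $\mathbf{o}(\mathbf{X}) = \mathbf{c} + \sum_{p=0}^{n-2} H_p(\mathbf{x}_p)$, where each $H_p$ depends only on the token sitting in position $p$.

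The structural lemma follows directly from the formula in \eqref{eq:transformer_attn}. The attention score linking the \texttt{CLS} column to source $p$ equals $\sigma(\langle \mathbf{q}_p, \mathbf{k}_{\text{cls}} \rangle)$; since the \texttt{CLS} token and its one-hot positional encoding are identical across every input, this scalar is a function of $\mathbf{x}_p$ alone (for the fixed position $p$). The value vector $\mathbf{v}_p = \mathbf{W}_V \mathbf{h}^{(0)}_p$ likewise depends only on $\mathbf{x}_p$. Thus each source position contributes a summand that is independent of the other columns, multi-head averaging and the residual connection both preserve additivity, and composition with the linear classifier keeps the additive form.

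The second step exploits this additivity through a four-element linear dependency. Pick two wildcards $A,B \in \mathcal{W}$ and two distinct tokens $a,b \in \mathcal{X}$, and suppose the dataset contains the templates $\boldsymbol{t}_1 = (A,A)$ with label $y_1$ and $\boldsymbol{t}_2 = (A,B)$ with label $y_2 \neq y_1$, together with the substitutions $s_1: A\mapsto a, B\mapsto b$ and $s_2: A\mapsto b, B\mapsto a$. The resulting four sequences $(a,a), (b,b), (a,b), (b,a)$ satisfy
\begin{equation*}
\mathbf{o}\bigl((a,a)\bigr) + \mathbf{o}\bigl((b,b)\bigr) \;=\; \mathbf{o}\bigl((a,b)\bigr) + \mathbf{o}\bigl((b,a)\bigr),
\end{equation*}
because at every position the multiset of tokens occurring on each side is $\{a,b\}$. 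Perfect classification would force $\mathbf{o}[y_1] > \mathbf{o}[y_2]$ for the first two sequences and the reverse strict inequality for the last two; summing each pair and comparing against the displayed equality yields an immediate contradiction. A standard padding argument (repeating a third wildcard with a fixed substitution) extends the construction to any prescribed sequence length $n$.

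The hard part, relative to the lean surface argument, is the structural lemma: one must justify carefully that the \texttt{CLS} column cannot pick up cross-position interactions even with multiple heads and a nonlinear activation, and that the residual term $\mathbf{h}^{(0)}_{n-1}$ contributes only a constant (independent of $\mathbf{X}$) because the \texttt{CLS} embedding and its positional encoding are fixed. A secondary subtlety is the reading of the hypothesis: we interpret ``let $\mathcal{D}_{\texttt{TM}}$ be a dataset'' as allowing us to choose the templates and substitutions exhibited above, so that the failure point lies in $\mathcal{D}_{\texttt{TM}}$; any richer dataset containing this pair inherits the same obstruction.
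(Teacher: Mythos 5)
Your proof is correct, and the engine driving it is the same one the paper uses: a single ReLU attention layer with any number of heads produces, at the fixed \texttt{CLS} column, an output that is additive over the source positions (your structural lemma is exactly the content of \cref{prop:out_linear} specialized to the last column --- the residual is constant because the \texttt{CLS} embedding and its one-hot position are input-independent, and the absence of softmax normalization means no cross-position coupling), and a linear dependency among outputs is incompatible with the strict argmax inequalities that perfect classification would require. Where you genuinely diverge is in the witness set. The paper proves the dependency at the level of the \emph{entire orbits} of two arbitrary templates under all substitution maps: Lemmas \ref{lem:template_com} and \ref{lem:template_seq_com} count token occurrences to show that $\lambda = n^{(\boldsymbol{t}')}$ copies of one orbit and $\lambda' = n^{(\boldsymbol{t})}$ copies of the other form dependent multisets, and the contradiction is then derived from the summed predictions over each orbit (\cref{lem:template_pre_com}). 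You instead exhibit a minimal four-sequence dependency $\mathbf{o}(a,a)+\mathbf{o}(b,b)=\mathbf{o}(a,b)+\mathbf{o}(b,a)$, which is essentially the paper's own witness for the in-context QA lower bound (\cref{sec:prf_lsa_fail}) transplanted to templates. Your version is more elementary (no counting lemmas) and is a strictly weaker demand on the dataset --- it needs only four specific sequences rather than the full substitution orbit --- but it requires the template set to contain the particular patterns $\alpha\alpha$ and $\alpha\beta$ (up to padding), whereas the paper's orbit argument applies to \emph{any} two distinctly-labeled templates of any length over any alphabet, provided all substitutions appear. Both arguments share the existential reading of ``a dataset'' that you flag, and that reading is consistent with how the paper itself handles \cref{prop:lsa_fail}, so the caveat is not a gap.
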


We follow a similar idea for proving \cref{prop:lsa_fail} to prove the above argument. In particular, we can show that there are two templates such that all possible sequences generated accordingly are linearly dependent. Then if these two templates have different labels, the single-layer transformer fails to correctly classify all sequences generated via these two templates. The detailed proof can be found in \cref{sec:prf_tmplt_fail}. 

This intriguing result suggests that although single-layer transformers possess strong memorization abilities, they struggle with more complex tasks. Moreover, we show that this template matching task can be performed by a two-layer transformer, which is stated in the following theorem.

\begin{theorem}
\label{prop:tmplt_succ}
For any dataset of the template matching task, denoted by $\mathcal D_{TM}$, let $n$ be the sequence/template length and $d$ be the token embedding dimension. Then there exists a transformer $\texttt{TF}$ with $L = 2$ attention layers, $1$ attention heads, and $d' = d+n$  such that for all $(\texttt{sub}(\boldsymbol {t},s), y)\in \mathcal D_{\texttt{TM}}$, it holds that $\texttt{TF}(\texttt{sub}(\boldsymbol {t},s)) = y$.
\end{theorem}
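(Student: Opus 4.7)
The plan is to exhibit an explicit $2$-layer, $1$-head attention-only transformer realizing a \emph{parsing-then-mapping} mechanism that is distinct from the copying-matching used for \cref{prop:lsa_succ}. Layer~1 extracts, at every position, a signature that depends only on the template's equivalence-class structure; layer~2 aggregates these signatures into a single template-specific vector at the \texttt{CLS} position; and the linear output layer reads off the label.

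For the parsing layer, I would choose $\mathbf{W}_Q^{(1)}, \mathbf{W}_K^{(1)}$ to extract the token-embedding block of $\mathbf{H}^{(0)}$ and to realize a bilinear form that is strictly positive iff $x_i = x_j$ and zero otherwise (a rank-$|\mathcal{X}|$ construction on the token block), while $\mathbf{W}_V^{(1)}$ extracts the positional block. After the ReLU, the attention weight from $i$ to $j$ is then the indicator $\mathbf{1}[x_i = x_j]$, and the attention output at position $i$ is $\sigma_i = \sum_{j:\,x_j = x_i}\mathbf{p}_j$. Because different wildcards must map to different real tokens, $x_j = x_i \Leftrightarrow t_j = t_i$, so $\sigma_i$ depends only on the template $\boldsymbol{t}$ and not on the substitution map $s$. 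Via the residual, position $i$'s positional block becomes $\mathbf{p}_i + \sigma_i$, whose $k$-th entry is $2$ if $k=i$, $1$ if $k\neq i$ and $t_k = t_i$, and $0$ otherwise -- so the equivalence class of $i$ is still uniquely identifiable. This fits inside $d' = d+n$ because the signature overwrites (rather than extends) the positional block.

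For the mapping layer, I tune the \texttt{CLS} query so that it attends to each content position $j$ with a prescribed positive weight $\alpha_j$; this is realizable because positional encodings are one-hot, so $\mathbf{W}_Q^{(2)}, \mathbf{W}_K^{(2)}$ can set the pre-ReLU score from \texttt{CLS} to $j$ to be any chosen scalar function of $j$. With $\mathbf{W}_V^{(2)}$ reading the positional block (which now carries the signatures), the layer-2 output at \texttt{CLS} becomes
\[
\mathbf{r}^{\boldsymbol{t}} \;=\; \sum_{j}\alpha_j\bigl(\mathbf{p}_j + \sigma_j^{\boldsymbol{t}}\bigr),
\]
whose $k$-th coordinate equals $\alpha_k + \sum_{j:\,t_j = t_k}\alpha_j$, i.e.\ (up to a known perturbation $\alpha_k$) a subset sum of the $\alpha_j$'s over the equivalence class of $k$. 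Choosing $\{\alpha_j\}$ so that all subset sums are distinct (e.g.\ $\alpha_j = 3^j$, or any generic positive reals) makes the map $\boldsymbol{t}\mapsto \mathbf{r}^{\boldsymbol{t}}$ injective on the finite set $\mathcal{T}$. Since $\{\mathbf{r}^{\boldsymbol{t}}\}_{\boldsymbol{t}\in\mathcal{T}}$ is a finite set of distinct vectors, a linear $\mathbf{W}_O$ can then be constructed so that $\arg\max(\mathbf{W}_O\mathbf{r}^{\boldsymbol{t}}) = f(\boldsymbol{t})$ for every template, completing the construction.

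The main obstacle is establishing the injectivity of $\boldsymbol{t}\mapsto \mathbf{r}^{\boldsymbol{t}}$ under the tight budget $d' = d+n$, which forces the signature to share the positional block with $\mathbf{p}_i$; one must verify that $\mathbf{p}_i + \sigma_i$ still pins down the equivalence class of $i$ and that the subset-sum separation survives the extra $\alpha_k$ contribution from $\mathbf{p}_k$. A secondary technicality is designing $\mathbf{W}_Q^{(1)}, \mathbf{W}_K^{(1)}$ so the pre-ReLU score is positive precisely on token equality for an arbitrary token embedding; this is handled by explicitly building a rank-$|\mathcal{X}|$ bilinear form supported on the token block. Everything else is bookkeeping.
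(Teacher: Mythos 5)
Your construction is essentially the paper's own proof (\cref{sec:cons_tmplt}): layer~1 uses token-equality attention to write the token-invariant signature $\sum_{j:\,x_j=x_i}\mathbf{p}_j$ into the positional block, and layer~2 aggregates with powers-of-$3$ weights at \texttt{CLS} to obtain a template-injective encoding, followed by a linear readout; the paper's single-head second layer uses exactly the same ternary-digit trick. The one imprecision is your final step: ``a finite set of \emph{distinct} vectors admits a linear $\mathbf{W}_O$ with prescribed argmax'' is false in general, since two vectors that are positive scalar multiples of one another receive the same argmax under any linear map; the paper's \cref{lem:vector_mapping} accordingly demands pairwise linear independence, and its construction argues non-parallelism explicitly (the ``diagonal digit is always $2$'' observation forces the scaling factor to be $1$). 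Your vectors do satisfy this stronger property --- e.g.\ the \texttt{CLS} coordinate of $\mathbf{r}^{\boldsymbol{t}}$ is the same constant for every template, which pins the scalar to $1$ and reduces non-parallelism to the injectivity you already establish --- but you should state and verify non-parallelism rather than mere distinctness before invoking the classifier layer.
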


We show that such a two-layer transformer can be constructed using a \textbf{parsing-mapping} process. In particular, the first layer can be designed to parse the sequence into the corresponding template, then the second layer can perform a memorization process that is similar to the sequence classification task investigated in Section \cref{sec:mem_main}. These findings prompt us to reconsider the mechanism of multi-layer transformers, instead of solely relying on memorizing all the data \citep{yunAreTransformersUniversal2020}. The detailed construction can be found in \cref{sec:cons_tmplt}.

\subsection{Three-Layer Transformer Can Perform Contextual Generalization}
\label{sec:cont_gen}

In previous sections, we have shown that a 2-layer transformer is capable of conducting reasoning and generalization tasks. Now we will focus on a more challenging in-context template matching task that requires the model to perform generalization and reasoning simultaneously, i.e., exhibiting the contextual generalization capability. 

First, since the in-context template matching task can degenerate to the standard in-context question-answering task (e.g., using identity mapping from the template alphabet to real-world symbols). Then, we can straightforwardly leverage the result in \cref{prop:lsa_fail} to demonstrate the failure of the single-layer transformer in accomplishing this task. Moreover, note that when tackling the in-context question-answering and template matching tasks, the transformer is constructed to perform two-step copy-matching and parse-mapping procedures, respectively. Therefore, regarding the in-context template matching task, we can design a transformer to perform a  three-step \textbf{parsing-copying-matching} procedure, which is constructed using three attention layers. We state this result in the following theorem.

\begin{theorem}\label{prop:context_gen_succ}
For any dataset of the in-context template matching task, denoted by $\mathcal D_{\texttt{IC-TM}}$, let $l$ be the template length, $k$ be the number of question-answer pairs (then the sequence length is $n=k(l+2)+l+1$), and $d$ be the dimension of the token embedding. There exists a transformer $\texttt{TF}$ with $L = 3$ attention layers, $2l$ attention heads, and $d' = d+n+l+2$ such that for all $\big(\mathbf{E}_{\texttt{IC-TM}}^{(\pi,\pi')}(c),\mathbf a_{k_c'}\big)\in \mathcal D_{\texttt{IC-TM}}$, it holds that $\texttt{TF}(\mathbf{E}_{\texttt{IC-TM}}^{(\pi,\pi')}(c))= \mathbf a_{k_c'}$.
\end{theorem}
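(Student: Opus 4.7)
The plan is to construct a three-layer transformer that realizes the ``parsing-copying-matching'' scheme described in the main text, stitching together the mechanisms built in the proofs of \cref{prop:lsa_succ} and \cref{prop:tmplt_succ}. I would partition the residual stream of width $d' = d + n + l + 2$ into blocks that hold, respectively, the original $d$-dimensional token embedding, the $n$-dimensional one-hot positional encoding, and $l+2$ auxiliary coordinates used to store the parsed template representation and intermediate copying signals. The input sequence has a predictable block structure: $k$ context blocks of length $l+2$ (a template of length $l$, a response sign, an answer), followed by the query template and the final response sign, so the positional encoding alone suffices for each attention head to identify which tokens play which role.

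The first layer is devoted to \textbf{parsing} and uses the full budget of $2l$ heads. The goal is to overwrite the template slot at every template position so that its resulting representation depends only on the template's equivalence pattern on positions, not on which substitution map $s$ was applied. Within each block, for each relative offset $j \in \{0, \ldots, l-1\}$ I would use one head whose query/key are restricted via the positional encoding to compare a template token with the token at offset $j$ of the same block, together with a second head producing the complementary ``distinct-from'' signal; together these two heads record, for every pair of positions in the template, whether the underlying wildcards agree, in a form that is invariant across substitutions and injective on $\mathcal{T}$, mirroring the parsing layer in the proof of \cref{prop:tmplt_succ}. At response-sign and answer-token positions the queries are routed to zero so that those entries of the residual stream are preserved unchanged.

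The second layer uses a single head to perform the \textbf{copying} step from \cref{prop:lsa_succ}: each answer token $\mathbf{a}_{\pi'_i}$ is pushed into the residual stream of its paired context template, so that after this layer each context block carries both its parsed template and its paired answer in a dedicated slot. The third layer uses a single head to perform the \textbf{matching} step, again as in \cref{prop:lsa_succ}: the final response sign's query, now derived from the parsed query template $\boldsymbol{t}_{\pi_c}$, aligns via inner product with the key of the unique context block whose parsed template matches, and the associated answer vector is written into the last column of $\mathbf{H}^{(3)}$. A classifier matrix $\mathbf{W}_O$ then decodes $\mathbf{a}_{\pi'_c}$.

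The main obstacle I anticipate is the first layer: the parsing map must be simultaneously (i) invariant across substitution maps, (ii) injective on the template set $\mathcal{T}$, and (iii) surgical enough to leave answer and response-sign positions untouched, all while acting in parallel on $k+1$ template instances located at different absolute positions in the sequence. This is exactly what the $2l$-head budget and the enlarged embedding dimension $d' = d + n + l + 2$ are sized to accommodate. Once parsing is in place, the arguments of \cref{prop:lsa_succ} transfer essentially verbatim to layers two and three, applied to the parsed template representations in place of raw question tokens, so the composition argument reduces to bookkeeping on which coordinates of the residual stream each layer reads and writes.
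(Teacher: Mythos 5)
Your overall parsing--copying--matching skeleton matches the paper's intent, but there is a genuine gap in the step you dismiss as transferring ``essentially verbatim'': the final matching layer. The induction-head matching in the proof of \cref{prop:lsa_succ} relies crucially on the questions being one-hot tokens, so that $\mathbf{q}_i^\top\mathbf{q}_j$ is an exact equality indicator ($1$ if $i=j$, $0$ otherwise) and the ReLU attention from the final response sign lands \emph{only} on the block containing the same question. After your first layer, the query and context questions are replaced by parsed template representations (vectors recording, for each pair of positions, whether the wildcards agree). These representations are not mutually orthogonal: two distinct templates generally have a large positive inner product (they share the diagonal ``self-agreement'' entries at minimum), so $\sigma(\langle \text{template}_i, \text{template}_c\rangle)$ is strictly positive for non-matching blocks as well. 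The matching head would therefore pull in the answers of non-matching context blocks with non-negligible weight, and nothing in your construction guarantees the correct answer's coefficient dominates. This is exactly the difficulty the paper's construction in \cref{sec:cons_cont_gen} is built around, and it is why the $2l$-head budget sits in the \emph{third} layer there, not the first.

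Concretely, the paper parses with a \emph{single} head in layer 1 (token-similarity attention constrained to each block via \cref{lem:con_attn}, accumulating positional encodings into an $(l+2)$-dimensional template slot); in layer 2 it simultaneously copies each answer into its block and \emph{subtracts} the query's template representation from every context block's template slot, so that a matching block's slot becomes exactly zero while non-matching blocks retain some nonzero coordinate (possibly of either sign); and in layer 3 it uses $2l$ heads --- one pair per template coordinate, detecting positive and negative deviations respectively --- to subtract the paired answer from the final representation whenever any coordinate is nonzero. The output is $\sum_i\mathbf{a}_i-\sum_{i\neq r}\gamma_i\mathbf{a}_i$ with $\gamma_i\ge 1$, whose argmax is the correct answer. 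To repair your version you would either need to adopt this elimination scheme, or insert an additional mechanism (e.g., re-encoding parsed templates as orthogonal vectors, which is essentially another memorization layer) so that the inner-product matching of \cref{prop:lsa_succ} genuinely applies; as written, layer 3 does not go through.
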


We remark that \cref{prop:context_gen_succ} does not imply that the in-context template matching task cannot be accomplished by two-layer transformers. However, in our numerical experiments (see \cref{fig:loss_acc_1}), we find that the two-layer transformer struggles with this task and can even not perform well during the training. Therefore, we tend to believe that a three-layer transformer may be the shallowest one to perform contextual generalization, while the rigorous proof for the failure of two-layer models is left for future study.

\section{Experiments}
\label{sec:exp}
\begin{figure*}
\vskip -.1in
\centering
\begin{subfigure}{0.23\textwidth}
  \centering
  \includegraphics[width=0.845\textwidth]{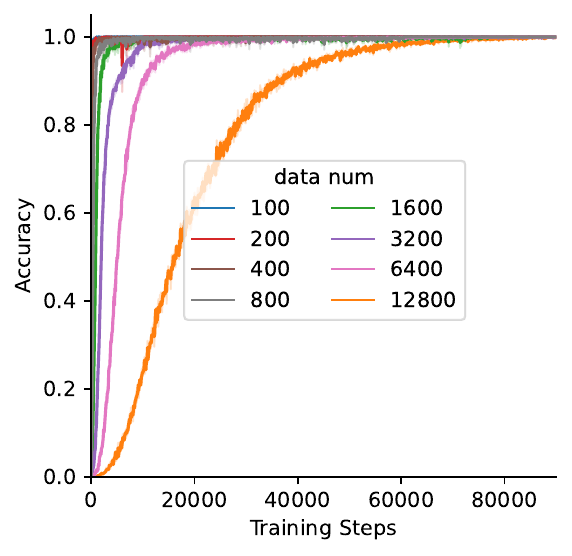}
\end{subfigure}
\begin{subfigure}{0.23\textwidth}
  \centering
  \includegraphics[width=0.9\textwidth]{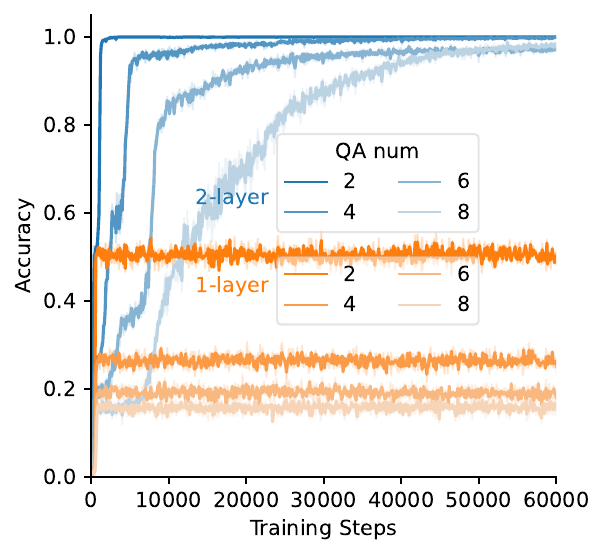}
\end{subfigure}
\begin{subfigure}{0.23\textwidth}
  \centering
  \includegraphics[width=0.845\textwidth]{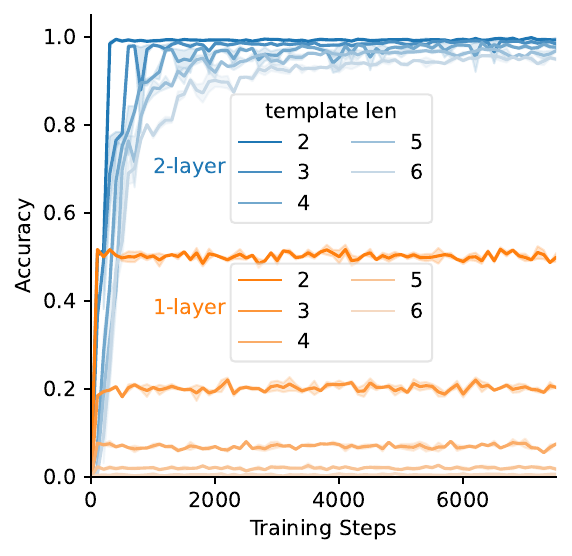}
\end{subfigure}
\begin{subfigure}{0.23\textwidth}
  \centering
  \includegraphics[width=0.9\textwidth]{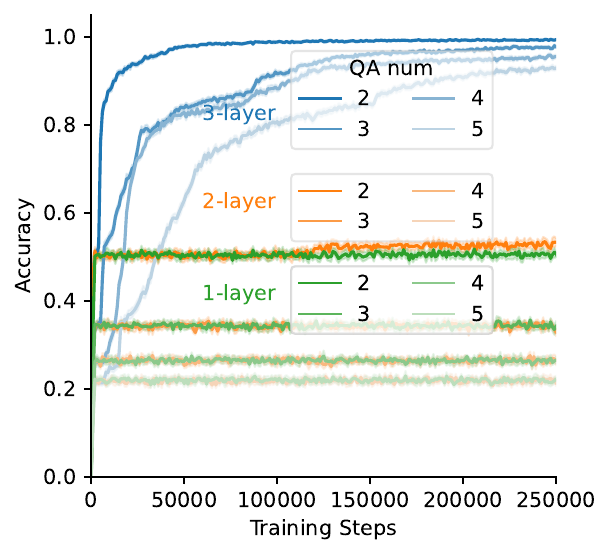}
\end{subfigure}
\begin{subfigure}{0.23\textwidth}
  \centering
  \includegraphics[width=0.845\textwidth]{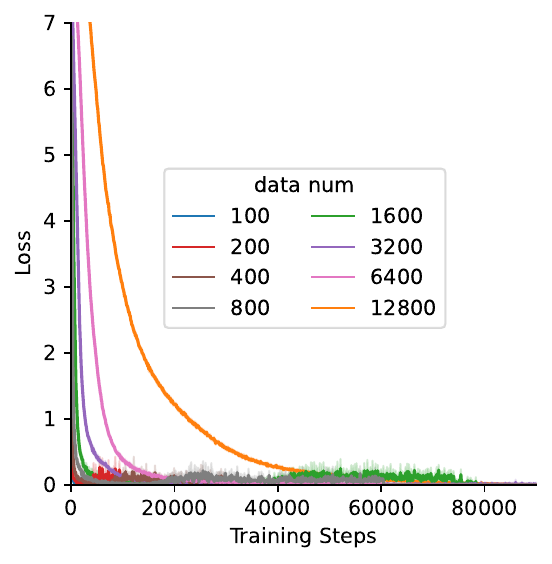}
\end{subfigure}
\begin{subfigure}{0.23\textwidth}
  \centering
  \includegraphics[width=0.9\textwidth]{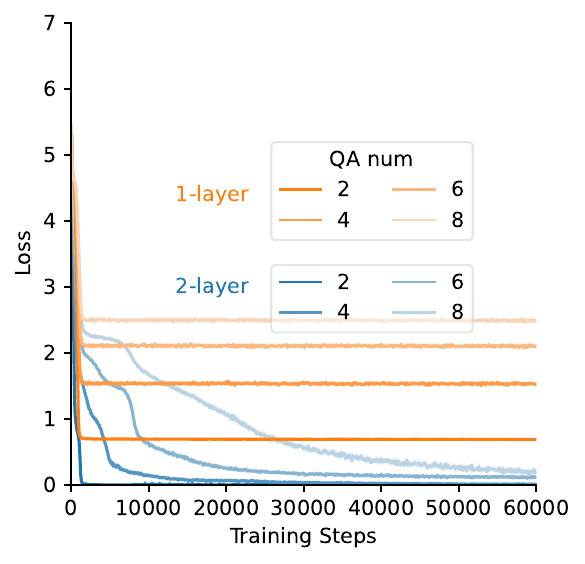}
\end{subfigure}
\begin{subfigure}{0.23\textwidth}
  \centering
  \includegraphics[width=0.845\textwidth]{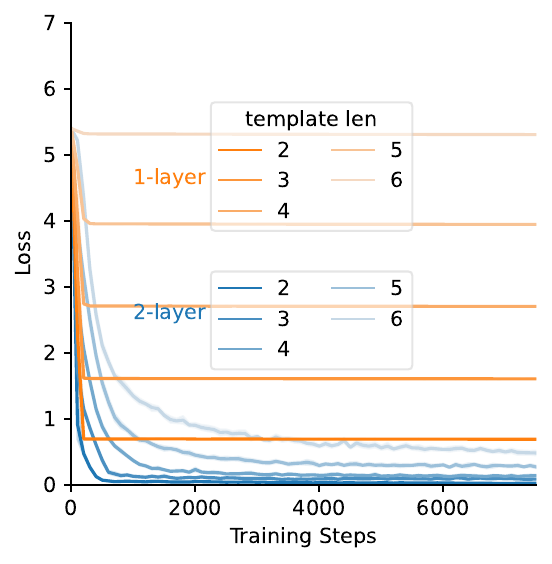}
\end{subfigure}
\begin{subfigure}{0.23\textwidth}
  \centering
  \includegraphics[width=0.9\textwidth]{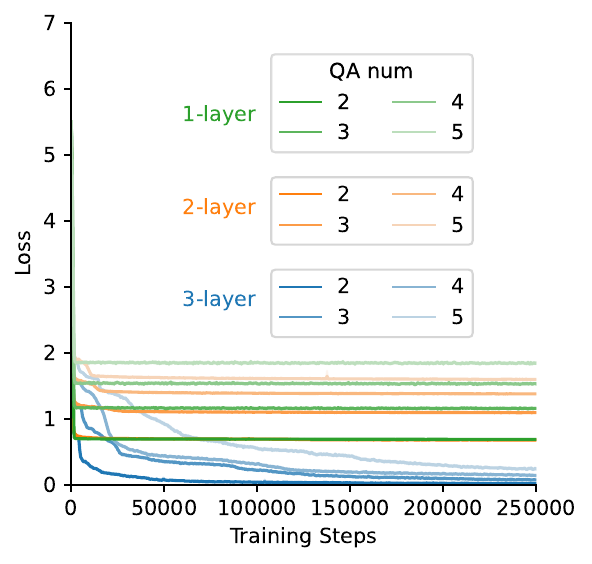}
\end{subfigure}
\vskip -.1in

\caption{Performance of different layers of transformers on memorization, reasoning, generalization, and contextual generalization tasks. \textit{Far left column}: A single-layer transformer can memorize sequences with distinct labels. \textit{Center left column}: A single-layer transformer struggles with reasoning tasks, while a two-layer transformer can learn reasoning with enough training steps. \textit{Center right column}: A single-layer transformer struggles with generalizing on template tasks, while a two-layer transformer can quickly grasp the method for generalization. \textit{Far right column}: When it comes to more complex contextual generalization tasks, a 1/2-layer transformer fails, but a 3-layer transformer can perform well on such tasks.}
\label{fig:loss_acc_1}
\vskip -.1in

\end{figure*}
In this section, we verify the main results presented in \cref{sec:main_results} through synthetic datasets. We examine the accuracy and loss dynamics for the four tasks across different layers and heads of transformers. Additionally, we study the reasoning and generalization mechanisms of transformers by analyzing attention maps and comparing them with our constructed transformer. The detailed experimental setup is presented in \cref{sec:s_exp}.

\subsection{The Impact of Attention Layers on Different Tasks}

We begin by studying the impact of transformer depth on these tasks. The results are shown in \cref{fig:loss_acc_1}. We observe that a single-layer transformer performs well on the memorization task but struggles with tasks related to generalization and reasoning. This validates  Theorems \cref{prop:mem_succ}, \cref{prop:lsa_fail},  \cref{prop:lsa_succ}, \cref{prop:tmplt_fail}, and \cref{prop:tmplt_succ}.
Single-layer transformer performs like a random guess for generalization and reasoning tasks.  

For the contextual generalization task, we interestingly find the same random guessing degeneration in both single-layer and two-layer transformers, indicating that a 2-layer transformer might not be able to handle such a complex task that requires both generalization and reasoning. Instead, a $3$ layer transformer performs perfectly on this task. This validates  \cref{prop:context_gen_succ}. Through this task, we can observe the emergence of more complex reasoning and generalization when we extend the layer of the transformer from 2 to 3. In \cref{sec:s_exp}, we also study the performance of a 4-layer transformer, which shows that compared to a 3-layer transformer, a 4-layer transformer can perform contextual generalization more quickly. This emphasizes the effectiveness of using deeper model to perform harder tasks.

\subsection{Algorithms Behind Trained Transformers}

To further understand how transformers achieve generalization and reasoning, we analyze the attention maps for some typical examples. The results show that trained transformers exhibit similar mechanisms to our constructions.



In the reasoning task, we observe operations in \cref{fig:attn_icl} that are similar to the constructed copying-matching mechanism. In particular, we can identify two ``copying'' operations (the corresponding value in the attention map is relatively high) in the first layer: in \cref{fig:attn_icl} (\textit{top row}), tokens \texttt{"1"} and \texttt{"A"} are copied to the $4$-th and last positions respectively. In the second layer, we can then identify a ``matching'' operation: token \texttt{"1"}, which now appears in the $3$-th position, strongly correlates with the token \texttt{"A"}, which now appears in the last position. This further leads to the correct answer \texttt{"1"}. Similar observation can be found in the second example: token \texttt{"2"} and \texttt{"B"} are copied to the first and last positions respectively in the first layer; then a matching between the token \texttt{"B"} in the last position and the value \texttt{"2"} in the first position occurs, leading to the correct answer.

In the template matching task, we also find evidence of our constructed parsing-mapping mechanism in \cref{fig:attn_tmplt}. Specifically, a "parsing" operation that checks the similarity of tokens in other positions can be observed in the first layer: for input sequence \texttt{"AAB="}, the repeat token \texttt{"A"} in positions $0$ and $1$ share attention with each other, for the input \texttt{"ABB="}, the repeat token in positions $1$ and $2$ share the position information. In this way, the model parses the input sequence into a template representation, $\texttt{AAB} \rightarrow \alpha \alpha \beta$ and $\texttt{ABB} \rightarrow \alpha \beta \beta$, which can be mapped to different templates by utilizing the memorization ability of the transformer.




\begin{figure}[t]
\begin{minipage}[t]{0.47\textwidth}
\centering
\includegraphics[width=\columnwidth]{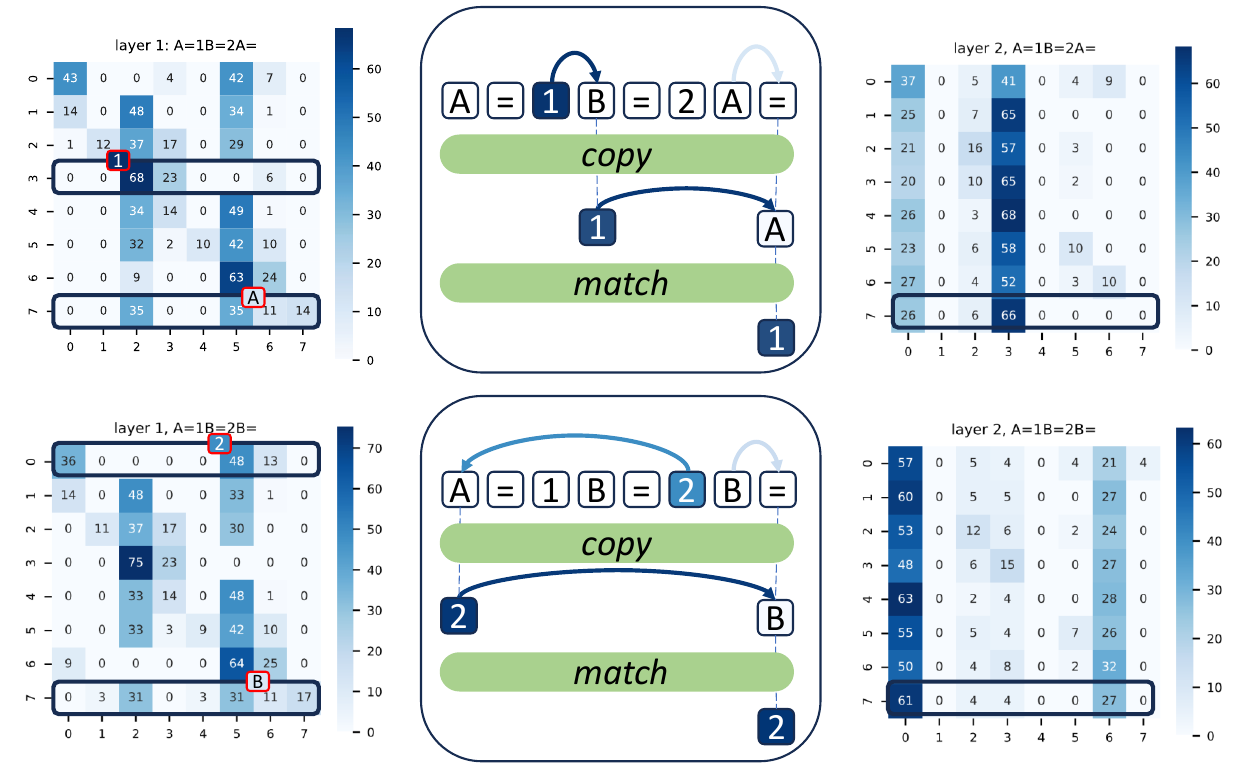}
\caption{Attention maps for a trained two-layer transformer in the reasoning sequences \texttt{"A=1B=2A="} (\textit{top row}) and \texttt{"A=1B=2A="} (\textit{bottom row}). }
\label{fig:attn_icl}
\end{minipage}
\hfill
\begin{minipage}[t]{0.47\textwidth}
\centering
\includegraphics[width=\columnwidth]{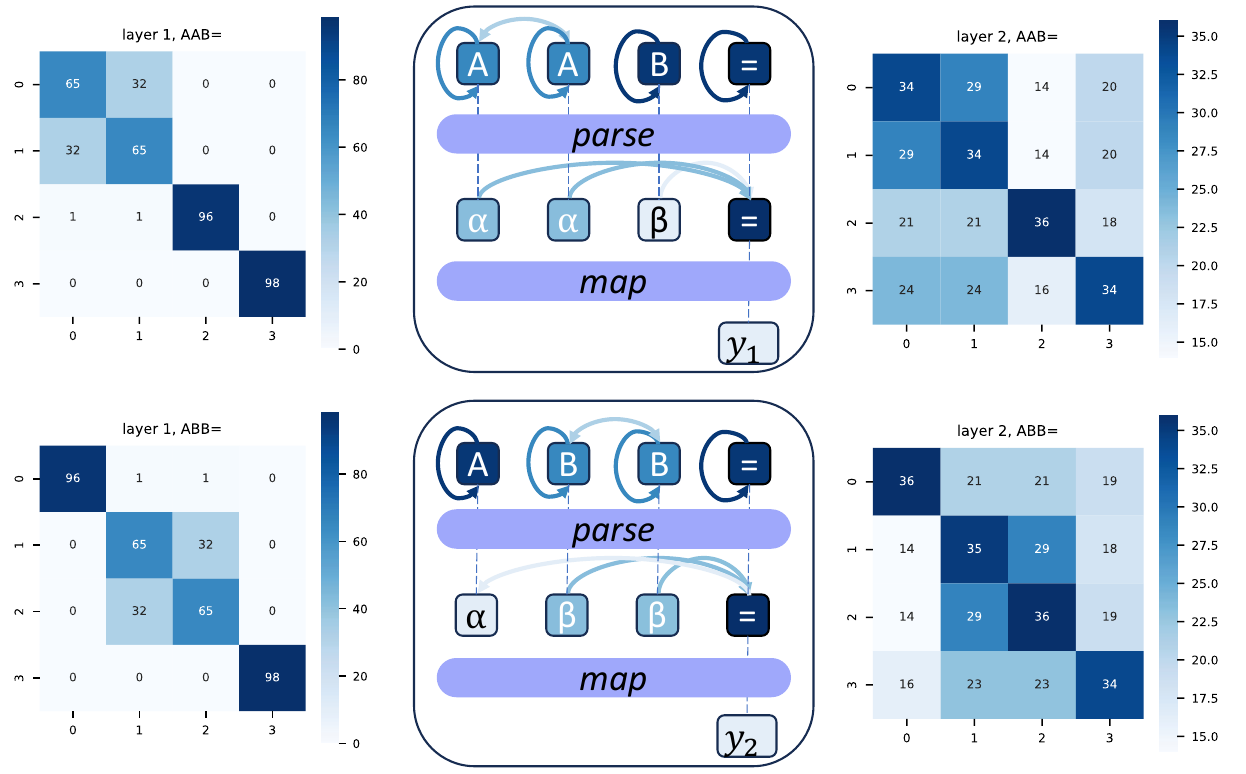}
\caption{Attention maps for a trained two-layer transformer in the template sequences \texttt{"AAB="} (\textit{top row}) and \texttt{"ABB="} (\textit{bottom row}).}
\label{fig:attn_tmplt}
\end{minipage}
\end{figure}

\section{Discussion}
In this study, we explore the capabilities of transformers with varying attention layers in performing various tasks, including memorization, reasoning, generalization, and contextual generalization. Our investigation reveals the limitations of single-layer transformers when dealing with complex tasks, and highlights the importance of using multiple attention layers to achieve optimal performance in reasoning, generalization, and contextual generalization tasks.  Our findings shed light on the theoretical properties of transformer models, offering insights into their design and optimization for diverse tasks. Besides, our framework can be further expanded to more challenging tasks. For example, we can expand our in-context QA as "nested in-context QA" task, where the model must perform a chain-of-thought process to arrive at the final answer. This could involve a sequence like ``$\texttt{a\,\textrightarrow\,b\,b\,\textrightarrow\,c\,c\,\textrightarrow\,d\,a\textrightarrow\,{\underline{d}}}$ '',  where we can design a  transformer with 6 layers that performs "copy-matching" 3 times to solve this problem effectively. We believe that expanding our four tasks is worth for further investigation, and our framework can offer valuable insights.
Moreover, our analysis, which determines the threshold for attention layers in solving complex tasks, can even offer explanations for the scalability of transformers \cite{kaplanScalingLawsNeural2020a} and their emergent abilities \cite{weiEmergentAbilitiesLarge2022}.

\bibliography{iclr2024_conference}

\begin{thebibliography}{45}
\providecommand{\natexlab}[1]{#1}
\providecommand{\url}[1]{\texttt{#1}}
\expandafter\ifx\csname urlstyle\endcsname\relax
  \providecommand{\doi}[1]{doi: #1}\else
  \providecommand{\doi}{doi: \begingroup \urlstyle{rm}\Url}\fi

\bibitem[Bai et~al.(2023)Bai, Chen, Wang, Xiong, and Mei]{baiTransformersStatisticiansProvable2023}
Yu~Bai, Fan Chen, Huan Wang, Caiming Xiong, and Song Mei.
\newblock Transformers as {{Statisticians}}: {{Provable In-Context Learning}} with {{In-Context Algorithm Selection}}, July 2023.

\bibitem[Bills et~al.(2023)Bills, Cammarata, Mossing, Tillman, Gao, Goh, Sutskever, Leike, Wu, and Saunders]{bills2023language}
Steven Bills, Nick Cammarata, Dan Mossing, Henk Tillman, Leo Gao, Gabriel Goh, Ilya Sutskever, Jan Leike, Jeff Wu, and William Saunders.
\newblock Language models can explain neurons in language models.
\newblock \emph{URL https://openaipublic. blob. core. windows. net/neuron-explainer/paper/index. html.(Date accessed: 14.05. 2023)}, 2023.

\bibitem[{Boix-Adsera} et~al.(2023){Boix-Adsera}, Saremi, Abbe, Bengio, Littwin, and Susskind]{boix-adseraWhenCanTransformers2023a}
Enric {Boix-Adsera}, Omid Saremi, Emmanuel Abbe, Samy Bengio, Etai Littwin, and Joshua Susskind.
\newblock When can transformers reason with abstract symbols?, October 2023.

\bibitem[Bommasani et~al.(2021)Bommasani, Hudson, Adeli, Altman, Arora, von Arx, Bernstein, Bohg, Bosselut, Brunskill, et~al.]{bommasani2021opportunities}
Rishi Bommasani, Drew~A Hudson, Ehsan Adeli, Russ Altman, Simran Arora, Sydney von Arx, Michael~S Bernstein, Jeannette Bohg, Antoine Bosselut, Emma Brunskill, et~al.
\newblock On the opportunities and risks of foundation models.
\newblock \emph{arXiv preprint arXiv:2108.07258}, 2021.

\bibitem[Bottou et~al.(2018)Bottou, Curtis, and Nocedal]{bottou2018optimization}
L{\'e}on Bottou, Frank~E Curtis, and Jorge Nocedal.
\newblock Optimization methods for large-scale machine learning.
\newblock \emph{SIAM review}, 60\penalty0 (2):\penalty0 223--311, 2018.

\bibitem[Brown et~al.(2020{\natexlab{a}})Brown, Mann, Ryder, Subbiah, Kaplan, Dhariwal, Neelakantan, Shyam, Sastry, Askell, et~al.]{brown2020language}
Tom Brown, Benjamin Mann, Nick Ryder, Melanie Subbiah, Jared~D Kaplan, Prafulla Dhariwal, Arvind Neelakantan, Pranav Shyam, Girish Sastry, Amanda Askell, et~al.
\newblock Language models are few-shot learners.
\newblock \emph{Advances in neural information processing systems}, 33:\penalty0 1877--1901, 2020{\natexlab{a}}.

\bibitem[Brown et~al.(2020{\natexlab{b}})Brown, Mann, Ryder, Subbiah, Kaplan, Dhariwal, Neelakantan, Shyam, Sastry, Askell, Agarwal, {Herbert-Voss}, Krueger, Henighan, Child, Ramesh, Ziegler, Wu, Winter, Hesse, Chen, Sigler, Litwin, Gray, Chess, Clark, Berner, McCandlish, Radford, Sutskever, and Amodei]{brownLanguageModelsAre2020}
Tom~B. Brown, Benjamin Mann, Nick Ryder, Melanie Subbiah, Jared Kaplan, Prafulla Dhariwal, Arvind Neelakantan, Pranav Shyam, Girish Sastry, Amanda Askell, Sandhini Agarwal, Ariel {Herbert-Voss}, Gretchen Krueger, Tom Henighan, Rewon Child, Aditya Ramesh, Daniel~M. Ziegler, Jeffrey Wu, Clemens Winter, Christopher Hesse, Mark Chen, Eric Sigler, Mateusz Litwin, Scott Gray, Benjamin Chess, Jack Clark, Christopher Berner, Sam McCandlish, Alec Radford, Ilya Sutskever, and Dario Amodei.
\newblock Language {{Models}} are {{Few-Shot Learners}}, July 2020{\natexlab{b}}.

\bibitem[Chen et~al.(2021)Chen, Lu, Rajeswaran, Lee, Grover, Laskin, Abbeel, Srinivas, and Mordatch]{chen2021decision}
Lili Chen, Kevin Lu, Aravind Rajeswaran, Kimin Lee, Aditya Grover, Misha Laskin, Pieter Abbeel, Aravind Srinivas, and Igor Mordatch.
\newblock Decision transformer: Reinforcement learning via sequence modeling.
\newblock \emph{Advances in neural information processing systems}, 34:\penalty0 15084--15097, 2021.

\bibitem[Clark et~al.(2019)Clark, Khandelwal, Levy, and Manning]{clark2019does}
Kevin Clark, Urvashi Khandelwal, Omer Levy, and Christopher~D Manning.
\newblock What does bert look at? an analysis of bert's attention.
\newblock \emph{arXiv preprint arXiv:1906.04341}, 2019.

\bibitem[Devlin et~al.(2018)Devlin, Chang, Lee, and Toutanova]{devlin2018bert}
Jacob Devlin, Ming-Wei Chang, Kenton Lee, and Kristina Toutanova.
\newblock Bert: Pre-training of deep bidirectional transformers for language understanding.
\newblock \emph{arXiv preprint arXiv:1810.04805}, 2018.

\bibitem[Dosovitskiy et~al.(2020)Dosovitskiy, Beyer, Kolesnikov, Weissenborn, Zhai, Unterthiner, Dehghani, Minderer, Heigold, Gelly, et~al.]{dosovitskiy2020image}
Alexey Dosovitskiy, Lucas Beyer, Alexander Kolesnikov, Dirk Weissenborn, Xiaohua Zhai, Thomas Unterthiner, Mostafa Dehghani, Matthias Minderer, Georg Heigold, Sylvain Gelly, et~al.
\newblock An image is worth 16x16 words: Transformers for image recognition at scale.
\newblock In \emph{International Conference on Learning Representations}, 2020.

\bibitem[Dosovitskiy et~al.(2021)Dosovitskiy, Beyer, Kolesnikov, Weissenborn, Zhai, Unterthiner, Dehghani, Minderer, Heigold, Gelly, Uszkoreit, and Houlsby]{dosovitskiyImageWorth16x162021}
Alexey Dosovitskiy, Lucas Beyer, Alexander Kolesnikov, Dirk Weissenborn, Xiaohua Zhai, Thomas Unterthiner, Mostafa Dehghani, Matthias Minderer, Georg Heigold, Sylvain Gelly, Jakob Uszkoreit, and Neil Houlsby.
\newblock An {{Image}} is {{Worth}} 16x16 {{Words}}: {{Transformers}} for {{Image Recognition}} at {{Scale}}, June 2021.

\bibitem[Elhage et~al.(2021)Elhage, Nanda, Olsson, Henighan, Joseph, Mann, Askell, Bai, Chen, Conerly, DasSarma, Drain, Ganguli, Hatfield-Dodds, Hernandez, Jones, Kernion, Lovitt, Ndousse, Amodei, Brown, Clark, Kaplan, McCandlish, and Olah]{elhage2021mathematical}
Nelson Elhage, Neel Nanda, Catherine Olsson, Tom Henighan, Nicholas Joseph, Ben Mann, Amanda Askell, Yuntao Bai, Anna Chen, Tom Conerly, Nova DasSarma, Dawn Drain, Deep Ganguli, Zac Hatfield-Dodds, Danny Hernandez, Andy Jones, Jackson Kernion, Liane Lovitt, Kamal Ndousse, Dario Amodei, Tom Brown, Jack Clark, Jared Kaplan, Sam McCandlish, and Chris Olah.
\newblock A mathematical framework for transformer circuits.
\newblock \emph{Transformer Circuits Thread}, 2021.
\newblock https://transformer-circuits.pub/2021/framework/index.html.

\bibitem[Feng et~al.(2024)Feng, Zhang, Gu, Ye, He, and Wang]{feng2024towards}
Guhao Feng, Bohang Zhang, Yuntian Gu, Haotian Ye, Di~He, and Liwei Wang.
\newblock Towards revealing the mystery behind chain of thought: a theoretical perspective.
\newblock \emph{Advances in Neural Information Processing Systems}, 36, 2024.

\bibitem[Friedman et~al.(2023)Friedman, Wettig, and Chen]{friedman2023learning}
Dan Friedman, Alexander Wettig, and Danqi Chen.
\newblock Learning {T}ransformer {P}rograms.
\newblock \emph{arXiv preprint}, 2023.

\bibitem[Fu et~al.(2023)Fu, Guo, Bai, and Mei]{fu2023can}
Hengyu Fu, Tianyu Guo, Yu~Bai, and Song Mei.
\newblock What can a single attention layer learn? a study through the random features lens.
\newblock \emph{arXiv preprint arXiv:2307.11353}, 2023.

\bibitem[Garg et~al.(2022)Garg, Tsipras, Liang, and Valiant]{garg2022can}
Shivam Garg, Dimitris Tsipras, Percy~S Liang, and Gregory Valiant.
\newblock What can transformers learn in-context? a case study of simple function classes.
\newblock \emph{Advances in Neural Information Processing Systems}, 35:\penalty0 30583--30598, 2022.

\bibitem[Garg et~al.(2023)Garg, Tsipras, Liang, and Valiant]{gargWhatCanTransformers2023}
Shivam Garg, Dimitris Tsipras, Percy Liang, and Gregory Valiant.
\newblock What {{Can Transformers Learn In-Context}}? {{A Case Study}} of {{Simple Function Classes}}, August 2023.

\bibitem[Giannou et~al.(2023)Giannou, Rajput, Sohn, Lee, Lee, and Papailiopoulos]{pmlr-v202-giannou23a}
Angeliki Giannou, Shashank Rajput, Jy-Yong Sohn, Kangwook Lee, Jason~D. Lee, and Dimitris Papailiopoulos.
\newblock Looped transformers as programmable computers.
\newblock In Andreas Krause, Emma Brunskill, Kyunghyun Cho, Barbara Engelhardt, Sivan Sabato, and Jonathan Scarlett, editors, \emph{Proceedings of the 40th International Conference on Machine Learning}, volume 202 of \emph{Proceedings of Machine Learning Research}, pages 11398--11442. PMLR, 23--29 Jul 2023.
\newblock URL \url{https://proceedings.mlr.press/v202/giannou23a.html}.

\bibitem[Guo et~al.(2023)Guo, Hu, Mei, Wang, Xiong, Savarese, and Bai]{guoHowTransformersLearn2023}
Tianyu Guo, Wei Hu, Song Mei, Huan Wang, Caiming Xiong, Silvio Savarese, and Yu~Bai.
\newblock How {{Do Transformers Learn In-Context Beyond Simple Functions}}? {{A Case Study}} on {{Learning}} with {{Representations}}, October 2023.

\bibitem[Kajitsuka and Sato(2023)]{kajitsukaAreTransformersOne2023}
Tokio Kajitsuka and Issei Sato.
\newblock Are {{Transformers}} with {{One Layer Self-Attention Using Low-Rank Weight Matrices Universal Approximators}}?, July 2023.

\bibitem[Kaplan et~al.(2020)Kaplan, McCandlish, Henighan, Brown, Chess, Child, Gray, Radford, Wu, and Amodei]{kaplanScalingLawsNeural2020a}
Jared Kaplan, Sam McCandlish, Tom Henighan, Tom~B. Brown, Benjamin Chess, Rewon Child, Scott Gray, Alec Radford, Jeffrey Wu, and Dario Amodei.
\newblock Scaling {{Laws}} for {{Neural Language Models}}, January 2020.

\bibitem[Li et~al.(2024)Li, Liu, Zhou, and Ma]{li2024chain}
Zhiyuan Li, Hong Liu, Denny Zhou, and Tengyu Ma.
\newblock Chain of thought empowers transformers to solve inherently serial problems.
\newblock \emph{arXiv preprint arXiv:2402.12875}, 2024.

\bibitem[Lindner et~al.(2023)Lindner, Kramár, Rahtz, McGrath, and Mikulik]{lindner2023tracr}
David Lindner, János Kramár, Matthew Rahtz, Thomas McGrath, and Vladimir Mikulik.
\newblock Tracr: Compiled transformers as a laboratory for interpretability.
\newblock \emph{arXiv preprint arXiv:2301.05062}, 2023.

\bibitem[Liu et~al.(2022)Liu, Ash, Goel, Krishnamurthy, and Zhang]{liu2022transformers}
Bingbin Liu, Jordan~T Ash, Surbhi Goel, Akshay Krishnamurthy, and Cyril Zhang.
\newblock Transformers learn shortcuts to automata.
\newblock \emph{arXiv preprint arXiv:2210.10749}, 2022.

\bibitem[Liu et~al.(2023)Liu, Lin, Hewitt, Paranjape, Bevilacqua, Petroni, and Liang]{liu2023lost}
Nelson~F Liu, Kevin Lin, John Hewitt, Ashwin Paranjape, Michele Bevilacqua, Fabio Petroni, and Percy Liang.
\newblock Lost in the middle: How language models use long contexts.
\newblock \emph{arXiv preprint arXiv:2307.03172}, 2023.

\bibitem[Mahdavi et~al.(2023)Mahdavi, Liao, and Thrampoulidis]{mahdaviMemorizationCapacityMultiHead2023}
Sadegh Mahdavi, Renjie Liao, and Christos Thrampoulidis.
\newblock Memorization {{Capacity}} of {{Multi-Head Attention}} in {{Transformers}}, October 2023.

\bibitem[Merrill and Sabharwal(2023)]{merrill2023expresssive}
William Merrill and Ashish Sabharwal.
\newblock The expresssive power of transformers with chain of thought.
\newblock \emph{arXiv preprint arXiv:2310.07923}, 2023.

\bibitem[Olsson et~al.(2022)Olsson, Elhage, Nanda, Joseph, DasSarma, Henighan, Mann, Askell, Bai, Chen, et~al.]{olsson2022context}
Catherine Olsson, Nelson Elhage, Neel Nanda, Nicholas Joseph, Nova DasSarma, Tom Henighan, Ben Mann, Amanda Askell, Yuntao Bai, Anna Chen, et~al.
\newblock In-context learning and induction heads.
\newblock \emph{arXiv preprint arXiv:2209.11895}, 2022.

\bibitem[Ouyang et~al.(2022)Ouyang, Wu, Jiang, Almeida, Wainwright, Mishkin, Zhang, Agarwal, Slama, Ray, et~al.]{ouyang2022training}
Long Ouyang, Jeffrey Wu, Xu~Jiang, Diogo Almeida, Carroll Wainwright, Pamela Mishkin, Chong Zhang, Sandhini Agarwal, Katarina Slama, Alex Ray, et~al.
\newblock Training language models to follow instructions with human feedback.
\newblock \emph{Advances in Neural Information Processing Systems}, 35:\penalty0 27730--27744, 2022.

\bibitem[Prabhu et~al.(2022)Prabhu, Yenamandra, Singh, and Hoffman]{prabhu2022adapting}
Viraj Prabhu, Sriram Yenamandra, Aaditya Singh, and Judy Hoffman.
\newblock Adapting self-supervised vision transformers by probing attention-conditioned masking consistency.
\newblock In \emph{Neural Information Processing Systems (NeurIPS)}, 2022.

\bibitem[Takakura and Suzuki(2023)]{takakura2023approximation}
Shokichi Takakura and Taiji Suzuki.
\newblock Approximation and estimation ability of transformers for sequence-to-sequence functions with infinite dimensional input.
\newblock In \emph{International Conference on Machine Learning}, pages 33416--33447. PMLR, 2023.

\bibitem[Taylor et~al.(2022)Taylor, Kardas, Cucurull, Scialom, Hartshorn, Saravia, Poulton, Kerkez, and Stojnic]{taylorGalacticaLargeLanguage2022a}
Ross Taylor, Marcin Kardas, Guillem Cucurull, Thomas Scialom, Anthony Hartshorn, Elvis Saravia, Andrew Poulton, Viktor Kerkez, and Robert Stojnic.
\newblock Galactica: {{A Large Language Model}} for {{Science}}, November 2022.

\bibitem[Touvron et~al.(2023)Touvron, Martin, Stone, Albert, Almahairi, Babaei, Bashlykov, Batra, Bhargava, Bhosale, et~al.]{touvron2023llama}
Hugo Touvron, Louis Martin, Kevin Stone, Peter Albert, Amjad Almahairi, Yasmine Babaei, Nikolay Bashlykov, Soumya Batra, Prajjwal Bhargava, Shruti Bhosale, et~al.
\newblock Llama 2: Open foundation and fine-tuned chat models.
\newblock \emph{arXiv preprint arXiv:2307.09288}, 2023.

\bibitem[Vaswani et~al.(2017)Vaswani, Shazeer, Parmar, Uszkoreit, Jones, Gomez, Kaiser, and Polosukhin]{vaswani2017attention}
Ashish Vaswani, Noam Shazeer, Niki Parmar, Jakob Uszkoreit, Llion Jones, Aidan~N Gomez, {\L}ukasz Kaiser, and Illia Polosukhin.
\newblock Attention is all you need.
\newblock \emph{Advances in neural information processing systems}, 30, 2017.

\bibitem[{von Oswald} et~al.(2023){von Oswald}, Niklasson, Randazzo, Sacramento, Mordvintsev, Zhmoginov, and Vladymyrov]{vonoswaldTransformersLearnIncontext2023}
Johannes {von Oswald}, Eyvind Niklasson, Ettore Randazzo, Jo{\~a}o Sacramento, Alexander Mordvintsev, Andrey Zhmoginov, and Max Vladymyrov.
\newblock Transformers learn in-context by gradient descent, May 2023.

\bibitem[Wei et~al.(2022)Wei, Tay, Bommasani, Raffel, Zoph, Borgeaud, Yogatama, Bosma, Zhou, Metzler, Chi, Hashimoto, Vinyals, Liang, Dean, and Fedus]{weiEmergentAbilitiesLarge2022}
Jason Wei, Yi~Tay, Rishi Bommasani, Colin Raffel, Barret Zoph, Sebastian Borgeaud, Dani Yogatama, Maarten Bosma, Denny Zhou, Donald Metzler, Ed~H. Chi, Tatsunori Hashimoto, Oriol Vinyals, Percy Liang, Jeff Dean, and William Fedus.
\newblock Emergent {{Abilities}} of {{Large Language Models}}, October 2022.

\bibitem[Weiss et~al.(2021)Weiss, Goldberg, and Yahav]{weissThinkingTransformers2021}
Gail Weiss, Yoav Goldberg, and Eran Yahav.
\newblock Thinking {{Like Transformers}}, July 2021.

\bibitem[Xie et~al.(2021)Xie, Raghunathan, Liang, and Ma]{xie2021explanation}
Sang~Michael Xie, Aditi Raghunathan, Percy Liang, and Tengyu Ma.
\newblock An explanation of in-context learning as implicit bayesian inference.
\newblock In \emph{International Conference on Learning Representations}, 2021.

\bibitem[Ying et~al.(2021)Ying, Cai, Luo, Zheng, Ke, He, Shen, and Liu]{ying2021transformers}
Chengxuan Ying, Tianle Cai, Shengjie Luo, Shuxin Zheng, Guolin Ke, Di~He, Yanming Shen, and Tie-Yan Liu.
\newblock Do transformers really perform badly for graph representation?
\newblock \emph{Advances in Neural Information Processing Systems}, 34:\penalty0 28877--28888, 2021.

\bibitem[Yun et~al.(2020)Yun, Bhojanapalli, Rawat, Reddi, and Kumar]{yunAreTransformersUniversal2020}
Chulhee Yun, Srinadh Bhojanapalli, Ankit~Singh Rawat, Sashank~J. Reddi, and Sanjiv Kumar.
\newblock Are {{Transformers}} universal approximators of sequence-to-sequence functions?, February 2020.

\bibitem[Zhang et~al.(2017)Zhang, Bengio, Hardt, Recht, and Vinyals]{zhang2017understanding}
Chiyuan Zhang, Samy Bengio, Moritz Hardt, Benjamin Recht, and Oriol Vinyals.
\newblock Understanding deep learning requires rethinking generalization.
\newblock In \emph{International Conference on Learning Representations}, 2017.
\newblock URL \url{https://openreview.net/forum?id=Sy8gdB9xx}.

\bibitem[Zhang et~al.(2023)Zhang, Frei, and Bartlett]{zhangTrainedTransformersLearn2023}
Ruiqi Zhang, Spencer Frei, and Peter~L. Bartlett.
\newblock Trained {{Transformers Learn Linear Models In-Context}}, October 2023.

\bibitem[Zhou et~al.(2023)Zhou, Bradley, Littwin, Razin, Saremi, Susskind, Bengio, and Nakkiran]{zhou2023algorithms}
Hattie Zhou, Arwen Bradley, Etai Littwin, Noam Razin, Omid Saremi, Josh Susskind, Samy Bengio, and Preetum Nakkiran.
\newblock What algorithms can transformers learn? a study in length generalization.
\newblock \emph{arXiv preprint arXiv:2310.16028}, 2023.

\bibitem[Zou et~al.(2023)Zou, Phan, Chen, Campbell, Guo, Ren, Pan, Yin, Mazeika, Dombrowski, et~al.]{zou2023representation}
Andy Zou, Long Phan, Sarah Chen, James Campbell, Phillip Guo, Richard Ren, Alexander Pan, Xuwang Yin, Mantas Mazeika, Ann-Kathrin Dombrowski, et~al.
\newblock Representation engineering: A top-down approach to ai transparency.
\newblock \emph{arXiv preprint arXiv:2310.01405}, 2023.

\end{thebibliography}
\bibliographystyle{plainnat}

\appendix
\section*{Organization of the Appendix}

\begin{itemize}
    \item Examples for Different Tasks
    \item Supporting Experiments
    \item Useful Transformer Constructions
    \item Limitation for Single Layer Attention-only Transformer
    \item Proofs for \cref{sec:mem_main}
    \item Proofs for \cref{sec:icl_main}
    \item Proofs for \cref{sec:tmpl_main}
    \item Construction for \cref{sec:cont_gen}
    \item Expanding from ReLU to Softmax Attention
\end{itemize}

\section{Examples for Different Tasks}
\label{sec:examples_data}

Here we use letters $\texttt{a},\texttt{b},\dots$ to denote the ``question tokens'' (also can be seen as pixel blocks for input images in \cref{tasks}, use integer number $\texttt{0},\texttt{1},\texttt{2},\dots$ denote the labels, such as ``hat'' and ``car'', and use $\texttt{\textrightarrow}$ denote the response sign (or \texttt{CLS}), then we provide data examples as in \cref{tab:4-tasks},  for each example, the prior part of the sequence is the input, and the model should predict the underlined result:

\begin{table}[H]

\caption{Examples of four tasks.}
\begin{center}
\begin{small}
\begin{tabular}{ccc}
\toprule
Task & Data Example & Explanation\\
\midrule
Memorization &  $\substack{\texttt{aa\,\textrightarrow\,{\underline{1}} \quad {bb}\,\textrightarrow\,{\underline{2}} \quad {cc}\,\textrightarrow\,{\underline{3}}} 
\\ \texttt{ab\,\textrightarrow\,{\underline{4}} \quad \texttt{ac}\,\textrightarrow\,{\underline{5}} \quad \texttt{ba}\,\textrightarrow\,{\underline{6}}}
\\ \texttt{bc\,\textrightarrow\,{\underline{7}} \quad \texttt{ca}\,\textrightarrow\,{\underline{8}} \quad \texttt{cb}\,\textrightarrow\,{\underline{9}}}}$ & $\substack{\text{different sequence belong to different class}}$ \\
\midrule
Reasoning &  $\substack{\texttt{a\,\textrightarrow\,1\,b\,\textrightarrow\,2\,a\,\textrightarrow\,{\underline{1}} \quad  b\,\textrightarrow\,1\,a\,\textrightarrow\,2\,a\,\textrightarrow\,{\underline{2}}} \\ \texttt{a\,\textrightarrow\,1\,b\,\textrightarrow\,2\,b\,\textrightarrow\,{\underline{2}} \quad b\,\textrightarrow\,1\,a\,\textrightarrow\,2\,b\,\textrightarrow\,{\underline{1}}}\\}$ & $\substack{\text{answer the last question based on the context}}$ \\
\midrule
Generalization & $\substack{\texttt{aa\,\textrightarrow\,{\underline{1}} \quad {bb}\,\textrightarrow\,{\underline{1}} \quad {cc}\,\textrightarrow\,{\underline{1}}} 
\\ \texttt{ab\,\textrightarrow\,{\underline{2}} \quad \texttt{ac}\,\textrightarrow\,{\underline{2}} \quad \texttt{ba}\,\textrightarrow\,{\underline{2}}}
\\ \texttt{bc\,\textrightarrow\,{\underline{2}} \quad \texttt{ca}\,\textrightarrow\,{\underline{2}} \quad \texttt{cb}\,\textrightarrow\,{\underline{2}}}}$ & $\substack{\text{sequence generated by same template belongs to same class}\\ \text{different template have different label} \\ (\text{in this example we have templates } \alpha \alpha \rightarrow 1, \, \alpha \beta \rightarrow 2)}$  \\
\midrule
\begin{tabular}[c]{@{}c@{}}Contextual\\ Generalization\end{tabular} & $\subalign{\texttt{aa\,\textrightarrow\,1\,ab\,\textrightarrow\,2\,bb\,\textrightarrow\,{\underline{1}} \quad aa\,\textrightarrow\,1\,ab\,\textrightarrow\,2\,ba\,\textrightarrow\,{\underline{2}}} \\ \texttt{aa\,\textrightarrow\,1\,ab\,\textrightarrow\,2\,aa\,\textrightarrow\,{\underline{1}} \quad aa\,\textrightarrow\,1\,ab\,\textrightarrow\,2\,ab\,\textrightarrow\,{\underline{2}}}}$ & $\substack{\text{similar question have the same answer} \\ \text{answer the last question based on the context}}$ \\
\bottomrule
\end{tabular}
\end{small}
\end{center}
\label{tab:4-tasks}
\end{table}

\paragraph{Memorization Task:} We assume that each sequence belongs to a different cluster, meaning that the sequences are "independent" from each other. This means that the model only needs to memorize each sequence and its corresponding label without considering any relationships between the data.

\paragraph{Reasoning Task:} We first provide question-answer pairs and then ask the model to retrieve the answer from the context. Such task can be used as a benchmark to evaluate the model's reasoning and comprehension ability \cite{liu2023lost}.

\paragraph{Generalization Task:} we assume that sequences generated by the same template belong to the same class. For example, different-colored cars should be classified into the same class. Our setting is adapted from \cite{boix-adseraWhenCanTransformers2023a}, and we focus on understanding when and how the model can generalize to \textbf{all possible} sequences that belong to the same template.

\paragraph{Contextual Generalization:} We assume that similar questions (generated by the same template) should have the same answer. Therefore, the task requires analyzing the semantic similarity between each question and retrieving the answer from the context.

Reasoning focuses on the semantic meaning behind the sequence, while generalization focuses on the relationships between each sequence. This makes our task more challenging compared to the memorization task, which ignores any possible relationships within and between the data.

\section{Supporting Experiments}
\label{sec:s_exp}
\subsection{Experiments setup}

\textbf{Model:} Our model is an attention-only transformer with a classification layer. We employ different initialization methods for different tasks. For the memorization task, we initialize $\mathbf{W}_{QK}$ with a uniform distribution in the range $[0,1)$. For other tasks, we initialize $\mathbf{W}_{QK}$ in the first layer of the transformer with constructed pattern for different tasks, as illustrated in \cref{sec:cons_icl}, \cref{sec:cons_tmplt}, and \cref{sec:cons_cont_gen}. This allows us to provide a favorable starting point for these models. Additionally, we initialize $\mathbf{W}_V$ for each task with a value of $1$. The classification layer $\mathbf{W}_{C}$ is initialized with a uniform distribution in the range $[0,1)$. We have observed that tasks related to generalization and initialization can be challenging even for randomly initialized transformers with sufficient layers. Our objective is to gain insights into the workings of transformers on these tasks and empirically validate the propositions in \cref{sec:main_results}. Hence, this choice of initialization is justified.

\textbf{Data:} We construct the dataset as described in \cref{sec:problem_set}. Before inputting the data to the transformer, we concatenate each token with a one-hot positional encoding. This is done to ensure that different types of tokens are disentangled, meaning that each token is encoded in a certain subspace. For memorization tasks, we set the word dimension $d_1$ to $500$ and add an extra dimension for the response sign, resulting in a final token dimension of $d = 500 + 1$. The sequence length is set to $6$, so the one-hot positional encoding has a dimension of $n = 6+1$. We randomly choose $e$ examples and assign them with random permuted $e$ distinct labels. Therefore, the input sequence is represented as a $(d + n) \times n$ matrix, where the transformer hidden size is set to $d' = (d + n) = 508$. 

Similarly, for reasoning tasks, we set the question dimension $d_1$ to $100$, the answer dimension $d_2$ to $100$, and include an extra dimension for the response sign. This results in a token dimension of $d = 100+100+1$. For a sequence with $k$ examples, the total length is $n = 3k + 2$.

For generalization tasks, we set the word dimension to $d_1 = 210$ and add an extra dimension for the response sign. The input sequence length is $n = l + 1$, where $l$ represents the template length.

For contextual generalization tasks, we set the question dimension $d_1 = 100$, the answer dimension $d_2 = 100$, and each question is generated from a template length $l = 5$. For a sequence with $k$ examples, the total length is $n = (l + 1 +1)(k+1) - 1 = 7k + 6$.

\textbf{Training:} During training, we utilize stochastic gradient descent (SGD) as the optimizer with cross-entropy loss function given by:

$$
\ell(y,\mathbf{o}) = -\log \frac{e^{\mathbf{o}[y]}}{\sum e^{\mathbf{o}[i]}}
$$

Here, $\hat{\mathbf{o}}$ represents the prediction result based on the last response sign, and $y$ represents the target index. In the case of memorization and template generalization tasks, $y$ can be viewed as a label. For reasoning and contextual reasoning tasks, $y$ corresponds to a vocabulary index.

\subsection{The Impact of the Number of Attention Heads}

In our previous experiments, we discovered that a single-head transformer with 2 layers is sufficient for performing reasoning and generalization tasks. However, in practice, it is common to use multiple attention heads, so we conducted additional experiments to investigate the effects of using more than one attention head. In the following figure, we represent the number of heads on each layer using the list $[h_1,h_2,\cdots]$.

\begin{figure}[H]
\begin{center}
\begin{subfigure}{0.4\columnwidth}
    \centering
    \includegraphics[width=0.7\columnwidth]{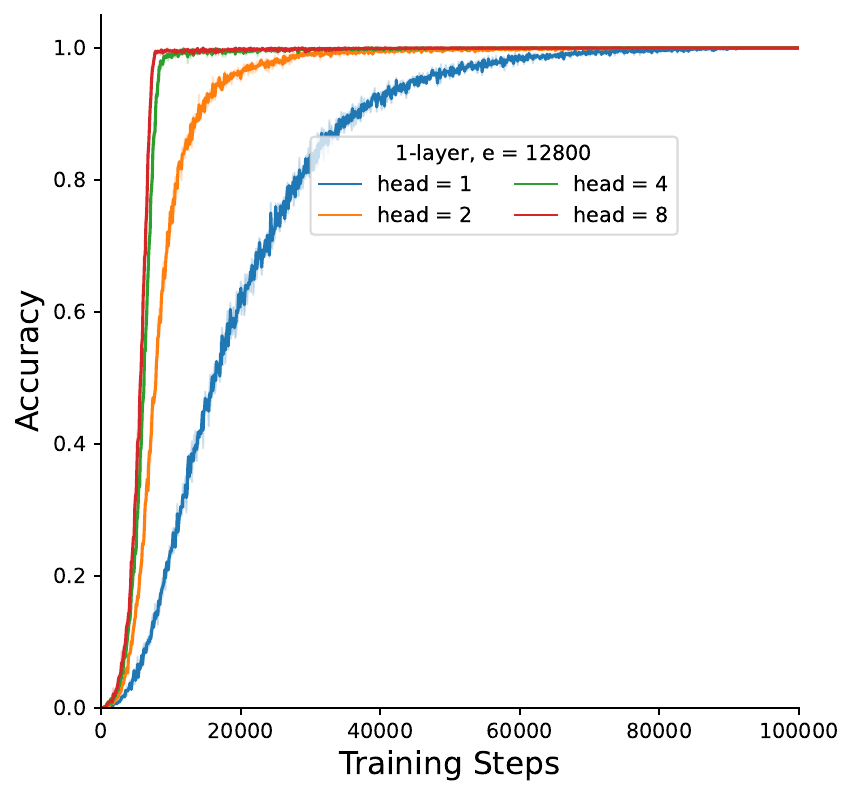}
\end{subfigure}
\begin{subfigure}{0.4\columnwidth}
    \centering
    \includegraphics[width=0.7\columnwidth]{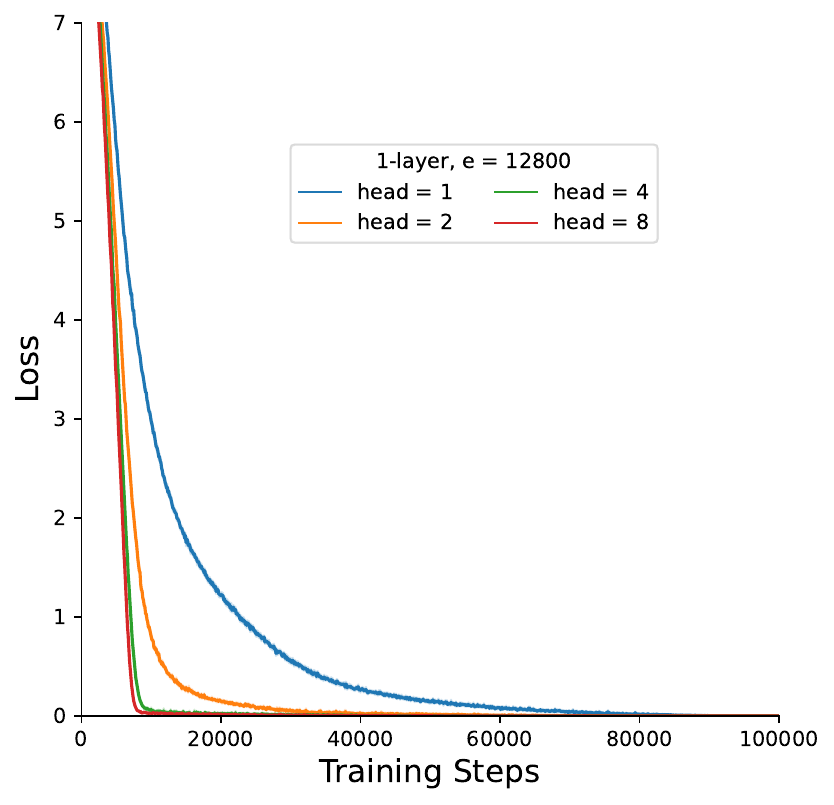}
\end{subfigure}
\caption{Training dynamic for different attention heads on memorization task}
\label{fig:add_mem_head}
\end{center}
\end{figure}
\begin{figure}[H]
\begin{center}
\begin{subfigure}{0.4\columnwidth}
    \centering
    \includegraphics[width=0.7\columnwidth]{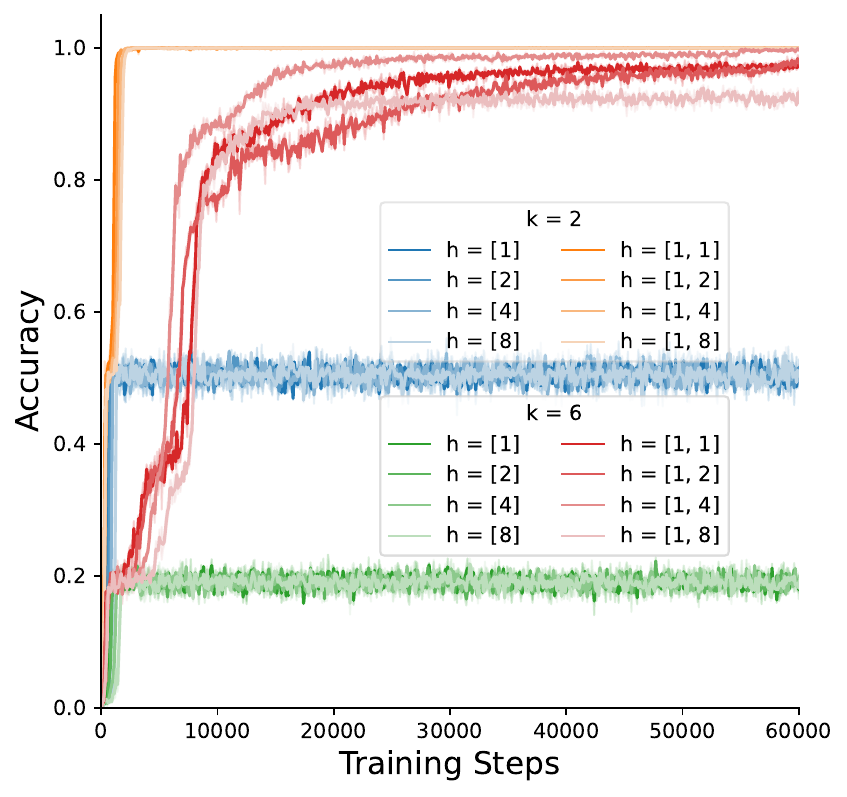}
\end{subfigure}
\begin{subfigure}{0.4\columnwidth}
    \centering
    \includegraphics[width=0.7\columnwidth]{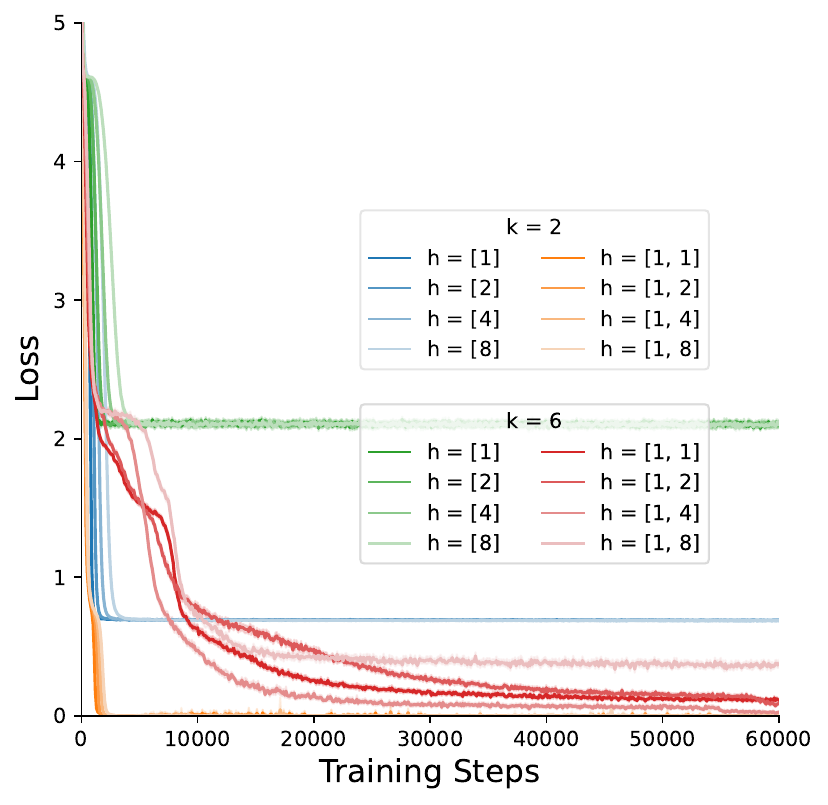}
\end{subfigure}
\caption{Training dynamic for different attention heads on in-context learning task}
\label{fig:add_icl_head}
\end{center}
\end{figure}
\begin{figure}[H]
\begin{center}
\begin{subfigure}{0.4\columnwidth}
    \centering
    \includegraphics[width=0.7\columnwidth]{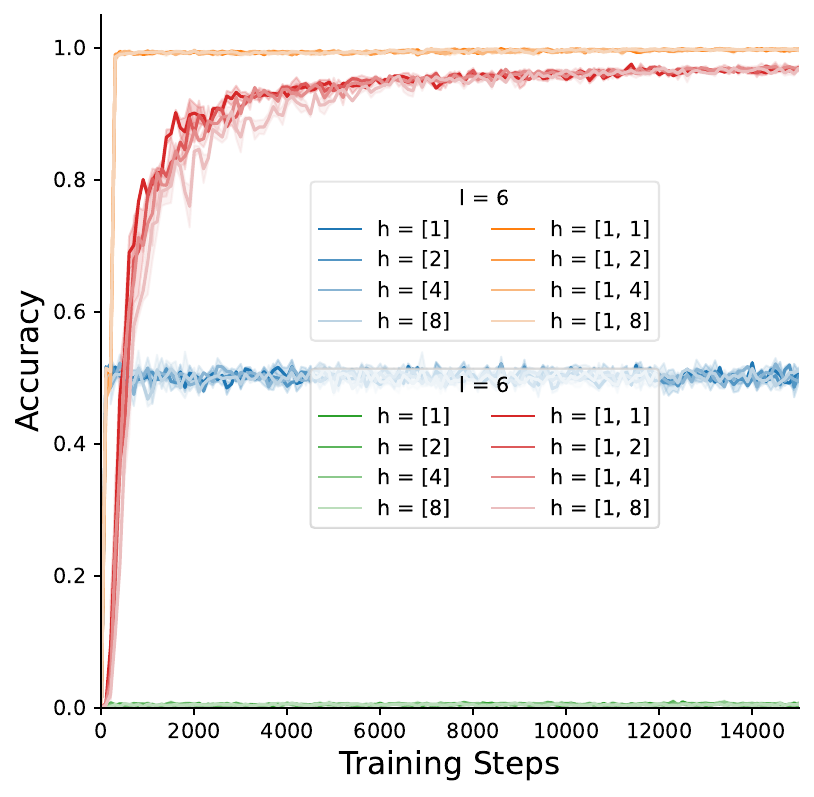}
\end{subfigure}
\begin{subfigure}{0.4\columnwidth}
    \centering
    \includegraphics[width=0.7\columnwidth]{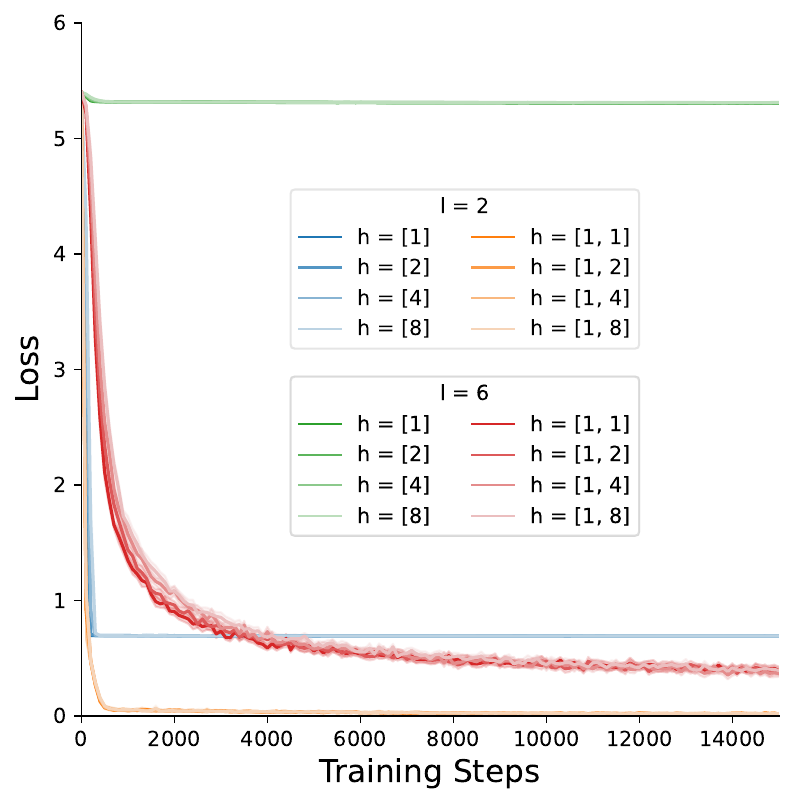}
\end{subfigure}
\caption{Training dynamic for different attention heads on template generalization task}
\label{fig:add_tmplt_head}
\end{center}
\end{figure}

For the memorization task, increasing the number of attention heads allows the model to more easily memorize the data, as shown in \cref{fig:add_mem_head}. However, for tasks that require reasoning and generalization, such as those depicted in \cref{fig:add_icl_head} and \cref{fig:add_tmplt_head},  additional attention heads does not yield a significant impact.

\subsection{Additional experiments for contextual generalization task}

When it comes to more challenging contextual generalization task, we conduct extensive additional experiments to investigate the impact of both the number of layer and attention heads.

\begin{figure}[H]
\begin{center}
\begin{subfigure}{0.4\columnwidth}
    \centering
    \includegraphics[width=0.7\columnwidth]{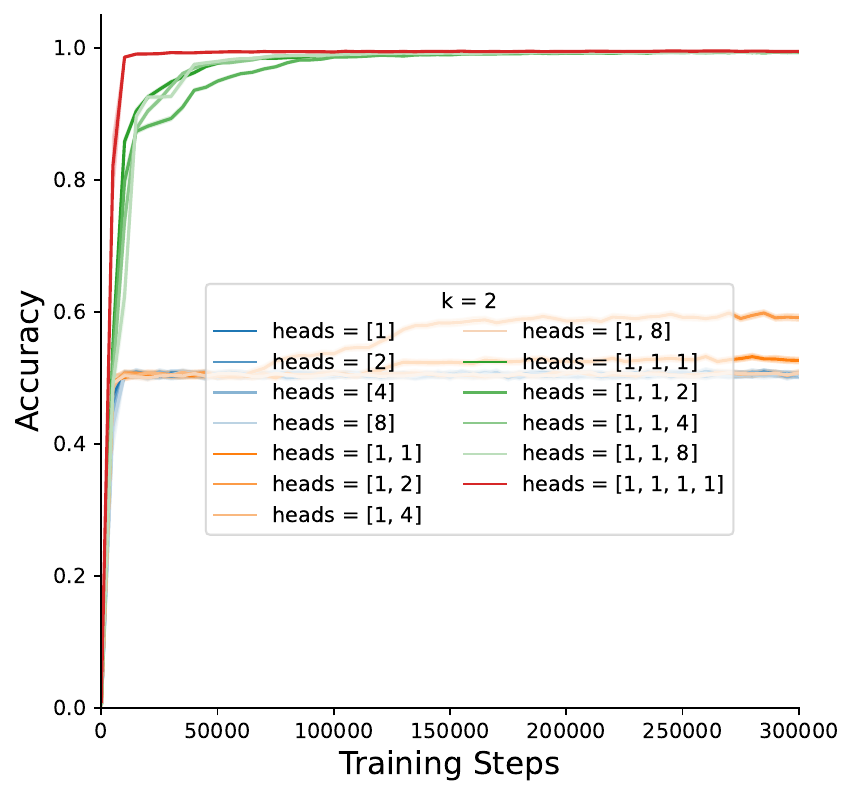}
\end{subfigure}
\begin{subfigure}{0.4\columnwidth}
    \centering
    \includegraphics[width=0.7\columnwidth]{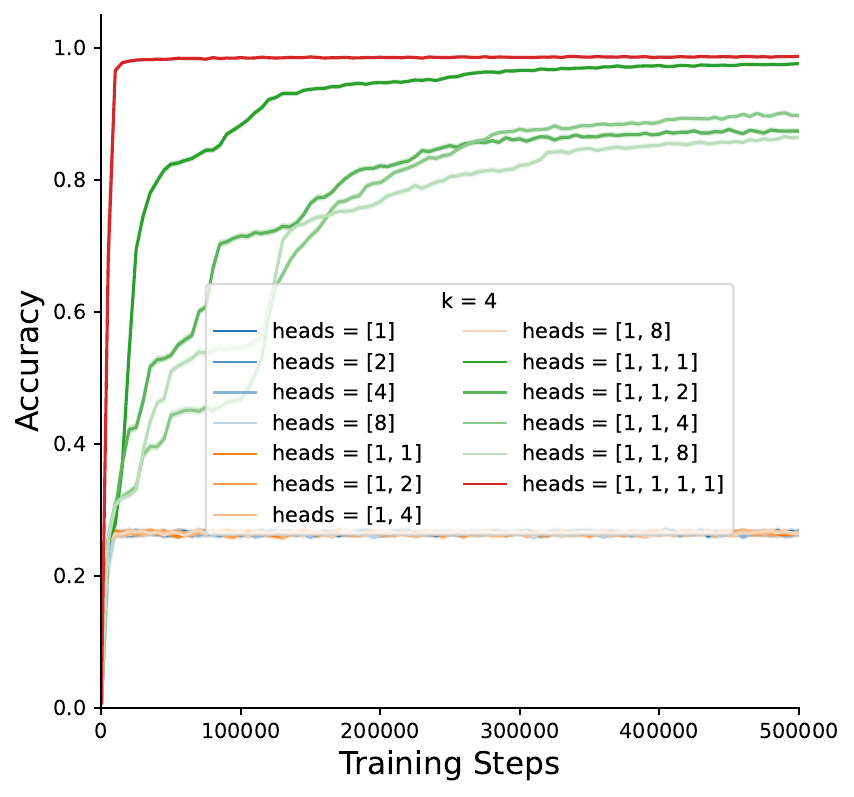}
\end{subfigure}
\begin{subfigure}{0.4\columnwidth}
    \centering
    \includegraphics[width=0.7\columnwidth]{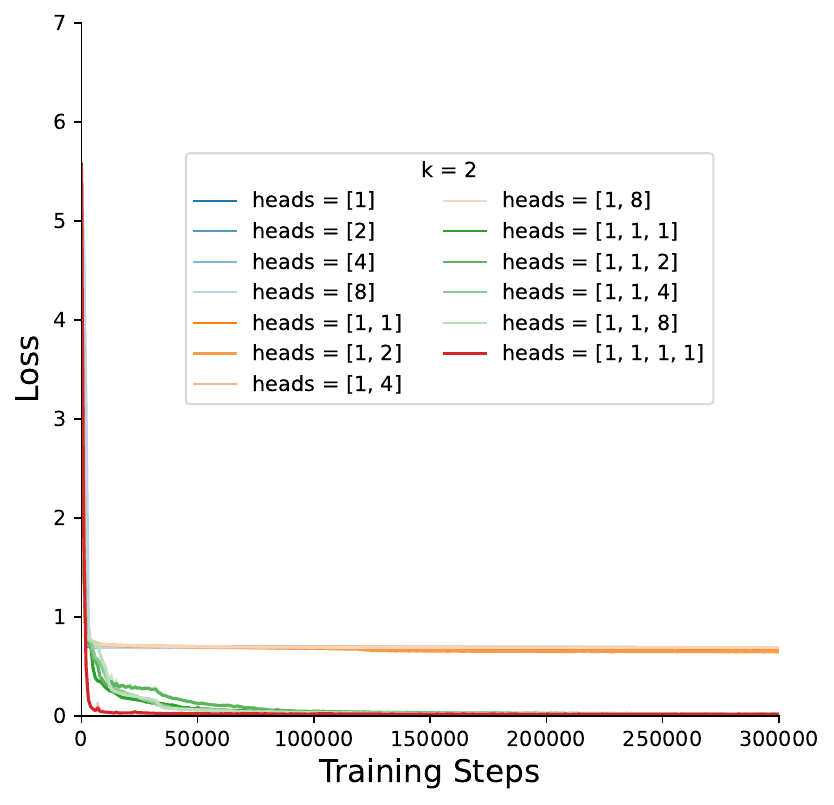}
\end{subfigure}
\begin{subfigure}{0.4\columnwidth}
    \centering
    \includegraphics[width=0.7\columnwidth]{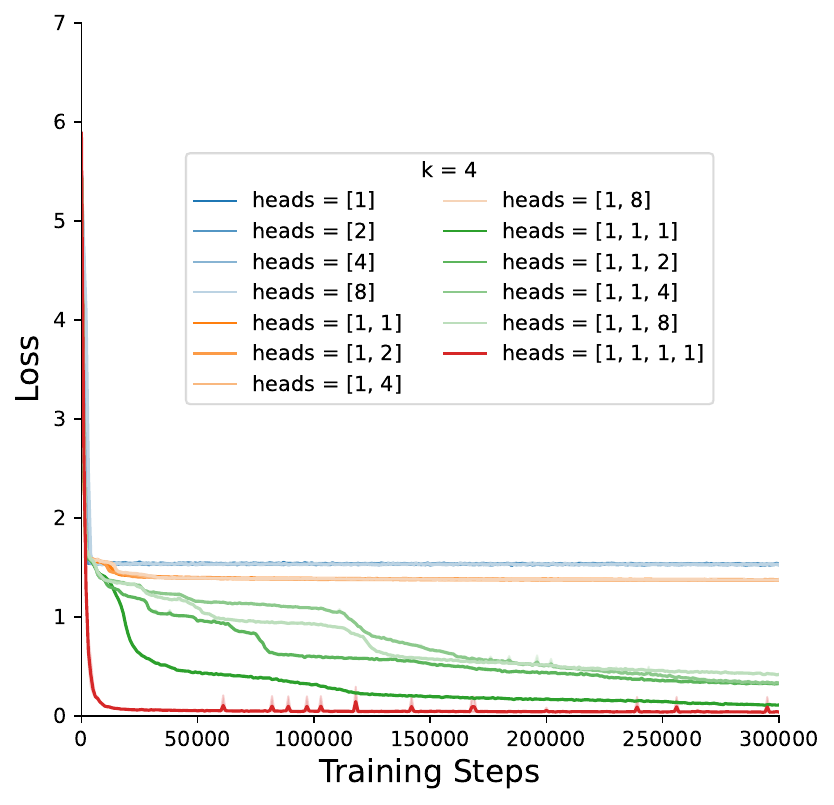}
\end{subfigure}
\caption{Training dynamic for different attention heads and layers on contextual generalization task on given $k$ examples}
\label{fig:add_context_tmplt_head}
\end{center}
\end{figure}

As shown in \cref{fig:add_context_tmplt_head}, similar to the results in  \cref{fig:add_icl_head} and \cref{fig:add_tmplt_head}, the addition of attention heads does not have a significant effect for generalization and reasoning. It is worth noting that when each sequence contains 4 examples, a 3-layer transformer with a single head in the last layer exhibits faster learning compared to models with multiple attention heads. This can be attributed to the inherent nature of larger models, which typically require more steps to converge \cite{bottou2018optimization}. Interestingly, we observed that a 4-layer transformer with a single head achieves even faster learning in the contextual generalization task compared to the 3-layer transformer. This suggests that by increasing the number of layers, the model can employ more complex and efficient methods to solve problems.

\section{Useful Transformer Constructions}

In this section, we introduce two useful constructions that can help us understand the linear attention model and make the following section of our construction clearer. The \textit{instructive attention} enables us to generate any token-invariant attention map $\boldsymbol{\alpha} $,  i.e., the attention map $\boldsymbol{\alpha} \in \mathbb{R}^{n\times n}_{+} $ is independent of the token embedding, but instead only relies on the position in the sequence. This implies that we can instruct the model to focus on the specific areas for each position. The \textit{constrained attention} allows us to apply custom masks to the original attention, restricting the attention of each token to a specific segment of the sequence. This will be applied as the core to make transformer generalization more effective.

\begin{lemma}[Instructive attention]
\label{lem:ins_attn} There exists an attention layer that can guide the attention between each token using the positional encoding. Let $\mathbf{H} \in \mathbb{R}^{d' \times n}$ be the input, represented as:

$$
\mathbf{H} =\begin{NiceArray}{\left\lgroup c \right\rgroup}
  \mathbf{X} \\
  \mathbf{P}\\
  \mathbf{0}\\ 
\end{NiceArray} =\begin{NiceArray}{\left\lgroup ccc \right\rgroup l}
  \mathbf{x}_0 & \cdots & \mathbf{x}_{n-1} & \rightarrow d \times n \\
  \mathbf{p}_0 & \cdots & \mathbf{p}_{n-1} & \rightarrow n \times n \\
  \mathbf{0} & \cdots & \mathbf{0} & \rightarrow (d' - n) \times n \\
\end{NiceArray}.
$$

For any attention score $\boldsymbol{\alpha} \in \mathbb{R}^{n\times n}_{+}$ that is independent of the input sequences, there always exists a weight matrix $\mathbf{W}_{QK} \in \mathbb{R}^{d' \times d'}$ such that the following holds for any input sequence $(\mathbf{x}_0, \cdots, \mathbf{x}_{n-1}) \in \mathcal{X}^n$:

$$
\sigma\left(\mathbf{H}^\top \mathbf{W}_{QK} \mathbf{H} \right) = \boldsymbol{\alpha}
$$
\end{lemma}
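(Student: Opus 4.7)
The plan is to exploit the one-hot positional encoding to make $\mathbf{W}_{QK}$ act as a direct lookup into a prescribed matrix, so that the attention map depends only on positions and equals $\boldsymbol{\alpha}$ exactly.

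First, I would decompose $\mathbf{W}_{QK}$ into $3 \times 3$ blocks conforming to the block structure of $\mathbf{H}$ (token rows of height $d$, position rows of height $n$, padding rows of height $d'-n-d$), and zero out every block except the middle position-to-position block, which I will call $\mathbf{W}' \in \mathbb{R}^{n \times n}$. With this choice, the quadratic form $\mathbf{H}^\top \mathbf{W}_{QK} \mathbf{H}$ collapses to $\mathbf{P}^\top \mathbf{W}' \mathbf{P}$, where $\mathbf{P} = [\mathbf{p}_0, \ldots, \mathbf{p}_{n-1}]$ is the stacked positional encoding, and crucially the token embeddings $\mathbf{x}_i$ drop out. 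This is the structural reason the resulting attention map will be token-invariant.

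Next, I would use the fact that the one-hot choice $\mathbf{p}_i = \mathbf{e}_i$ makes $\mathbf{P} = \mathbf{I}_n$, so $\mathbf{P}^\top \mathbf{W}' \mathbf{P} = \mathbf{W}'$. Setting $\mathbf{W}' = \boldsymbol{\alpha}$ then yields $\mathbf{H}^\top \mathbf{W}_{QK} \mathbf{H} = \boldsymbol{\alpha}$ regardless of $(\mathbf{x}_0,\ldots,\mathbf{x}_{n-1})$. Since $\boldsymbol{\alpha} \in \mathbb{R}^{n \times n}_+$ has nonnegative entries by assumption, the ReLU activation $\sigma$ acts as the identity on this matrix, giving $\sigma(\mathbf{H}^\top \mathbf{W}_{QK} \mathbf{H}) = \boldsymbol{\alpha}$ as required.

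There is essentially no hard step here; the construction is fully explicit. The only care needed is bookkeeping of the block indices so that the cross blocks involving $\mathbf{X}$ and the padding are genuinely annihilated, and noting that the nonnegativity hypothesis on $\boldsymbol{\alpha}$ is exactly what lets us bypass the ReLU nonlinearity. I would present $\mathbf{W}_{QK}$ in block form in the final writeup, e.g., the matrix with $\boldsymbol{\alpha}$ in the $(2,2)$ block and zeros elsewhere, and verify the identity in one line.
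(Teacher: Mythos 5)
Your proposal is correct and follows essentially the same construction as the paper: zero out all blocks of $\mathbf{W}_{QK}$ except the position-to-position block, use $\mathbf{P}=\mathbf{I}_n$ to reduce the score matrix to that block, and set it equal to $\boldsymbol{\alpha}$. Your explicit remark that the nonnegativity of $\boldsymbol{\alpha}$ is what makes the ReLU act as the identity is a small point the paper leaves implicit, but otherwise the two arguments coincide.
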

\begin{proof} 
Recall that the sequence will first be preprocessed into a $d' \times n$ matrix with positional encoding, and padding zero to make the input dimension align with the model's hidden size $d'$, result in the final input $\mathbf{H}$. Besides, note that we consider the one-hot positional encoding, i.e., $\mathbf{p}_i = [\underbrace{0,\dots,0}_{i},1,\underbrace{0,\dots,0}_{n-(i+1)}]^\top$. Then, we can first consider the attention weight matrix $\mathbf{W}_{QK} $ with the following form:

$$
\mathbf{W}_{QK} = \begin{NiceArray}{\left\lgroup ccc \right\rgroup}
  \mathbf{W}_{tt} & \mathbf{W}_{tp} & \mathbf{0}_{ d \times  (d' - n)}\\
  \mathbf{W}_{pt} & \mathbf{W}_{pp} & \mathbf{0}_{n \times  (d' - n)}\\
  \mathbf{0}_{(d' - n)  \times  d } & \mathbf{0}_{(d' - n)  \times  n} & \mathbf{0}_{(d' - n)  \times  (d' - n)} \\
\end{NiceArray}.
$$

Expanding the equation $\mathbf{H}^\top \mathbf{W}_{QK} \mathbf{H}$ gives:
\begin{equation}\label{eq:attn_expand}
\mathbf{H}^\top \mathbf{W}_{QK} \mathbf{H} = \mathbf{X}^\top \mathbf{W}_{tt} \mathbf{X} + \mathbf{P}^\top \mathbf{W}_{pt} \mathbf{X}  + \mathbf{X}^\top \mathbf{W}_{tp} \mathbf{P} + \mathbf{P}^\top \mathbf{W}_{pp} \mathbf{P}.
\end{equation}

Note that $\mathbf{P} = [\mathbf{p}_0 , \cdots , \mathbf{p}_{n-1} ] = \mathbf{I}$. Therefore, we can further set $\mathbf{W}_{tt} = \mathbf{0}$, $\mathbf{W}_{pt} = \mathbf{0}$, and $\mathbf{W}_{tp} = \mathbf{0}$, then

$$
\sigma\left(\mathbf{H}^\top \mathbf{W}_{QK} \mathbf{H} \right) = \sigma\left(\mathbf{P}^\top \mathbf{W}_{pp} \mathbf{P} \right) = \sigma\left(\mathbf{W}_{pp}\right).
$$

It is then clear that the attention map only depends on $\mathbf{W}_{pp}$ and has nothing to do with the token embeddings $\mathbf X$. Therefore, we can directly set  $\mathbf{W}_{pp} = \boldsymbol{\alpha}$ to complete the proof.
\end{proof}

\cref{lem:ins_attn} shows that the transformer can instruct the attention through positional encoding. That is, the transformer can aggregate any tokens together for further processing, which is essential for the instruction head where multiple tokens need to be copied together for further processing \cite{olsson2022context}.

Furthermore, applying the above construction can further help  implement attention masking, which aims to restrict attention between any specific pair of tokens based on their positions:

\begin{lemma}[Constrained attention]
\label{lem:con_attn} There exists an attention layer that can control the attention between any pair of tokens through the one-hot positional encoding, i.e., masking the correlation of tokens between any two positions.
\end{lemma}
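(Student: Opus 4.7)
The plan is to extend the construction from Lemma~1 (Instructive attention) by keeping a nontrivial token block in $\mathbf{W}_{QK}$ while using the positional block as a binary gate. I would write
\[
\mathbf{W}_{QK} = \begin{pmatrix} \mathbf{W}_{tt} & \mathbf{0} & \mathbf{0} \\ \mathbf{0} & \mathbf{W}_{pp} & \mathbf{0} \\ \mathbf{0} & \mathbf{0} & \mathbf{0} \end{pmatrix}
\]
with block sizes matching the $(d, n, d'-n-d)$ decomposition of $\mathbf{H}$ induced by the one-hot positional encoding. Applying the same expansion used in the proof of Lemma~1 and using $\mathbf{P}=\mathbf{I}$, the pre-activation attention score cleanly separates into a token-dependent piece and a purely positional correction: $(\mathbf{H}^\top \mathbf{W}_{QK}\mathbf{H})[i,j] = \mathbf{x}_i^\top\mathbf{W}_{tt}\mathbf{x}_j + \mathbf{W}_{pp}[i,j]$.

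Given any subset $\mathcal{M}\subseteq [n]\times[n]$ of position pairs that we wish to mask, I would then set $\mathbf{W}_{pp}[i,j] = -C$ whenever $(i,j)\in\mathcal{M}$ and $\mathbf{W}_{pp}[i,j] = 0$ otherwise. Since tokens are drawn from a finite alphabet $\mathcal{X}$, the quantity $B := \max_{u,v\in\mathcal{X}}|u^\top \mathbf{W}_{tt} v|$ is finite; taking any $C > B$ forces the pre-activation at every masked pair to be strictly negative, so the ReLU activation $\sigma$ zeros it out. At unmasked pairs the positional contribution vanishes and the pre-activation is simply $\mathbf{x}_i^\top \mathbf{W}_{tt} \mathbf{x}_j$. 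Hence the post-activation attention equals $\sigma(\mathbf{x}_i^\top \mathbf{W}_{tt}\mathbf{x}_j)$ off $\mathcal{M}$ and $0$ on $\mathcal{M}$, which is precisely the claimed position-based masking of an arbitrary token-level attention pattern.

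The only real obstacle is bookkeeping around the constant $C$: it must dominate the worst-case token inner product under the chosen $\mathbf{W}_{tt}$. Because $\mathcal{X}$ is finite and $\mathbf{W}_{tt}$ is specified in advance (and can even be taken to be $\mathbf{0}$ if one wishes to recover the purely instructive attention of Lemma~1 as a special case), such a $C$ always exists and is explicitly computable, so no delicate estimate is required. A further refinement could activate the cross blocks $\mathbf{W}_{tp}$ or $\mathbf{W}_{pt}$ to allow position-dependent scaling of the token attention rather than a pure on/off mask, but the construction with $\mathbf{W}_{pp}$ alone suffices for the stated claim.
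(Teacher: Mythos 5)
Your proposal follows essentially the same route as the paper's proof: both use the block expansion of $\mathbf{H}^\top \mathbf{W}_{QK}\mathbf{H}$ from the instructive-attention lemma, exploit $\mathbf{P}=\mathbf{I}$ so that $\mathbf{W}_{pp}$ enters the pre-activation score additively, and mask by placing a sufficiently large negative value at the targeted positional entries so the ReLU zeros them out. Your version is in fact slightly more careful than the paper's, since you make the required magnitude explicit via the finite bound $B=\max_{u,v\in\mathcal{X}}|u^\top\mathbf{W}_{tt}v|$ rather than just invoking ``some sufficiently large negative value.''
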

\begin{proof} 
By \cref{eq:attn_expand}, we have

$$
\mathbf{H}^\top \mathbf{W}_{QK} \mathbf{H} = (\mathbf{X}^\top \mathbf{W}_{tt} \mathbf{X} + \mathbf{P}^\top\mathbf{W}_{pt} \mathbf{X}  + \mathbf{X}^\top \mathbf{W}_{tp}\mathbf{P} ) +  \mathbf{W}_{pp}.
$$

The first term on the R.H.S. of the above equation enables the model to learn attention between each token at different positions, while the right part $\mathbf{W}_{pp}$ can be arbitrarily designed. Therefore, in order to mask the attention between the tokens at positions $i$ and $j$, we can directly set $\mathbf W_{pp}(i,j)\rightarrow\infty$ or some sufficiently large negative value. In this way, we can constrain the attention between each token based on their position.
\end{proof}

In the following sections, these two attention methods will be useful to help achieve the desired task. In particular, instructive attention to aggregate tokens at specific positions, and utilize the constrained attention mechanism when the model needs to concentrate on a particular segment of the sequence.

\section{Limitation for Single Layer Attention-only Transformer}\label{sec:limitation_dependency}

In this section, we present an intriguing scenario where the input sequences possess specific properties. Under these conditions, the predictions made by a single layer transformer exhibit \textbf{linear dependence}, implying that the sequence labels must satisfy specific constraints. This limitation hinders the performance of the 1-layer transformer, and we will leverage this property to demonstrate why the single layer transformer struggles with reasoning and generalization in \cref{sec:prf_lsa_fail} and \cref{sec:prf_tmplt_fail}.

To begin, let us define a combination operation for discrete tokens and sequences, which is useful in describing the relationship among sequences.

\begin{definition}
\label{def:sequence_combination}
(\textit{Combination operation for discrete tokens and sequences}). Let $\mathbf{x}$ and $\mathbf{y}$ be two tokens chosen from the vocabulary $\mathcal{X}$. The combination operation for these two tokens, denoted as $\mathbf{x} \oplus \mathbf{y}$, is a multiset containing both tokens, i.e., $\mathbf{x} \oplus \mathbf{y} = [\mathbf{x}, \mathbf{y}]$. Furthermore, the combination between a multiset $ [\mathbf{x}, \mathbf{y}]$ and a token $\mathbf{z}$ results in a new multiset $[\mathbf{x}, \mathbf{y}, \mathbf{z}]$. For two sequences $\mathbf{X} = (\mathbf{x}_0,\dots,\mathbf{x}_{n-1})$ and $\mathbf{Y} = (\mathbf{y}_0,\dots,\mathbf{y}_{n-1})$, their combination $\mathbf{X} \oplus \mathbf{Y} = (\mathbf{x}_0 \oplus \mathbf{y}_0, \dots, \mathbf{x}_{n-1} \oplus \mathbf{y}_{n-1})$ is a sequence of multisets. Notably, for any positive integer $\lambda \in \mathbb{Z}_{+}$, $\lambda \otimes \mathbf{x}$ can be interpreted as a multiset consisting of $\lambda$ copies of token $\mathbf{x}$, i.e., $\{ \underbrace{\mathbf{x},\dots,\mathbf{x}}_{k \text{ times}} \}$. Similarly, $\lambda \otimes \mathbf{X} = (\lambda \otimes \mathbf{x}_0,\dots,\lambda \otimes\mathbf{x}_{n-1})$.
\end{definition}

\begin{definition}
\label{def:data_dependence}
(\textit{Sequence dependent}). We define the input sequences $\mathbf{X}^{(0)},\mathbf{X}^{(1)},\dots, \mathbf{X}^{(N-1)} \in \mathcal{X}^n$ as \textit{dependent} if they satisfy the following conditions:

\begin{enumerate}
    \item All sequences end with the same token, i.e., $\mathbf{X}^{(0)}[n-1] = \mathbf{X}^{(1)}[n-1] = \dots = \mathbf{X}^{(N-1)}[n-1]$.
    \item There exist coefficients $\lambda_0,\lambda_1,\dots,\lambda_{N-1} \in \mathbb{Z}$ (not all zero) such that 
    \begin{equation}
    \label{eq:data_dependence}
        \sqcap_{i \in \mathcal{I}_{+}} (\lambda_i \otimes \mathbf{X}^{(i)}) = \sqcap_{i \in \mathcal{I}_{-}} (-\lambda_i \otimes \mathbf{X}^{(i)}),
    \end{equation}
    where $\mathcal{I}_{+} = \{i |\lambda_{i} \geq 0, i \in [N]\}$ and $\mathcal{I}_{-} = \{i |\lambda_{i} < 0, i \in [N]\}$ divide the sequences into two parts. Here, $\sqcap_{i \in \mathcal{I} } \lambda_i \otimes \mathbf{X}^{(i)} = (\lambda_{i_1} \otimes \mathbf{X}^{(i_1)}) \oplus \dots \oplus (\lambda_{i_k} \otimes \mathbf{X}^{(i_k)})$ represents the operation of combining sequences.
\end{enumerate}
\end{definition}

\paragraph{Example} Consider a vocabulary $\mathcal{X} = \{\mathbf{a},\mathbf{b}, \mathbf{r}\}$ with $3$ discrete tokens. In this case, the sequences $(\mathbf{a},\mathbf{b},\mathbf{r}), (\mathbf{b},\mathbf{a},\mathbf{r}), (\mathbf{a},\mathbf{a},\mathbf{r}), (\mathbf{b},\mathbf{b},\mathbf{r})$ are dependent. This is because $(\mathbf{a},\mathbf{b},\mathbf{r}) \oplus (\mathbf{b},\mathbf{a},\mathbf{r}) = ([\mathbf{a},\mathbf{b}],[\mathbf{a},\mathbf{b}],[\mathbf{r},\mathbf{r}]) = (\mathbf{a},\mathbf{a},\mathbf{r}) \oplus (\mathbf{b},\mathbf{b},\mathbf{r})$. Additionally, we can observe that $\sum_{i=0}^{N-1} \lambda_i = 0$ since both sides of \cref{eq:data_dependence} should have the same number of discrete tokens in each position.

\begin{proposition}
\label{prop:out_linear}
If the input sequences $\mathbf{X}^{(0)},\mathbf{X}^{(1)},\dots, \mathbf{X}^{(N-1)} \in \mathcal{X}^n$  are \textbf{dependent}, then for the first attention layer $\texttt{TF}_1$ for any attention only transformer, we have

 $$
 \lambda_0 (\texttt{TF}_1(\mathbf{X}^{(0)})[n-1])  + \lambda_1 (\texttt{TF}_1(\mathbf{X}^{(1)})[n-1]) + \dots + \lambda_{N-1} (\texttt{TF}_1(\mathbf{X}^{(N-1)})[n-1]) = \mathbf{0}.
 $$

Here, $\{\lambda_i\}_{i=0}^{N-1}$ represents the coefficients defined in \cref{def:data_dependence}. We highlight the linear dependency in the output of at position $n-1$, which can be regarded as the representation for the next token or classification prediction of the entire sequence. 

\end{proposition}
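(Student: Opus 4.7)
The plan is to translate the combinatorial dependence condition into an arithmetic identity on token multiplicities, then exploit the fact that the shared last token makes the key at position $n-1$ sequence-independent, thereby reducing the weighted sum of outputs to a double sum that vanishes positionwise.

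First I would unpack \cref{def:data_dependence}. The multiset identity (\ref{eq:data_dependence}) says that at every position $k$, and for every candidate token $x \in \mathcal X$, the multiplicity of $x$ on the left-hand side equals that on the right-hand side. Writing the multiplicity of $x$ at position $k$ as $\sum_{i \in \mathcal I_+:\mathbf X^{(i)}[k]=x}\lambda_i$ on one side and $\sum_{i \in \mathcal I_-:\mathbf X^{(i)}[k]=x}(-\lambda_i)$ on the other, and rearranging, yields the clean constraint
\[
\sum_{i:\,\mathbf X^{(i)}[k]=x}\lambda_i \;=\; 0 \qquad \text{for every } k \in [n],\ x \in \mathcal X.
\]
Specializing to $k=n-1$ with $x$ the common last token, where every sequence contributes, recovers $\sum_{i}\lambda_i = 0$ as a byproduct.

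Second, I would write out the $(n-1)$-th column of the single-head attention output $\mathbf V\sigma(\mathbf Q^\top\mathbf K)$ as $\sum_{j=0}^{n-1}\mathbf v_j\,\sigma(\mathbf q_j^\top \mathbf k_{n-1})$, and use the block structure of $\mathbf H^{(0)}$ in \cref{eq:H0}. Because each column of $\mathbf H^{(0)}$ is determined by its token embedding plus a fixed one-hot positional encoding, both $\mathbf q_j$ and $\mathbf v_j$ are functions of the pair (position $j$, token at position $j$) only. Crucially, $\mathbf k_{n-1}$ is identical across all $N$ sequences because the last token is shared. Consequently the contribution of column $j$ of sequence $s$ to the output at position $n-1$ takes the form $\phi(j, x^{(s)}_j)$ for some sequence-agnostic function $\phi$, and the residual term $\mathbf h^{(0)}_{n-1}$ is also shared, so its $\lambda$-weighted sum vanishes via $\sum_s\lambda_s=0$.

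The final step is an interchange of summation:
\[
\sum_{s=0}^{N-1}\lambda_s \sum_{j=0}^{n-1}\phi(j,x^{(s)}_j) \;=\; \sum_{j=0}^{n-1}\sum_{x\in\mathcal X}\phi(j,x)\sum_{s:\,x^{(s)}_j=x}\lambda_s,
\]
and the innermost sum is $0$ by the positionwise constraint from the first step. Multi-head and multi-layer-of-one attention follow immediately by linearity, since the average over heads commutes with the linear combination over $s$. The main obstacle I expect is the first step, namely translating the multiset equality cleanly despite the $\lambda_i$ being of mixed sign and the multisets being defined only with nonnegative multiplicities; once this bookkeeping is in hand, the remainder is forced by the decomposition $\phi(j,x)$ and the shared-key observation.
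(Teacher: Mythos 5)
Your proposal is correct and follows essentially the same route as the paper's proof: expand the $(n-1)$-th output column into a residual term plus a sum of per-position attention contributions that depend only on (position, token) because the shared last token fixes the key, then cancel the $\lambda$-weighted sum positionwise using the dependence condition (your reformulation $\sum_{i:\,\mathbf X^{(i)}[k]=x}\lambda_i=0$ is just a cleaner statement of the paper's multiset-splitting into $\mathcal I_+$ and $\mathcal I_-$). The residual term vanishing via $\sum_i\lambda_i=0$ and the reduction of multi-head to the single-head case by linearity also match the paper.
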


\begin{proof}

Assuming the input sequence is $\mathbf{X} = [\mathbf{x}_0,\mathbf{x}_1,\dots,\mathbf{x}_{n-1}] \in \mathbb{R}^{d \times n}$, the 1-layer linear attention only transformer $\texttt{TF}_1$ performs a sequence-to-sequence mapping by first computing the attention score between each token and then aggregating them to obtain the final representation at each position:

\begin{equation}\label{eq:1_lsa}
\begin{split}
\texttt{TF}_1(\mathbf{X}) &= \mathbf{X} + \frac{1}{m} \sum_{i = 1}^{m}\left[ (\mathbf{W}_{V_i} \mathbf{X} ) \sigma((\mathbf{W}_{Q_i} \mathbf{X} )^\top  (\mathbf{W}_{K_i} \mathbf{X}) )\right] \\
                        &= \mathbf{X} + \frac{1}{m} \sum_{i = 1}^{m}\left[  (\mathbf{W}_{V_i}  \mathbf{X}) \sigma(\mathbf{X}^\top  \mathbf{W}_{QK_i} \mathbf{X} )\right] 
\end{split}
\end{equation}

Here, we have combined the learnable parameters $\mathbf{W}_{Q_i},\mathbf{W}_{K_i} \in \mathbb{R}^{h \times d}$ into a single matrix $\mathbf{W}_{QK_i} \in \mathbb{R}^{d \times d}$ for analysis simplification.

Consider the output at position $k \in [n]$:

\begin{equation}\label{eq:1_lsa_k}
\begin{split}
\texttt{TF}_1(\mathbf{X})[k]  &= \mathbf{x}_k  + \frac{1}{m} \sum_{i = 1}^{m}\left[(\mathbf{W}_{V_i} \mathbf{X} )\sigma(\mathbf{X}^\top  \mathbf{W}_{QK_i} \mathbf{x}_k )  \right] \\
                            &= \mathbf{x}_k  + \frac{1}{m} \sum_{i = 1}^{m} \sum_{j = 0}^{n-1} \left[ (\mathbf{W}_{V_i} \mathbf{x}_j ) \sigma(\mathbf{x}_j^\top  \mathbf{W}_{QK_i} \mathbf{x}_k ) \right] \\
                            &= \mathbf{x}_k  + \frac{1}{m} \sum_{j = 0}^{n-1} \sum_{i = 1}^{m} \left[ (\mathbf{W}_{V_i} \mathbf{x}_j ) \sigma(\mathbf{x}_j^\top  \mathbf{W}_{QK_i} \mathbf{x}_k ) \right] ,
\end{split}
\end{equation}

we define $\texttt{Attn}(\mathbf{x}_j,\mathbf{x}_k) := \frac{1}{m}\sum_{i = 1}^{m} \left[ (\mathbf{W}_{V_i} \mathbf{x}_j ) \sigma(\mathbf{x}_j^\top  \mathbf{W}_{QK_i} \mathbf{x}_k ) \right] $, in this way the output at position $k$ can be written as $\texttt{TF}(\mathbf{X})[k] = \mathbf{x}_k  +  \sum_{j = 1}^{n} \texttt{Attn}(\mathbf{x}_j,\mathbf{x}_k)$. Specifically, when the last token for all input sequences is the same, the prediction based on the last token is:

\begin{equation}\label{eq:1_lsa_n}
\begin{split}
\texttt{TF}_1(\mathbf{X})[n-1]  &= \mathbf{x}_{n-1}  +  \sum_{j = 0}^{n-1} \texttt{Attn}(\mathbf{x}_j,\mathbf{x}_{n-1}).
\end{split}
\end{equation}

If the input sequences are dependent, based on \cref{def:data_dependence}, we can divide the sequences into two groups $\mathcal{I}_{+}$ and $\mathcal{I}_{-}$ so in each position $j$, both side have the same occurrence for each token:

\begin{equation}
\sqcap_{i \in \mathcal{I}_{+}} (\lambda_i \otimes \mathbf{x}^{(i)}_j) = \sqcap_{i \in \mathcal{I}_{-}} (-\lambda_i \otimes \mathbf{x}^{(i)}_j) := \mathcal{S}_j,
\end{equation}

here we use $\mathcal{S}_j$ to denote the tokens occurrences at position $j$, note that $\lambda \texttt{Attn}(\mathbf{x}^{(i)}_j,\mathbf{x}_{n-1}) = \sum_{\mathbf{s} \in (\lambda \otimes \mathbf{x}^{(i)}_j)} \texttt{Attn}(\mathbf{s},\mathbf{x}_{n-1})$, so we can derive the following equation:

\begin{equation}\label{eq:depen_data}
\sum_{i \in \mathcal{I}_{+}} \lambda_i \texttt{Attn}(\mathbf{x}^{(i)}_j,\mathbf{x}_{n-1}) = \sum_{\mathbf{s} \in \mathcal{S}_j} \texttt{Attn}(\mathbf{s},\mathbf{x}_{n-1}) = \sum_{i \in \mathcal{I}_{-}} -\lambda_i \texttt{Attn}(\mathbf{x}^{(i)}_j,\mathbf{x}_{n-1}).
\end{equation}

Then for the final prediction result

\begin{equation}\label{eq:1_lsa_n_N}
\begin{split}
 &\lambda_0 (\texttt{TF}_1(\mathbf{X}^{(0)})[n-1])  + \lambda_1 (\texttt{TF}_1(\mathbf{X}^{(1)})[n-1]) + \dots + \lambda_{N-1} (\texttt{TF}_1(\mathbf{X}^{(N-1)})[n-1]) \\
 =& \left(\mathbf{x}_{n-1} \sum_{i=0}^{N-1} \lambda_i \right) + \left(\sum_{j = 0}^{n-1} \sum_{i=0}^{N-1} \lambda_i \texttt{Attn}(\mathbf{x}^{(i)}_j,\mathbf{x}_{n-1})\right) \\
 =& \mathbf{0}.
 \end{split}
\end{equation}

The first part of \cref{eq:1_lsa_n_N} is zero based on the observation that $\sum_{i=0}^{N-1} \lambda_i = 0$ since both sides of \cref{eq:data_dependence} should have the same number of discrete tokens in each position, and the second part is zero based on \cref{eq:depen_data},  which states that each token should have the same occurrence for both groups.
\end{proof}

In a single layer transformer, which consists of only one attention layer and a linear classifier layer, the result $\mathbf{o} = \mathbf{W}_{O} \texttt{TF}_1(\mathbf{X})[n-1])$ also satisfies the linear dependency:

\begin{proposition}
\label{prop:pre_linear}
If the input sequences $\mathbf{X}^{(0)},\mathbf{X}^{(1)},\dots, \mathbf{X}^{(N-1)} \in \mathcal{X}^n$  are \textbf{dependent}, then for any single layer transformer $\texttt{TF}$, their prediction result $\mathbf{o}^{(0)},\dots,\mathbf{o}^{(N-1)}$
 $$
 \lambda_0 \mathbf{o}^{(0)}  + \lambda_1 \mathbf{o}^{(1)} + \dots + \lambda_{N-1} \mathbf{o}^{(N-1)} = \mathbf{0}.
 $$
 Here, $\{\lambda_i\}_{i=0}^{N-1}$ represents the coefficients defined in \cref{def:data_dependence}. 
\end{proposition}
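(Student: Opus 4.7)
The plan is almost trivial given Proposition~\ref{prop:out_linear}: the linear dependency established for the last-column attention output propagates through the final classifier by linearity. I would structure the argument in three short steps.

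First, I would invoke Proposition~\ref{prop:out_linear} directly on the dependent sequences $\mathbf{X}^{(0)},\ldots,\mathbf{X}^{(N-1)}$ using the very same coefficients $\lambda_0,\ldots,\lambda_{N-1}$ from Definition~\ref{def:data_dependence}, to obtain the vector identity
$$
\sum_{i=0}^{N-1} \lambda_i\, \texttt{TF}_1(\mathbf{X}^{(i)})[n-1] \;=\; \mathbf{0}.
$$

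Second, I would unpack the definition of a single-layer transformer from \cref{sec:problem_set}: with $L=1$, the classifier reads out the $(n-1)$-th column of the last hidden layer, so $\mathbf{o}^{(i)} = \mathbf{W}_O\, \mathbf{h}^{(1),(i)}_{n-1} = \mathbf{W}_O\, \texttt{TF}_1(\mathbf{X}^{(i)})[n-1]$, where $\mathbf{W}_O \in \mathbb{R}^{C \times d'}$ is a fixed matrix with no activation applied between the attention block and the classifier output.

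Third, I would left-multiply the vector identity from step one by $\mathbf{W}_O$ and commute the sum past the matrix by linearity:
$$
\sum_{i=0}^{N-1} \lambda_i\, \mathbf{o}^{(i)} \;=\; \mathbf{W}_O \sum_{i=0}^{N-1} \lambda_i\, \texttt{TF}_1(\mathbf{X}^{(i)})[n-1] \;=\; \mathbf{W}_O\, \mathbf{0} \;=\; \mathbf{0}.
$$

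There is essentially no obstacle: all the delicate work, namely handling the ReLU $\sigma$ inside attention by splitting indices into $\mathcal{I}_+$ and $\mathcal{I}_-$ and matching token multisets position-by-position so that the attention contributions cancel, was already carried out in Proposition~\ref{prop:out_linear}. The only subtlety worth flagging is that the proposition is stated at the \emph{pre-argmax} level (the vector $\mathbf{o}$), which is crucial because the final $\arg\max$ readout is nonlinear and would destroy the linear relation; the statement and its proof both respect this by working with $\mathbf{o}^{(i)}$ directly rather than with the predicted class $\hat{y}^{(i)}$. So the proposition is really a one-line corollary of Proposition~\ref{prop:out_linear} together with the linearity of $\mathbf{W}_O$.
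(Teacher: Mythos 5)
Your proposal is correct and follows exactly the paper's own argument: apply Proposition~\ref{prop:out_linear} to the dependent sequences, then left-multiply the resulting vector identity by the linear classifier matrix $\mathbf{W}_O$ and use linearity to identify $\mathbf{W}_O\,\texttt{TF}_1(\mathbf{X}^{(i)})[n-1]$ with $\mathbf{o}^{(i)}$. Your remark that the relation must be stated at the pre-$\arg\max$ level is a helpful clarification, but the substance of the proof is the same one-line corollary the paper gives.
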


\begin{proof}
    By directly applying \cref{prop:out_linear} and multiplying both sides by $\mathbf{W}_{O}$, we have:
    
    $$
 \lambda_0 \mathbf{W}_{O} (\texttt{TF}_1(\mathbf{X}^{(0)})[n-1])  + \lambda_1 \mathbf{W}_{O}(\texttt{TF}_1(\mathbf{X}^{(1)})[n-1]) + \dots + \lambda_{N-1} \mathbf{W}_{O}(\texttt{TF}_1(\mathbf{X}^{(N-1)})[n-1]) = \mathbf{W}_{O} \mathbf{0}.
 $$
In a single layer transformer, the prediction is conducted by a linear projection $\mathbf{W}_{O} \in \mathbb{R}^{c \times d'}$. Therefore, we can replace $\mathbf{W}_{O} \texttt{TF}_1(\mathbf{X}^{(i)}[n-1])$ with $\mathbf{o}^{(i)}$, thus completing the proof.
\end{proof}

According to \cref{prop:pre_linear}, the prediction of a single layer transformer for certain dependent sequences should be linearly dependent. For instance, let's consider the four sequences: $(\mathbf{a},\mathbf{b},\mathbf{r}), (\mathbf{b},\mathbf{a},\mathbf{r}), (\mathbf{a},\mathbf{a},\mathbf{r}), (\mathbf{b},\mathbf{b},\mathbf{r})$. In this case, the final predictions $\mathbf{o}_0,\mathbf{o}_1,\mathbf{o}_2,\mathbf{o}_3$ will satisfy the equation $\mathbf{o}_0 + \mathbf{o}_1 = \mathbf{o}_2 + \mathbf{o}_3$ for any single layer transformer. This observation implies that the labels of these sequences are restricted for a single layer transformer to fit all of them. We will explain later in \cref{sec:prf_lsa_fail} and \cref{sec:prf_tmplt_fail} why this leads to the failure of the single layer transformer in reasoning and generalization. Besides, we extend \cref{prop:pre_linear} to from ReLU attention to softmax attention for single layer single head transformer in \cref{sec:soft_max}.

\section{Proofs for \cref{sec:mem_main}}
\label{sec:prof_mem}
In this section, we will prove the results in  \cref{sec:mem_main}, which asserts that a single-layer transformer with a sufficient number of attention heads can effectively memorize sequences with distinct labels. We establish this by first proving the existence of a linear projection layer that can map any non-parallel vectors with distinct labels. Subsequently, we provide evidence that with a sufficient number of attention heads, there exists a transformer can map any sequences to non-parallel vectors. 

In the following section, we will first show that there exists a single-layer transformer with $n$ heads, $d' = \max\{nd, d + n\}$, that any input sequence $\mathbf{H} \in\mathbb{R}^{d' \times n}$ can be mapped to a corresponding distinct label so that the memorizing task can be perfectly performed, our construction procedure can be formulated as follows:

$$
\mathbf{H} = \begin{NiceArray}{\left\lgroup cccc \right\rgroup }
  \mathbf{x}_0 &   \cdots & \mathbf{x}_{k-1} & \mathbf{0} \\
  0 &   \cdots & 0 & 1 \\
  \mathbf{p}_0 & \cdots & \mathbf{p}_{k-1} & \mathbf{p}_{k } \\
  \mathbf{0}& \cdots & \mathbf{0} & \mathbf{0}  \\
\CodeAfter
\OverBrace[shorten,yshift=3pt]{1-1}{1-3}{\scriptstyle{\substack{sequence \\ \textit{with } k \textit{ tokens}}}}
\OverBrace[shorten,yshift=3pt]{1-4}{1-4}{\scriptstyle{\substack{response \\ sign}}} \
\end{NiceArray} \xrightarrow[\cref{lem:seq_mapping}] {\texttt{TF}, \text{ last token}} \begin{NiceArray}{\left\lgroup c \right\rgroup }
  \mathbf{x}_0 \\
  \vdots \\
  \mathbf{x}_{k-1} \\
  \mathbf{0}\\
\end{NiceArray}  \xrightarrow[\cref{lem:vector_mapping}]{\text{linear classifier}} \begin{NiceArray}{\left\lgroup c \right\rgroup l}
  o_0 \\
  \vdots \\
  o_j & \scriptstyle{\substack{\text{max element} \\ (\text{distinct } j \text{ for each sequence})}} \\ 
  \vdots \\
  o_{C-1} \\
\end{NiceArray}
$$

We first deliver the following lemma, which proves that the linear classify can perfectly map the representation of any input sequence to the desired labels.
\begin{lemma}
\label{lem:vector_mapping}
Let $\{(\mathbf{x}_0,y_0),\dots,(\mathbf{x}_{N-1},y_{N-1})\} \subset \mathbb{R}^{d \times N} \times [N]^{N}$ be a dataset of $N$ vector-label pairs, where every two vectors $\mathbf{x}_i,\mathbf{x}_j$ are linearly independent, and the labels of each vector are different. Then, there exists a linear projector $\mathbf{W} \in \mathbb{R}^{d \times N}$ such that:
    $$
    (\mathbf{x}_i \mathbf{W} )[y_i] > (\mathbf{x}_i \mathbf{W} )[j] \quad \forall j \in [N], j \neq y_i, \forall i \in [N].
    $$
\end{lemma}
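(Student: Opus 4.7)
The plan is to exhibit $\mathbf{W}$ by an explicit, coordinate-free construction. Since the labels $y_0,\dots,y_{N-1}$ are distinct elements of $[N]$, the map $i\mapsto y_i$ is a permutation of $[N]$; by relabeling the columns of $\mathbf{W}$ I may assume without loss of generality that $y_i=i$ for every $i$. The task then reduces to producing columns $\mathbf{w}_0,\dots,\mathbf{w}_{N-1}$ of $\mathbf{W}$ so that $\langle \mathbf{x}_i,\mathbf{w}_i\rangle > \langle \mathbf{x}_i,\mathbf{w}_j\rangle$ for every $i$ and every $j\neq i$.

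The construction I would take is $\mathbf{w}_i := \mathbf{x}_i/\|\mathbf{x}_i\|$, the unit vector in the direction of $\mathbf{x}_i$; this is well-defined because pairwise linear independence forces each $\mathbf{x}_i$ to be nonzero. Then $\langle \mathbf{x}_i,\mathbf{w}_i\rangle = \|\mathbf{x}_i\|$, while for $j\neq i$ the Cauchy--Schwarz identity gives $\langle \mathbf{x}_i,\mathbf{w}_j\rangle = \|\mathbf{x}_i\|\cos\theta_{ij}$, where $\theta_{ij}$ is the angle between $\mathbf{x}_i$ and $\mathbf{x}_j$.

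The key step is to invoke the strict form of Cauchy--Schwarz: linear independence of the pair $\{\mathbf{x}_i,\mathbf{x}_j\}$ is precisely the statement that the two vectors are not real scalar multiples of one another, which rules out $\cos\theta_{ij}=\pm 1$. Hence $\cos\theta_{ij}<1$ strictly, and therefore $\langle \mathbf{x}_i,\mathbf{w}_j\rangle < \|\mathbf{x}_i\| = \langle \mathbf{x}_i,\mathbf{w}_i\rangle$, which is the inequality demanded by the lemma.

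The main (and admittedly mild) obstacle is recognizing that the correct hypothesis to invoke is pairwise linear independence rather than linear independence of the whole set. A more classical route would be to take $\mathbf{W}$ to be a right inverse of the stacked data matrix, so that $\mathbf{x}_i\mathbf{W}$ equals the $y_i$-th standard basis vector of $\mathbb{R}^N$; but this would require the entire family $\{\mathbf{x}_0,\dots,\mathbf{x}_{N-1}\}$ to be linearly independent, and in particular $d\geq N$. The normalization argument sketched above sidesteps both conditions and works in any ambient dimension provided only the weaker pairwise assumption holds.
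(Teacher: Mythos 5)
Your proof is correct, and it takes a genuinely different route from the paper's. The paper proceeds by induction on $N$: it solves a $2\times d$ linear system for the base case, then adds one column at a time, initializing the new column $\mathbf{w}_k$ as a copy of the column $\mathbf{w}_{i_k}$ currently winning on $\mathbf{x}_k$ and adding a small perturbation $\Delta\mathbf{w}_k$ (chosen via the pairwise independence of $\mathbf{x}_{i_k}$ and $\mathbf{x}_k$, then rescaled below a margin $\epsilon$) so that the new point is separated without disturbing the earlier classifications. Your construction is a one-shot closed form: after reducing to $y_i=i$ by permuting columns, set $\mathbf{w}_j=\mathbf{x}_j/\|\mathbf{x}_j\|$ and observe that $\langle\mathbf{x}_i,\mathbf{w}_j\rangle=\|\mathbf{x}_i\|\cos\theta_{ij}<\|\mathbf{x}_i\|=\langle\mathbf{x}_i,\mathbf{w}_i\rangle$, where the strict inequality is exactly the equality case of Cauchy--Schwarz applied to the pairwise-independence hypothesis. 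Both arguments use only pairwise (not joint) linear independence and work in any ambient dimension, so neither is more general; what yours buys is brevity, an explicit formula for $\mathbf{W}$ (the classifier is simply cosine-similarity matching against normalized templates), and the avoidance of the paper's somewhat delicate margin-and-rescaling bookkeeping. What the paper's incremental argument buys is an online flavor --- one can extend an existing classifier to a new point without recomputing the earlier columns --- though that property is not needed anywhere downstream. Your closing remark correctly distinguishes this lemma from the right-inverse construction, which would require joint independence and $d\geq N$.
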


\begin{proof}
    
Considering the result of the linear projector as the prediction of the probability of the sequence belonging to each class $1,\dots,N$, it always predicts the highest possibility at the index of the corresponding label. In this case, the model will achieve a perfect accuracy rate. We construct $\mathbf{W} = [\mathbf{w}_0,\mathbf{w}_1,\dots,\mathbf{w}_{N-1}]$ by assigning the value $\mathbf{w}$ at each column one by one. WLOG, we let the data be $
\{(\mathbf{x}_0,0),(\mathbf{x}_1,1),\dots,(\mathbf{x}_{N-1},{N-1})\} 
$.

We started by consider the case $N = 2$, because $\mathbf{x}_0,\mathbf{x}_1$ are linearly independent, for any $\mathbf{a},\mathbf{b} \in \mathbb{R}^{2}$, \cref{eq:2d_solution} have at least one solution for $\begin{bmatrix}
\mathbf{w}_0 & \mathbf{w}_1
\end{bmatrix}$.

\begin{equation}\label{eq:2d_solution}
\begin{NiceArray}{\left\lgroup c \right\rgroup}
    \mathbf{x}_0\\
    \mathbf{x}_1
\end{NiceArray} \begin{NiceArray}{\left\lgroup cc \right\rgroup}
  \mathbf{w}_0 & \mathbf{w}_1
\end{NiceArray} = \begin{NiceArray}{\left\lgroup c \right\rgroup}
    \mathbf{a}\\
    \mathbf{b}\\
\end{NiceArray},
\end{equation}

we can let $\mathbf{a} = [1,0],\mathbf{b} = [0,1]$ and solve the above equation, the corresponding solution $\mathbf{w}_0,\mathbf{w}_1$ can correctly classify $\{ (\mathbf{x}_0,0),(\mathbf{x}_1,1) \}$.

If $N = k$ satisfies the above condition, which means that there exists $[\mathbf{w}_0,\mathbf{w}_1,\dots,\mathbf{w}_{k-1}]$ that can correctly classify $\{ (\mathbf{x}_0,0),(\mathbf{x}_1,1),\dots,(\mathbf{x}_{k-1},{k-1}) \}$, then consider a new data point $(\mathbf{x}_{k},k)$. We construct $\mathbf{w}_{k}$ as follows:

First, we compute the result $\mathbf{o}^{(k)}_{:k} = \mathbf{x}_{k} [\mathbf{w}_0,\mathbf{w}_1,\dots,\mathbf{w}_{k-1}]$ and choose the highest index $i_{k} = \text{argmax}_{i} \mathbf{o}^{(k)}_{:k}$. Then, we consider the result if we set $\mathbf{w}_{k} = \mathbf{w}_{i_k} $:
$$
\begin{NiceArray}{\left\lgroup c \right\rgroup}
    \mathbf{x}_0\\
\vdots\\
\mathbf{x}_{i_k}\\
\vdots\\
\mathbf{x}_{k}
\end{NiceArray}\begin{NiceArray}{\left\lgroup cccccc \right\rgroup}
\mathbf{w}_0 & \dots & \mathbf{w}_{i_k} & \dots & \mathbf{w}_{k-1} & \mathbf{w}_{i_k} \\
\CodeAfter
\OverBrace[shorten,yshift=3pt]{1-1}{1-5}{\scriptstyle{k}}
\UnderBrace[shorten,yshift=3pt]{1-6}{1-6}{\scriptstyle{\substack{\text{init } \\ \mathbf{w}_{k}= \mathbf{w}_{i_k} }}}
\end{NiceArray}
= \begin{NiceArray}{\left\lgroup cccccc \right\rgroup}
o_{o,o}&\dots& o_{0,i_k} & \dots& o_{0,k-1} & o_{0,i_k}\\
\vdots & \ddots & \vdots & \ddots & \vdots & \vdots\\
o_{i_k,0}&\dots &o_{i_k,i_k} & \dots& o_{i_k,k-1} & o_{i_k,i_k}\\
\vdots & \ddots & \vdots & \ddots & \vdots & \vdots\\
o_{k-1,0}&\dots &o_{k-1,i_k} & \dots& o_{k-1,k-1} & o_{k-1,i_k}\\
o_{k,0}&\dots& o_{k,i_k} & \dots& o_{k,k-1} & o_{k,i_k}\\
\end{NiceArray},
$$

We can see that appending $\mathbf{w}_{i_k}$ wouldn't change the prediction result for $\{ \mathbf{x}_0,\dots,\mathbf{x}_{i_k - 1},\mathbf{x}_{i_k + 1},\dots,\mathbf{x}_{k-1} \}$. Therefore, we just need to find an extra $\Delta  \mathbf{w}_{k}$ such that $\mathbf{w}_{k} = \mathbf{w}_{i_k} + \Delta  \mathbf{w}_{k}$ remains the prediction for $\{ \mathbf{x}_0,\dots,\mathbf{x}_{i_k - 1},\mathbf{x}_{i_k + 1},\dots,\mathbf{x}_k \}$ unchanged (\cref{eq:old_class}), and can successfully classify $(\mathbf{x}_{i_k} ,i_k)$ and $(\mathbf{x}_{k}, k)$ (\cref{eq:new_class}).

\begin{equation}\label{eq:old_class}
\begin{NiceArray}{\left\lgroup cccccc \right\rgroup}
\mathbf{x}_0 & \dots & \mathbf{x}_{i_k - 1} & \mathbf{x}_{i_k + 1} & \cdots & \mathbf{x}_{k-1}
\end{NiceArray}^\top
\Delta \mathbf{w}_{k} \prec \begin{NiceArray}{\left\lgroup cccccc \right\rgroup}
\epsilon & \dots & \epsilon & \epsilon & \cdots & \epsilon
\end{NiceArray}^\top,
\end{equation}

\begin{equation}\label{eq:new_class}
\begin{cases}
\mathbf{x}_{i_k} \Delta \mathbf{w}_{k} &< 0 \\
\mathbf{x}_{k} \Delta \mathbf{w}_{k} &> 0
\end{cases},
\end{equation}

where $\epsilon = \min_{i \in [k]_{\nmid i_k}}\{p_{i,i} - p_{i,i_k}\}$, which is also a maximum vibration that can remain the prediction result for $\{ \mathbf{x}_0,\dots,\mathbf{x}_{i_k - 1},\mathbf{x}_{i_k + 1},\dots,\mathbf{x}_{k-1} \}$ unchanged. Such $\Delta \mathbf{w}_{k}$ exists as $\mathbf{x}_{i_k}$ and $\mathbf{x}_{k}$ are linearly independent. Therefore, we can first solve the equation 
\begin{equation}
\begin{cases}
\mathbf{x}_{i_k} \Delta \mathbf{w}_{k} &= -1\\
\mathbf{x}_{k} \Delta \mathbf{w}_{k} &= 1
\end{cases} ,
\end{equation}

and then rescale the $\Delta \mathbf{w}_{k} = \frac{1}{M} \Delta \mathbf{w}_{k}$ to ensure that the vibration for any other rows is less than $\epsilon$. Finally, we assign $\mathbf{w}_{k}  = \Delta \mathbf{w}_{k} + \mathbf{w}_{i_k} $, which ensures that the prediction result for $\mathbf{x}_{k}$ is $k$, while the prediction for other rows remains unchanged.

\end{proof}
Given the mapping capability of sequences to non-parallel vectors using \cref{lem:vector_mapping}, it follows that constructing a classifier layer to map each vector to a distinct label becomes straightforward. Consequently, our focus turns to the second part, where we aim to establish the validity of \cref{lem:seq_mapping}. This lemma asserts that a single-layer transformer, equipped with $n$ attention heads and a hidden size of $\max\{kd, d + k\}$, can effectively map a sequence to a non-parallel vector.

\begin{lemma}
\label{lem:seq_mapping}
Given a vocabulary $\mathcal{X}$ of non-parallel vectors, then  there exists a single-layer transformer with $n$ attention heads and hidden size $d' = \max\{k\cdot d, d + k\}$ to map all possible sequences of length $n$ (ending with a response sign) to non-parallel vectors.
\end{lemma}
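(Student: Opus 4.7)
The plan is to build a single-layer transformer whose output at the response-sign position is, up to the residual copy of the response sign itself, the vertical concatenation $[\mathbf{x}_0; \mathbf{x}_1; \ldots; \mathbf{x}_{k-1}]$ of the content tokens placed into disjoint blocks of a $kd$-dimensional scratch region of the residual stream. Once this is achieved, non-parallelism of distinct sequence outputs will reduce to the pairwise non-parallelism of the vocabulary $\mathcal{X}$. The hidden dimension is chosen as $d' = \max\{kd, d+k\}$ precisely so that this scratch region fits above the token subspace (dimension $d$) and the one-hot positional subspace (dimension $k+1$, counting the response sign) without overlap.

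First, I would use the instructive attention construction of \cref{lem:ins_attn} to design, for each $i \in [k]$, a head whose attention map at the last row (the response-sign position) is the one-hot indicator $\mathbf{e}_i^\top$, i.e.\ head $i$ places all of its attention mass on position $i$ regardless of which content token sits there. Because \cref{lem:ins_attn} produces attention maps that depend only on the positional encoding, this pattern is identical for every input sequence and, in particular, the response sign never attends to itself from these heads. Next, I would design the value matrix $\mathbf{W}_{V_i}$ to read the token subspace of position $i$ and rewrite it into the $i$-th $d$-block of the scratch region; concretely $\mathbf{W}_{V_i}$ picks out the $d$ token coordinates and embeds them into coordinates $[d+k+1+id,\; d+k+1+(i+1)d)$, absorbing the averaging factor $1/k$ as a harmless rescaling. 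The residual connection then preserves the response-sign token in the token subspace while the $k$ heads collectively fill the scratch region with $\mathbf{x}_0, \ldots, \mathbf{x}_{k-1}$ stacked one above another.

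For the non-parallelism conclusion, suppose two content sequences $(\mathbf{x}_0, \ldots, \mathbf{x}_{k-1})$ and $(\mathbf{y}_0, \ldots, \mathbf{y}_{k-1})$ produce parallel output vectors at the response-sign position. The token and positional subspaces contribute identical values for both sequences (the response sign and its position are shared), so parallelism of the full output forces $\mathbf{y}_i = \lambda \mathbf{x}_i$ for every $i$ with a single scalar $\lambda$; since each $\mathbf{x}_i, \mathbf{y}_i$ lies in the pairwise non-parallel vocabulary $\mathcal{X}$, this forces $\mathbf{y}_i = \mathbf{x}_i$ for every $i$, so the sequences coincide. Hence distinct sequences yield non-parallel outputs, which is exactly what is required to feed into \cref{lem:vector_mapping} and complete the memorization argument.

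The main obstacle I anticipate is the careful bookkeeping of the residual stream: I must verify that the token, positional, and scratch blocks remain disjoint throughout the computation so that (i) the instructive attention matrices $\mathbf{W}_{QK_i}$ act only on the positional block (as in the proof of \cref{lem:ins_attn}), (ii) the value matrices $\mathbf{W}_{V_i}$ read only from the token block and write only into their designated scratch slot, and (iii) the residual stream carries through the content tokens and the response sign without interference. The assumption $d' \ge kd$ is exactly the capacity needed to accommodate $k$ disjoint $d$-dimensional scratch slots, and the alternative $d' \ge d+k$ ensures room for the original token plus positional encoding; together they make the construction dimensionally consistent.
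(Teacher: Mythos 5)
Your overall strategy is the same as the paper's: use the instructive attention of \cref{lem:ins_attn} so that head $i$ attends from the response-sign position to position $i$, use $\mathbf{W}_{V_i}$ to rewrite that token into the $i$-th $d$-dimensional block, and conclude non-parallelism of the stacked outputs from non-parallelism of the vocabulary. However, there is a concrete gap in exactly the place you flagged as the main obstacle: the dimensional bookkeeping does not close. You place the $kd$-dimensional scratch region \emph{above} the token block (dimension $d$) and the positional block (dimension $n=k+1$) ``without overlap,'' which requires $d' \geq kd + d + k + 1$. The lemma only provides $d' = \max\{kd,\, d+k\}$, and a maximum of two quantities cannot accommodate their sum; your closing claim that the two conditions ``together make the construction dimensionally consistent'' is therefore false as stated. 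The paper avoids this by letting the scratch blocks \emph{reuse} the input coordinates (head $i$ writes into coordinates $[id,(i+1)d)$, which for small $i$ overlap the original token and positional subspaces) and then adding one extra head --- this is why the lemma asks for $n=k+1$ heads rather than your $k$ --- with $\mathbf{W}_{V} = -n\mathbf{I}$ and identity attention map, which exactly cancels the residual contribution $\mathbf{H}^{(0)}[n-1]$ so that the last column becomes precisely $[\mathbf{x}_0;\dots;\mathbf{x}_{k-1};\mathbf{0}]$ with no contamination.

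Two smaller remarks. First, because the paper cancels the residual, it must invoke non-parallelism of the vocabulary to rule out $[\mathbf{x}_0;\dots;\mathbf{x}_{k-1}] = \lambda[\mathbf{y}_0;\dots;\mathbf{y}_{k-1}]$ with $\lambda\neq 1$; your residual-preserving variant would actually make that step cleaner (a shared nonzero residual component in a coordinate block disjoint from the scratch region forces $\lambda=1$ immediately), but only at the cost of the larger $d'$ noted above. Second, if you want to keep the residual \emph{and} stay within $d'=\max\{kd,d+k\}$, you would have to let the scratch blocks overlap the response-sign and positional coordinates and then argue that the resulting fixed additive perturbation does not destroy non-parallelism --- an argument you do not make. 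Either adopt the paper's cancellation head or enlarge $d'$; as written, the construction does not fit in the stated hidden dimension.
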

\begin{proof}

To achieve this, we construct a transformer with $n$ attention heads, where each head processes only the token at its corresponding position $i$, for the first $k$ heads, we process the sequence for the last token (response sign) as follows:

$$
\mathbf{W}_{V_i} \begin{NiceArray}{\left\lgroup cccc \right\rgroup }
  \mathbf{x}_0 &   \cdots & \mathbf{x}_{k-1} & \mathbf{0} \\
  0 &   \cdots & 0 & 1 \\
  \mathbf{p}_0 & \cdots & \mathbf{p}_{k-1} & \mathbf{p}_{k } \\
  \mathbf{0}& \cdots & \mathbf{0} & \mathbf{0}  \\
\end{NiceArray}, \quad \boldsymbol{\alpha}_{i}[:,n-1] = n \begin{NiceArray}{\left\lgroup c \right\rgroup }
  \mathbf{0}_{i\cdot d \times 1} \\
  \mathbf{x}_i \\
  \mathbf{0}_{(d' -(i+1) d) \times 1}\\
\end{NiceArray},
$$

where $\boldsymbol{\alpha}_{i} \in \mathbb{R}^{n \times n} $ is the attention map for the $i$-th head, we can assign $\boldsymbol{\alpha}_{i}[:,n-1] = [\underbrace{0,\dots,0}_{i},1,\underbrace{0,\dots,0}_{n-(i+1)}]^\top$ using the construction for instructive attention \cref{lem:ins_attn}, and we set $\mathbf{W}_{V_i}$ to copy the token at position $i$ in a disentangled manner: 

$$
\mathbf{W}_{V_i} = n  \begin{NiceArray}{\left\lgroup cc \right\rgroup}
\mathbf{0}_{i \cdot d \times d} & \mathbf{0}_{i \cdot d  \times (d' - d)} \\
\mathbf{I}_{d \times d} & \mathbf{0}_{d \times (d' - d)} \\
\mathbf{0}_{(d' -(i+1) d) \times d} & \mathbf{0}_{(d' -(i+1) d) \times (d' - d)} \\
\end{NiceArray}.
$$

Then we set the $k$-th head to neutralize the information from the residue: 

$$
\mathbf{W}_{V_k} = -n \mathbf{I} \quad \boldsymbol{\alpha}_{k} = \mathbf{I} .
$$

As a result, the processed token representation at the last position is:

\begin{equation}
\begin{split}
\mathbf{H}^{(1)}[n-1] &= \mathbf{H}^{(0)}[n-1] + \frac{1}{n} \sum_{i = 1}^{n} \mathbf{W}_{V_{i}}  \mathbf{H}^{(0)} \boldsymbol{\alpha}_i[:,n-1] \\
                      &= \mathbf{H}^{(0)}[n-1] \underbrace{- \mathbf{H}^{(0)}[n-1]}_{k\text{-th head}} +\underbrace{ \sum_{i = 0}^{k-1} \mathbf{W}_{V_{i}}  \mathbf{H}^{(0)} \boldsymbol{\alpha}_i[:,n-1] }_{0,\dots,k-1 \text{-head} }\\
                      &= \begin{NiceArray}{\left\lgroup cccc \right\rgroup }
  \mathbf{x}_0 &
  \cdots &
  \mathbf{x}_{k-1} &
  \mathbf{0}
  \end{NiceArray}^\top
 \end{split}.
\end{equation}

Since the tokens in the vocabulary $\mathcal{X}$ are non-parallel vectors, their concatenation should also be non-parallel for different tokens. Therefore, we have successfully constructed a single-layer transformer with $n$ heads that can map sequences to non-parallel vectors by concatenating the tokens based on their positional information.
\end{proof}
Based on the aforementioned derivation, we can construct our final single-layer transformer as follows: first, employ the attention weights from \cref{lem:seq_mapping} to reshape sequences into no-parallel vectors , and then employ the methodology from \cref{lem:vector_mapping} to construct a classifier layer that maps each vector to a distinct label.

\section{Proofs for \cref{sec:icl_main}}

In this section, we will prove the main results in \cref{sec:icl_main}, which shows that the transformer requires at least two layers to perform successful reasoning. We will first prove that the reasoning task can never be perfectly resolved using single-layer transformer, no matter how many heads are included. Then, we show that two-layer transformers are capable of performing the designed reasoning tasks with perfect accuracy, by proving the existence of a set of attention weight matrices.

\subsection{Proofs of \cref{prop:lsa_fail}}
\label{sec:prf_lsa_fail}
\begin{proof}

In order to prove the inability of the single-layer transformer, we will design a specific task and show that any single-layer transformer cannot achieve perfect test accuracy. In particular, We consider a task that contains  $2$ question $\mathbf{a},\mathbf{b}$ and $2$ answer $\mathbf{x},\mathbf{y}$, take the response sign as \texttt{"="}. Then, the dataset $\mathcal{D}_{\texttt{toy-icl}} = \{\mathbf{E}^{(0)},\mathbf{E}^{(1)},\mathbf{E}^{(2)},\mathbf{E}^{(3)}\}$ can be denote as follows:

\begin{center}
\begin{tcolorbox}[text width = 7cm]
\begin{enumerate}
    \item input : $\mathbf{a} = \mathbf{x}$ $\mathbf{b} = \textbf{y}$ $\mathbf{a} = $ target : $\textbf{x}$
    \item input : $\mathbf{a} = \textbf{y}$ $\mathbf{b} = \mathbf{x}$ $\mathbf{b} = $ target : $\textbf{x}$
    \item input : $\mathbf{a} = \textbf{y}$ $\mathbf{b} = \textbf{x}$ $\mathbf{a} = $ target : $\textbf{y}$
    \item input : $\mathbf{a} = \mathbf{x}$ $\mathbf{b} = \textbf{y}$ $\mathbf{b} = $ target : $\textbf{y}$
\end{enumerate}
\end{tcolorbox}
\end{center}

It can be shown that these 4 sequences are \textbf{dependent}, as we defined in \cref{def:data_dependence}.  To verify this, we first have that all these sequences end with the same token \texttt{"="}, then, it holds that

$$
\mathbf{E}^{(0)} \oplus \mathbf{E}^{(1)} = ([\mathbf{a},\mathbf{a}],[\texttt{"="},\texttt{"="}],[\mathbf{x},\mathbf{y}],[\mathbf{b},\mathbf{b}],[\texttt{"="},\texttt{"="}],[\mathbf{x},\mathbf{y}],[\mathbf{a},\mathbf{b}]) = \mathbf{E}^{(2)} \oplus \mathbf{E}^{(3)}.
$$

Then, for any single layer transformer \texttt{TF}, we can leverage \cref{prop:out_linear} and then get that the output representations for the last token corresponding to all sequences are linearly dependent, i.e.,

$$
\mathbf{o}^{(0)} + \mathbf{o}^{(1)} = \mathbf{o}^{(2)} + \mathbf{o}^{(3)}.
$$

Then we are ready to show that there doesn't exist a single-layer transformer that can reason all these $4$ examples. We will prove this by contradiction. First , suppose that there exists such a transformer \texttt{TF} that can correctly reason all these examples, then the output of the transformer will have the maximum output corresponding to the desired answer. Let $i_x$ and $i_y$ be the indices of the transformer output corresponding to the target $\mathbf x$ and $\mathbf y$ respectively, it shall hold that 

\begin{equation}\label{eq:toy_1}
\begin{split}
    \mathbf{o}^{(0)}[i_x] > \mathbf{o}^{(0)}[i_y],\quad
    \mathbf{o}^{(1)}[i_x] > \mathbf{o}^{(1)}[i_y],\quad
    \mathbf{o}^{(2)}[i_y] > \mathbf{o}^{(2)}[i_x],\quad
    \mathbf{o}^{(3)}[i_y] > \mathbf{o}^{(3)}[i_x].
\end{split}
\end{equation}

Besides, by linear dependency, we have
\begin{equation}\label{eq:toy_2}
\begin{split}
     \mathbf{o}^{(0)} -  \mathbf{o}^{(2)} =  \mathbf{o}^{(3)} -  \mathbf{o}^{(1)}.
\end{split}
\end{equation}
Combining \cref{eq:toy_1} and \cref{eq:toy_2}, we have the following contradiction:

\begin{equation}\label{eq:toy_3}
\begin{split}
    (\mathbf{o}^{(0)} - \mathbf{o}^{(2)})[i_x] >(\mathbf{o}^{(0)} - \mathbf{o}^{(2)})[i_y] > (\mathbf{o}^{(3)} - \mathbf{o}^{(1)})[i_y] >(\mathbf{o}^{(3)} - \mathbf{o}^{(1)})[i_x] >(\mathbf{o}^{(0)} - \mathbf{o}^{(2)})[i_x].
\end{split}
\end{equation}

Therefore, this implies that no single-layer transformer can correctly reason all of these four sequences. In other words, if these four sequences appear with equal probability, the reasoning accuracy achieved by any single-layer transformer will be upper bounded by $3/4$.

\end{proof}
\subsection{Proof for \cref{prop:lsa_succ}}
\label{sec:cons_icl}

In this section we will show that a 2-layer transformer can perfectly perform the reasoning tasks, when provided with proper weights. In particular,  we will  construct such a transformer by following a copy-matching process, that is, the first layer of transformer copies the answer to the corresponding question ahead of them, and then the second layer searches these question-answer pairs and chooses the one with the highest similarity, i.e. having the same token, then the classifier layer projects the representation embedding to the answer.

\begin{proof}
Recall the data construction process, we consider an input sequence with $k$ question-answer pairs, which is denoted as $\mathbf{H}^{(0)} \in \mathbb{R}^{d' \times n}$ ($n = 3 k + 2$):

$$
\mathbf{H}^{(0)} = 
\begin{NiceArray}{\left\lgroup ccccccccc \right\rgroup l}

  \mathbf{q}_0 & \mathbf{0} &  \mathbf{0} & \cdots & \mathbf{q}_{k-1} & \mathbf{0} &  \mathbf{0} &\mathbf{q}_r & \mathbf{0} & \rightarrow d_1 \times n \\
  \mathbf{0} & 1 &  \mathbf{0} & \cdots & \mathbf{0} & 1 &  \mathbf{0} & \mathbf{0} & 1 & \rightarrow 1 \times n \\
  \mathbf{0} & \mathbf{0} &  \mathbf{a}_0 & \cdots & \mathbf{0} & \mathbf{0} &  \mathbf{a}_{k-1} & \mathbf{0} & \mathbf{0} & \rightarrow d_2 \times n \\
  \mathbf{p}_0 & \mathbf{p}_1 &\mathbf{p}_2 &\cdots & \mathbf{p}_{3k - 3} & \mathbf{p}_{3k - 2} &\mathbf{p}_{3k-1} & \mathbf{p}_{3k} & \mathbf{p}_{3k+1} & \rightarrow n \times n \\
  \mathbf{0} & \mathbf{0} & \mathbf{0} & \cdots & \mathbf{0} & \mathbf{0} & \mathbf{0} &\mathbf{0} &\mathbf{0} & \rightarrow (d' - n - d) \times n \\
\CodeAfter
\OverBrace[shorten,yshift=3pt]{1-1}{1-3}{\scriptstyle{Q-R-A}},
\end{NiceArray}
$$
where $[\mathbf q_k; \mathbf{0}]$ and $[ \mathbf{0}; \mathbf{a}_k]$ denote the embeddings for the $k$-th question token and $k$-th answer token respectively, $[\mathbf{0};1;\mathbf{0}]$ denotes the embedding for the response sign, $r \in [k]$ is a random choose question index. Without loss of generality, we will set $r = 0$ in the following proof to illustrate how our construction works:

$$
\mathbf{H}^{(0)} = 
\begin{NiceArray}{\left\lgroup cccccc \right\rgroup }

  \mathbf{q}_0 & \mathbf{0} &  \mathbf{0} & \cdots & \mathbf{q}_0 & \mathbf{0} \\
  \mathbf{0} & 1 &  \mathbf{0} & \cdots & \mathbf{0} & 1\\
  \mathbf{0} & \mathbf{0} &  \mathbf{a}_0 & \cdots  & \mathbf{0} & \mathbf{0}\\
  \mathbf{p}_0 & \mathbf{p}_1 &\mathbf{p}_2 &\cdots & \mathbf{p}_{3k} & \mathbf{p}_{3k + 1} \\
  \mathbf{0} & \mathbf{0} & \mathbf{0} & \cdots &\mathbf{0} &\mathbf{0} \\
\CodeAfter
\OverBrace[shorten,yshift=3pt]{1-1}{1-3}{\scriptstyle{Q-R-A}}
\end{NiceArray} \xrightarrow[\substack{\text{copy among}\\ \text{Q-A block}}]{\texttt{TF}_1}  \begin{NiceArray}{\left\lgroup cccccc \right\rgroup }

  \mathbf{q}_0 & \mathbf{q}_0 &  \mathbf{q}_0 & \cdots & \mathbf{q}_0 & \mathbf{q}_0 \\
  1 & 1 &  1 & \cdots & 1 & 1\\
  \mathbf{a}_0  & \mathbf{a}_0 & \mathbf{a}_0 & \cdots  & \mathbf{0} & \mathbf{0}\\
  \vdots & \vdots &\vdots &\ddots & \vdots & \vdots
\CodeAfter
\OverBrace[shorten,yshift=3pt]{1-1}{1-1}{\scriptstyle{Q-R-A}}
\OverBrace[shorten,yshift=3pt]{1-6}{1-6}{\scriptstyle{Q-R}}
\end{NiceArray} \xrightarrow[\substack{\text{match col}\\ \text{with same Q}}] {\texttt{TF}_2 \text{ last token}} \begin{NiceArray}{\left\lgroup c \right\rgroup }

  \vdots\\
  \mathbf{a}_0 \\
  \vdots \\
\end{NiceArray}
$$

First we construct an instructive attention matrix  $\boldsymbol{\alpha}^{(1)} \in \mathbb{R}^{n \times n }$ by set $\mathbf{W}_{QK}^{(1)}$ follow the method in \cref{lem:ins_attn}:

$$
\boldsymbol{\alpha}^{(1)} = \begin{NiceArray}{\left\lgroup cccc \right\rgroup}
\mathbf{A}_{3 \times 3} & \hspace{1cm} & \Block[c]{2-2}<\LARGE>{0} &  \\
& \Ddots^{k \text{ times}} &  &\\
 \Block[c]{2-2}<\LARGE>{0} & & &\\
&  &  &  \mathbf{A}_{2 \times 2}\\
\end{NiceArray} \quad \mathbf{A}_{3 \times 3} = \begin{NiceArray}{\left\lgroup ccc \right\rgroup}
0&1&1\\
1&0&1\\
1&1&0\\
\end{NiceArray}  \quad \mathbf{A}_{2 \times 2} = \begin{NiceArray}{\left\lgroup cc \right\rgroup}
0&1\\
1&0\\
\end{NiceArray}  .
$$

And we set $\mathbf{W}_{V}^{(1)} = \mathbf{I}$, then the first layer only performs the copying operation based on the positions of tokens, and the output of the first layer $\mathbf{H}^{(1)}$ becomes

$$
\mathbf{H}^{(1)} = \mathbf{H}^{(0)} + \mathbf{H}^{(0)} \boldsymbol{\alpha}^{(1)} = \begin{NiceArray}{\left\lgroup ccccccc \right\rgroup}

  \mathbf{q}_0 & \mathbf{q}_0 &  \mathbf{q}_0 & \cdots  &\mathbf{q}_{k-1} &\mathbf{q}_0 & \mathbf{q}_0  \\
  1 & 1 &  1 & \cdots & 1 &  1 & 1 \\
  \mathbf{a}_0 & \mathbf{a}_0 &  \mathbf{a}_0 & \cdots & \mathbf{a}_{k-1} & \mathbf{0} & \mathbf{0} \\
  \mathbf{p}_{0,1,2}  & 
  \mathbf{p}_{0,1,2} &
  \mathbf{p}_{0,1,2} &\cdots & \mathbf{p}_{3k-3,3k-2,3k-1} &
  \mathbf{p}_{3k,3k+1} & \mathbf{p}_{3k,3k+1}  \\
  \mathbf{0} & \mathbf{0} & \mathbf{0} & \cdots & \mathbf{0} & \mathbf{0}  &\mathbf{0} \\
\end{NiceArray},
$$

where $\mathbf{p}_{i,j,\dots} =\mathbf{p}_{i} +  \mathbf{p}_{i} + \dots$ means the sum of several positional encoding. In this way we copy the question, response sign and answer into a vector, making it possible for the match and carry step for the next layer.

Because our question and answer are one-hot vector, so $\mathbf{q}_i^\top \mathbf{q}_j = \begin{cases}
    1 & i = j\\
    0 & \text{otherwise}
\end{cases}$, so we can choose $\mathbf{W}^{(2)}_{QK} \in \mathbb{R}^{d' \times d'}$ as 

$$
\mathbf{W}^{(2)}_{QK} = \begin{NiceArray}{\left\lgroup cc \right\rgroup}
\mathbf{I}_{d_1 \times d_1} & \mathbf{0}_{d_1 \times (d' - d_1)} \\
\mathbf{0}_{(d' - d_1) \times d_1} & \mathbf{0}_{(d' - d_1) \times (d' - d_1)}
\end{NiceArray},
$$

and set $\mathbf{W}_{V}^{(2)} = \mathbf{I}$, then the last output for the 2-second layer $\mathbf{H}^{(2)}[n-1]$

$$
\mathbf{H}^{(2)}[n-1] = \mathbf{H}^{(1)}[n-1] + \mathbf{H}^{(1)} \sigma\left( (\mathbf{H}^{(1)})^\top \mathbf{W}^{(2)}_{QK} \mathbf{H}^{(1)}[n-1] \right) = [6 \mathbf{q}_0; 6; 3 \mathbf{a}_0;3 \mathbf{p}_{3k,3k+1} + 3 \mathbf{p}_{0,1,2} ],
$$

we then set the classification $\mathbf{W}_{O} = [\mathbf{0}_{d_2 \times (d_1 + 1)} , \mathbf{I}_{d_2}, \mathbf{0}_{d_2 \times (n -  (d_1 + d_2 + 1))}]^\top$, in this way, the prediction result of our construct transformer \texttt{TF} is 
$$
\texttt{TF}(\mathbf{H}^{(0)}) = \mathbf{a}_1 .
$$
\end{proof}

\section{Proofs for \cref{sec:tmpl_main}}
\subsection{Proofs for \cref{prop:tmplt_fail}}
\label{sec:prf_tmplt_fail}
In this section, we delve into the reasons why transformers struggle to generalize on different templates, similar to their limitations in reasoning, the primary constraint on the transformer's ability arises from the linear dependence of the predicted results, as mentioned in \cref{prop:out_linear}. Let's recall that a transformer is considered capable of generating output for a specific template if it can accurately classify \textbf{all} sequences generated by that template, so we achieve this by aggregating the prediction results and demonstrating that their sum must be linearly dependent, leading the failure of single-layer of transformer in generalization.


\begin{lemma}
\label{lem:template_com}
For any template $\mathbf{t}$  and \textbf{all possible} sequences set generated by any : $\mathcal{D}_{\texttt{tmpl}}^{(\mathbf{t})} = \{ \mathbf{X}_0,\mathbf{X}_1,\dots,\mathbf{X}_{n^{(\mathbf{t})}-1} \}$, each real world token $\mathbf{x} \in \mathcal{X}$ occur in $\mathcal{D}_{\texttt{tmpl}}^{(\mathbf{t})}$ at each position $\frac{n^{(\mathbf{t})}}{|\mathcal{X}|} $ times.
\end{lemma}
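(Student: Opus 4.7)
The plan is a direct combinatorial counting argument based on Definition~\ref{def:template}. Let $\mathbf{t} \in \mathcal{W}^l$ be the given template, and let $k$ denote the number of \emph{distinct} wildcards that actually occur in $\mathbf{t}$. Since a substitution map $s$ is by definition an injection $\mathcal{W} \to \mathcal{X}$, only its values on these $k$ active wildcards affect $\texttt{sub}(\mathbf{t},s)$, and two such restrictions give distinct sequences precisely because $s$ is injective. I would first argue this identification cleanly so that each element of $\mathcal{D}_{\texttt{tmpl}}^{(\mathbf{t})}$ corresponds to exactly one injection from the $k$ active wildcards into $\mathcal{X}$, which pins down $n^{(\mathbf{t})} = |\mathcal{X}|!/(|\mathcal{X}|-k)!$.

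Next, for a fixed position $i \in [l]$ and a fixed real-world token $\mathbf{x} \in \mathcal{X}$, I would count the number of such injections $s$ satisfying $s(t_i) = \mathbf{x}$. Committing to $s(t_i) = \mathbf{x}$ uses up one token, leaving the remaining $k-1$ active wildcards to be mapped injectively into $\mathcal{X} \setminus \{\mathbf{x}\}$, which can be done in $(|\mathcal{X}|-1)!/(|\mathcal{X}|-k)!$ ways. Dividing by $n^{(\mathbf{t})}$ gives a ratio of exactly $1/|\mathcal{X}|$, so $\mathbf{x}$ occurs in position $i$ exactly $n^{(\mathbf{t})}/|\mathcal{X}|$ times, uniformly across both $i$ and $\mathbf{x}$.

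Because the above count is independent of $i$ and $\mathbf{x}$, no case analysis is needed. The only real bookkeeping subtlety, and likely the main obstacle to a clean write-up, is handling the inactive wildcards of $\mathcal{W}$ (those that do not appear in $\mathbf{t}$): their images do not influence $\texttt{sub}(\mathbf{t},s)$, so one must either quotient them out upfront or verify that they contribute a common multiplicative factor that cancels in the ratio. Once that is settled, the lemma follows immediately, and it will plug into the proof of Theorem~\ref{prop:tmplt_fail} by delivering the uniform per-position, per-token multiplicities needed to produce a dependent family of sequences in the sense of Definition~\ref{def:data_dependence}.
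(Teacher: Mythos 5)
Your proposal is correct, but it takes a different route from the paper. The paper's proof is a one-line symmetry argument: for any permutation $\pi:\mathcal{X}\to\mathcal{X}$ and any substitution $s$, the map $\pi\circ s$ is again a valid substitution, so $\mathcal{D}_{\texttt{tmpl}}^{(\mathbf{t})}$ is closed under token permutations; since the permutation group acts transitively on $\mathcal{X}$, every token must appear equally often at every position, and dividing $n^{(\mathbf{t})}$ by $|\mathcal{X}|$ gives the count. You instead enumerate explicitly: identifying sequences with injections of the $k$ active wildcards into $\mathcal{X}$ gives $n^{(\mathbf{t})} = |\mathcal{X}|!/(|\mathcal{X}|-k)!$, and fixing $s(t_i)=\mathbf{x}$ leaves $(|\mathcal{X}|-1)!/(|\mathcal{X}|-k)!$ choices, whose ratio is $1/|\mathcal{X}|$ as required. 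Your count checks out against the paper's worked example ($\mathcal{X}=\{a,b,c\}$ with templates $\alpha\alpha$ and $\alpha\beta$ giving $3$ and $6$ sequences and per-position frequencies $1$ and $2$). What your approach buys is the explicit closed form for $n^{(\mathbf{t})}$ and a fully concrete verification of the uniform frequency; what it costs is exactly the bookkeeping you flag about inactive wildcards (which does work out, since when $|\mathcal{W}|\le|\mathcal{X}|$ every injection on the active wildcards extends to a full substitution map, and the restriction map on sequences is a bijection). The paper's symmetry argument sidesteps that bookkeeping entirely and is the more economical proof, but neither the exact value of $n^{(\mathbf{t})}$ nor the extra rigor is needed downstream in Lemma~\ref{lem:template_seq_com}, so both proofs serve equally well.
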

\begin{proof}
The observation in \cref{lem:template_com} is straightforward, as the sequence generated by the same template is token-symmetric: for any sequence $(\mathbf{x}_0,\mathbf{x}_1,\dots,\mathbf{x}_{n-1})$ and any permutation $\pi: \mathcal{X} \rightarrow \mathcal{X}$, $(\mathbf{x}_0,\mathbf{x}_1,\dots,\mathbf{x}_{n-1})$ and $(\pi(\mathbf{x}_0),\pi(\mathbf{x}_1),\dots,\pi(\mathbf{x}_{n-1}))$ belong to the same template. Therefore, for each position, each token should occur in the $\mathcal{D}_{\texttt{tmpl}}^{(\mathbf{t})}$ with the same frequency, resulting in $\frac{n^{(\mathbf{t})}}{|\mathcal{X}|} $ occurrences for each token at any position.
\end{proof}
\textbf{Example}: Let $\mathcal{X} = \{ a,b,c \}$ and two templates of length 2 $\{\alpha \alpha, \alpha \beta \}$. Then in total there are $3$ sequences generated by $\alpha \alpha$: $\{ aa,bb,cc \}$, each token occurs $\frac{n^{(\mathbf{t})}}{|\mathcal{X}|} = \frac{3}{3} = 1$ time at each position. For template $\alpha \beta$, there are $6$ sequences $\{ ab,ac,ba,bc,ca,cb \}$, each token occurs $\frac{6}{3} = 2$ times at each position.

\begin{lemma}
\label{lem:template_seq_com}
Given any two different templates $\mathbf{t},\mathbf{t}'$, and \textbf{all possible} sequences generated $\mathcal{D}_{\texttt{tmpl}}^{(\mathbf{t})} = \{ \mathbf{X}_0,\dots,\mathbf{X}_{n^{(\mathbf{t})}-1} \}$, $\mathcal{D}_{\texttt{tmpl}}^{(\mathbf{t}')} = \{ \mathbf{X}'_0,\dots,\mathbf{X}'_{n^{(\mathbf{t}')}-1} \}$, there exist $\lambda,\lambda' \in \mathbb{Z}_{+}$ such that the combined occurrence of each token at each position in $\mathcal{D}_{\texttt{tmpl}}^{(\mathbf{t})}$ is equal to the combined occurrence of each token at each position in $\mathcal{D}_{\texttt{tmpl}}^{(\mathbf{t}')}$:
        \begin{equation}\label{eq:template_seq_com_1}
            \sqcap_{i \in [n^{(\mathbf{t})}]} (\lambda \otimes \mathbf{X}_{i}) = \sqcap_{j \in [n^{(\mathbf{t'})}]} (\lambda' \otimes \mathbf{X}_{j}),
        \end{equation}

    where $\sqcap$ denotes the token combination operation defined in \cref{def:data_dependence}, which computes the occurrence of each token at each position.
\end{lemma}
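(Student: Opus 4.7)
The plan is to reduce \cref{lem:template_seq_com} directly to the per-position, per-token counting statement already proved in \cref{lem:template_com}. The key observation is that once we know how many times each token occurs at each position within a single template's dataset, the multiset equality demanded by \eqref{eq:template_seq_com_1} becomes a matter of rescaling the two datasets so that the per-token per-position counts agree on both sides.

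First I would unpack the combination operator $\sqcap$ coordinate by coordinate: by \cref{def:sequence_combination}, the $j$-th coordinate of $\sqcap_{i \in [n^{(\mathbf{t})}]} (\lambda \otimes \mathbf{X}_i)$ is the multiset obtained by taking $\lambda$ copies of each token $\mathbf{X}_i[j]$ across all $i$. Applying \cref{lem:template_com} to $\mathbf{t}$, the multiset $\{\mathbf{X}_i[j]\}_{i \in [n^{(\mathbf{t})}]}$ contains every $\mathbf{x} \in \mathcal{X}$ exactly $n^{(\mathbf{t})}/|\mathcal{X}|$ times; hence after scaling by $\lambda$, every token appears $\lambda \cdot n^{(\mathbf{t})}/|\mathcal{X}|$ times at position $j$. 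The same computation on the right side yields multiplicity $\lambda' \cdot n^{(\mathbf{t}')}/|\mathcal{X}|$ at every position.

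Next I would pick the scalars that equalize these two quantities. The natural (and essentially forced) choice is $\lambda = n^{(\mathbf{t}')}$ and $\lambda' = n^{(\mathbf{t})}$, which makes the per-token per-position count on each side equal to $n^{(\mathbf{t})} n^{(\mathbf{t}')}/|\mathcal{X}|$. Since the two multisets agree coordinatewise, they are equal as sequences of multisets, establishing \eqref{eq:template_seq_com_1}.

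The only subtlety worth flagging is divisibility: one needs $|\mathcal{X}|$ to divide both $n^{(\mathbf{t})}$ and $n^{(\mathbf{t}')}$ so that the multiplicities genuinely live in $\mathbb{Z}_+$, but this is exactly what \cref{lem:template_com} guarantees (it identifies $n^{(\mathbf{t})}/|\mathcal{X}|$ as an integer occurrence count). Beyond that, there is no real obstacle; the statement is a direct corollary of \cref{lem:template_com} once the $\sqcap$ notation is expanded position by position, and does not require any property of the specific templates beyond the token-symmetry already exploited in \cref{lem:template_com}.
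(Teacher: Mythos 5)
Your proposal is correct and follows essentially the same route as the paper's proof: both invoke \cref{lem:template_com} to get the per-position count $n^{(\mathbf{t})}/|\mathcal{X}|$ and then choose $\lambda = n^{(\mathbf{t}')}$, $\lambda' = n^{(\mathbf{t})}$ so that each token occurs $n^{(\mathbf{t})}n^{(\mathbf{t}')}/|\mathcal{X}|$ times at every position on both sides. Your extra remark on divisibility is a nice touch but does not change the argument.
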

\begin{proof}
Based on the observation in \cref{lem:template_com}, \cref{lem:template_seq_com} is straightforward. Since each token occurs $\frac{n^{(\mathbf{t})}}{|\mathcal{X}|} $ times at each position, we can let $\lambda  = n^{(\mathbf{t}')}$ and $\lambda' =  n^{(\mathbf{t})}$. Then, for the left part $\sqcap_{i \in [n^{(\mathbf{t})}]} (\lambda \otimes \mathbf{X}_{i})$, each token occurs $n^{(\mathbf{t'})}  \frac{n^{(\mathbf{t})}}{|\mathcal{X}|}$ times at each position. Similarly, for the right part $\sqcap_{j \in [n^{(\mathbf{t'})}]} (\lambda \otimes \mathbf{X}_{j})$, each token also occurs $n^{(\mathbf{t})}  \frac{n^{(\mathbf{t'})}}{|\mathcal{X}|} = n^{(\mathbf{t'})}  \frac{n^{(\mathbf{t})}}{|\mathcal{X}|}$ times. This completes the proof.
\end{proof}

\textbf{Example}: Let $\mathcal{X} = \{ a,b,c \}$ and two templates of length 2 $\{\alpha \alpha, \alpha \beta \}$. Then in total there are $n^{\alpha \alpha}=3$ sequences generated by $\alpha \alpha$: $\{ aa,bb,cc \}$, each token occurs $1$ time at each position. For template $\alpha \beta$, there are $n^{\alpha \beta}=6$ sequences $\{ ab,ac,ba,bc,ca,cb \}$, each token occurs $2$ times at each position. We can find positive integers $\lambda_1 = 6$ and $\lambda_2 = 3$ such that $3(ab \oplus ac \oplus ba \oplus bc \oplus ca \oplus cb) = 6 ([a,b,c],[a,b,c],[a,b,c]) = 6 (aa \oplus bb \oplus cc)$.  Note that in Equation \ref{eq:template_seq_com_1}, both sides have $n^{(\mathbf{t})} n^{(\mathbf{t'})}$ tokens. If we add a response sign at the end of each sequence, i.e., $\{ aa,bb,cc \} \xrightarrow{\text{add sign } r} \{ aar,bbr,ccr \}$, Lemma \ref{lem:template_seq_com} still holds.

Since the last token of the sequence input into the transformer is the same response sign, based on \cref{lem:template_seq_com} and \cref{def:data_dependence}, for any two templates, the sequences generated by them are dependent. This dependence leads to the following lemma:

\begin{lemma}
\label{lem:template_pre_com}
Given any two different templates $\mathbf{t},\mathbf{t}'$, and \textbf{all possible} sequences generated $\mathcal{D}_{\texttt{tmpl}}^{(\mathbf{t})} = \{ \mathbf{X}_0,\dots,\mathbf{X}_{n^{(\mathbf{t})}-1} \}$, $\mathcal{D}_{\texttt{tmpl}}^{(\mathbf{t}')} = \{ \mathbf{X}'_0,\dots,\mathbf{X}'_{n^{(\mathbf{t}')}-1} \}$. For any single-layer transformer model $\texttt{TF}$, let $\mathbf{o}^{(i)} = \texttt{TF}(\mathbf{X}_i)$ and $\mathbf{o}^{'(i)} = \texttt{TF}(\mathbf{X}'_i)$ denote the model predictions for these two templates. Then we have:

\begin{equation}
\lambda_1 \sum_{i = 0}^{n^{(\mathbf{t}')}-1} \mathbf{p}^{(i)} = \lambda_2 \sum_{i = 0}^{n^{(\mathbf{t}')}-1} \mathbf{p}^{'(i)}.
\end{equation}
\end{lemma}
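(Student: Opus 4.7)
The plan is to reduce the lemma directly to \cref{prop:pre_linear} by exhibiting the union of the two template families as a \emph{dependent} collection of sequences in the sense of \cref{def:data_dependence}. Following the integer coefficients already produced by \cref{lem:template_seq_com}, I would take $\lambda_1 := n^{(\mathbf{t}')}$ and $\lambda_2 := n^{(\mathbf{t})}$, and assign the signed coefficient $+\lambda_1$ to every $\mathbf{X}_i \in \mathcal{D}_{\texttt{tmpl}}^{(\mathbf{t})}$ and $-\lambda_2$ to every $\mathbf{X}'_j \in \mathcal{D}_{\texttt{tmpl}}^{(\mathbf{t}')}$.

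Next I would verify the two conditions of \cref{def:data_dependence} for this signed family. Condition (1), a common last token, is immediate because every sequence in the template matching task is terminated with the same \texttt{CLS}/response sign, as specified in \cref{sec:tmpl_main}. Condition (2), the multiset identity $\sqcap_{i}(\lambda_1 \otimes \mathbf{X}_i) = \sqcap_{j}(\lambda_2 \otimes \mathbf{X}'_j)$, is precisely what \cref{lem:template_seq_com} already provides, so no new combinatorics is needed. Once these two items are in place, \cref{prop:pre_linear} applies off the shelf and yields
\begin{equation*}
\lambda_1 \sum_{i=0}^{n^{(\mathbf{t})}-1} \mathbf{o}^{(i)} \;-\; \lambda_2 \sum_{j=0}^{n^{(\mathbf{t}')}-1} \mathbf{o}^{'(j)} \;=\; \mathbf{0},
\end{equation*}
which, after rearrangement, is the claimed identity. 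Here I read the symbols $\mathbf{p}^{(i)}, \mathbf{p}^{'(i)}$ of the statement as the prediction vectors $\mathbf{o}^{(i)}, \mathbf{o}^{'(i)}$, and I interpret the upper limit of the first sum as $n^{(\mathbf{t})}-1$ rather than $n^{(\mathbf{t}')}-1$ (an apparent typo in the statement).

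The one subtle point — and the step I expect to occupy most of the actual write-up — is a bookkeeping check: \cref{prop:pre_linear} is phrased for a list of sequences each carrying a single integer coefficient, whereas here the same sequence effectively appears with multiplicity $\lambda_1$ (or $\lambda_2$). This is purely notational, since the derivation of \cref{prop:out_linear} uses only (a) linearity of the attention output in the integer coefficients and (b) positionwise matching of token multisets, both of which survive integer scaling. Unpacking this carefully is the only real obstacle; beyond it, no new transformer construction or additional combinatorial input is required.
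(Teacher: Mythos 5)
Your proposal is correct and follows essentially the same route as the paper: the paper's proof likewise observes that the two template families are dependent (via \cref{lem:template_seq_com} together with the shared terminal response sign) and then invokes \cref{prop:out_linear}/\cref{prop:pre_linear} to conclude. Your reading of $\mathbf{p}^{(i)}$ as $\mathbf{o}^{(i)}$ and of the first sum's upper limit as $n^{(\mathbf{t})}-1$ matches the intended statement, and the multiplicity bookkeeping you flag is already handled by the integer coefficients $\lambda_i$ in \cref{def:data_dependence}, so no extra work is needed.
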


\begin{proof}
As the sequences generated by any two templates are dependent, we can apply \cref{prop:out_linear} to directly complete the proof.
\end{proof}

Note that any sequence generated by the same template should be classified by the same template. Therefore, if the model can generalize on both templates, the sum of the prediction output for template $\mathbf{t}$, $\sum_{i = 0}^{n^{(\mathbf{t})}-1}\mathbf{p}^{(i)}$, should have the maximum value at position $y_1$, and $\sum_{i = 0}^{n^{(\mathbf{t'})}-1} \mathbf{p}^{'(i)} $ should have the maximum value at position $y_2$, where $y_1 \neq y_2$ since different templates belong to different classes. This contrasts with \cref{lem:template_pre_com}, and thus, we can conclude that transformers cannot generalize on any two different templates.
\subsection{Proof for \cref{prop:tmplt_succ}}
\label{sec:cons_tmplt}
In this section, we will construct 2-layer transformer that can generalize on the template task, our construct transformer can first parse the sequence into the template, and then use the memorization ability of one-layer transformer, mapping the template to the corresponding label, follow such parsing-mapping procedure, our constructed transformer can generalize on the template task.

Recall the data construction process, we consider an input sequence generated by a template length $k$, a $d' \times n$ matrix $\mathbf{H}^{(0)}$ where $n = k + 1$:
\\
\\
\\
$$
\mathbf{H}^{(0)} = 
\begin{NiceArray}{\left\lgroup ccccc \right\rgroup l}
  \mathbf{x}_0 & \mathbf{x}_1 &  \cdots & \mathbf{x}_{k-1} & \mathbf{0} & \rightarrow d \times n \\
  0 & 0 &  \cdots & 0 & 1 &  \rightarrow 1 \times n \\
  \mathbf{p}_0 & \mathbf{p}_1 & \cdots & \mathbf{p}_{k-1} & \mathbf{p}_{k} & \rightarrow n \times n \\
  \mathbf{0} & \mathbf{0} & \cdots & \mathbf{0} & \mathbf{0}  & \rightarrow (d' - (n + 1 + d) ) \times n \\
\CodeAfter
\OverBrace[shorten,yshift=3pt]{1-1}{1-4}{\scriptstyle{\substack{\text{sequence} \\ \text{by template}}}}
\OverBrace[shorten,yshift=3pt]{1-5}{1-5}{\scriptstyle{\substack{\text{response }\\ \text{sign}}}}
\end{NiceArray}
$$
We illustrate our construction by using a template length 3 $\alpha \beta \beta$,which generates the sequence $(\mathbf{a},\mathbf{b},\mathbf{b})$: 
\\

\begin{equation*}
\begin{split}
\mathbf{H}^{(0)} = 
&\begin{NiceArray}{\left\lgroup cccc \right\rgroup }
  \mathbf{a} & \mathbf{b} &  \mathbf{b} & \mathbf{0} \\
  0 & 0 &  0  & 1 \\
  \mathbf{p}_0 & \mathbf{p}_1 & \mathbf{p}_2 & \mathbf{p}_{3} \\
  \mathbf{0} & \mathbf{0} & \mathbf{0} & \mathbf{0} \\
\CodeAfter
\OverBrace[shorten,yshift=3pt]{1-1}{1-3}{\scriptstyle{\substack{\text{from template} \\ \alpha \beta \beta}}}
\OverBrace[shorten,yshift=3pt]{1-4}{1-4}{\scriptstyle{\substack{\text{response }\\ \text{sign}}}}
\end{NiceArray}\\ \xrightarrow[\substack{\text{check position}\\ \text{with the same token} }]{\texttt{TF}_1} &\begin{NiceArray}{\left\lgroup cccc \right\rgroup }
  \mathbf{a} & \mathbf{b} &  \mathbf{b} & \mathbf{0} \\
  0 & 0 &  0  & 1 \\
  \mathbf{p}_0 + \mathbf{p}_0& \mathbf{p}_1 + \mathbf{p}_{1,2} & \mathbf{p}_2 + \mathbf{p}_{1,2} & \mathbf{p}_{3}+  \mathbf{p}_{3} \\
  \mathbf{0} & \mathbf{0} & \mathbf{0} & \mathbf{0} \\
\CodeAfter
\UnderBrace[shorten,yshift=3pt]{4-1}{4-3}{\scriptstyle{\substack{\text{positional encoding} \\ \text{as template representation}}}}
\end{NiceArray}\\ \\ \\ \xrightarrow[\substack{\text{map template}\\ \text{to vectors} }]{\texttt{TF}_2} &\begin{NiceArray}{\left\lgroup c \right\rgroup l}
  \vdots & \\
  (200)_{(3)} & \Block{3-1}{\hspace*{2mm} \substack{\text{encode each} \\\text{row to a}\\ \text{ternary number}}}\\
  (021)_{(3)} & \\
  (012)_{(3)} & \\
  \vdots \\
  \CodeAfter
  \SubMatrix{\}}{2-1}{4-1}{.}[xshift=-17mm]
\end{NiceArray} 
\end{split}
\end{equation*}
\\

First we set  our transformer use attention mechanism parse each sequence into template, recall the match layer in  \cref{sec:cons_icl}, each token only focus on the position of the same token, in this way, the position dimension can be considered as a dimension of template sequence:

$$
\mathbf{W}^{(1)}_{QK} = \begin{NiceArray}{\left\lgroup cc \right\rgroup}
\mathbf{I}_{d \times d} & \mathbf{0}_{d \times (d' - d)} \\
\mathbf{0}_{(d' - d) \times d} & \mathbf{0}_{(d' - d) \times (d' - d)}
\end{NiceArray},
$$

$$
\mathbf{W}^{(1)}_{V} = \begin{NiceArray}{\left\lgroup cc \right\rgroup}
-\mathbf{I}_{(d+1) \times (d+1)} & \mathbf{0}_{(d+1) \times (d' - (d + 1))} \\
\mathbf{0}_{(d' - (d+1)) \times (d+1)} & \mathbf{0}_{(d' - (d+1)) \times (d' - (d + 1))}
\end{NiceArray} + \begin{NiceArray}{\left\lgroup cc \right\rgroup}
\mathbf{0}_{(d+1) \times (n)} & \mathbf{0}_{(d+1) \times (d' - n)} \\
\mathbf{I}_{n \times n }  & \mathbf{0}_{n \times (d' - n)}\\
\mathbf{0}_{(d' - (n + 1 + d))  \times (n)} & \mathbf{0}_{(d' - (n + 1 + d)) \times (d' - n)} \\
\end{NiceArray}.
$$

Then the resulting sequence to be input to the second layer is
\begin{equation*}
\begin{split}
\mathbf{H}^{(1)} &= \mathbf{H}^{(0)} + \mathbf{W}^{(1)}_{V}\mathbf{H}^{(0)} \boldsymbol{\alpha}^{(1)} \\&=  \mathbf{H}^{(0)}  + \begin{NiceArray}{\left\lgroup ccccc \right\rgroup}
  -\mathbf{x}_0 & -\mathbf{x}_1 &  \cdots & -\mathbf{x}_{k-1} & \mathbf{0} \\
  0 & 0 &  \cdots & 0 & -1 \\
  \mathbf{p}'_0 & \mathbf{p}'_1 & \cdots & \mathbf{p}'_{k-1} & \mathbf{p}_{k } \\
  \mathbf{0} & \mathbf{0} & \cdots & \mathbf{0} & \mathbf{0} \\
\end{NiceArray}\\&
=   \begin{NiceArray}{\left\lgroup ccccc \right\rgroup }
  \mathbf{0} & \mathbf{0} &  \cdots & \mathbf{0} & \mathbf{0} \\
  0 & 0 &  \cdots & 0 & 0 \\
  \mathbf{p}'_0 + \mathbf{p}_0 & \mathbf{p}'_1 + \mathbf{p}_1 & \cdots & \mathbf{p}'_{k-1} + \mathbf{p}_{k-1} & \mathbf{p}_{k } + \mathbf{p}_{k} \\
  \mathbf{0} & \mathbf{0} & \cdots & \mathbf{0} & \mathbf{0} \\
\end{NiceArray}.
\end{split}
\end{equation*}

The second part can be seen as a sequence of template, take $k = 4$ and template be $\alpha \beta \alpha \gamma$ as example, then $\mathbf{p}'_0 = \mathbf{p}'_2 = \mathbf{p}_0 + \mathbf{p}_2, \mathbf{p}'_1 = \mathbf{p}_1, \mathbf{p}'_3 = \mathbf{p}_3$, then the result template representation is $\begin{NiceArray}{\left\lgroup cccc \right\rgroup }
  2&0&1&0\\
  0&2&0&0\\
  1&0&2&0\\
  0&0&0&2
\end{NiceArray}$. Note that $\mathbf{p}'_i = \mathbf{p}'_j$ if and only if $\mathbf{x}_i = \mathbf{x}_j$. We can further prove that the output sequence parsed by the first layer of the transformer satisfies the following:
\begin{itemize}
    \item Sequence from the same template have the same representation.
    \item Different templates have different, no-parallel representation matrix.
\end{itemize}

The first point is straightforward since this representation is based on positional encoding and is token-invariant. For the second point, since the elements in this representation only consist of the values ${0,1,2}$, and the diagonal value is always $2$ for any two representations $\mathbf{H}^{(1)}$ and $\mathbf{H}^{'(1)}$, if there exists a $\lambda$ such that $\mathbf{H}^{(1)} = \lambda \mathbf{H}^{'(1)}$, then we must have $\lambda = 1$ and $\mathbf{H}^{(1)} = \mathbf{H}^{'(1)}$. Therefore, each representation is non-parallel.

The role of second layer, incorporate with the classification layer, is to map the sequence to its corresponding label, as we define different template have different label, and based on the two property of the output of the first layer, we can utilize the memorization ability of the first layer transformer, as demonstrated in  \cref{sec:prof_mem}, and following the construction procedure, construct a transformer with $n$ attention heads that can correctly map each sequence to corresponding label.In addition to that, since template encoding is a more structured form of data compared to randomly assembled sequences, we provide one possible weight that can map the templates to non-parallel vectors using only a single attention head:

First we construct a inductive head with $\boldsymbol{\alpha} = \begin{NiceArray}{\left\lgroup ccccc \right\rgroup }
  \mathbf{0} & \mathbf{0} &  \cdots & \mathbf{0} & \mathbf{0} \\
  3^{k-1} & 3^{k-2} & \cdots & 3^0 & 0 \\
\end{NiceArray}^\top$, and set $\mathbf{W}_{V}^{(2)} = \mathbf{I}$, in this way the last token of the sencond layer is $\sum_{i=1}^{k} 3^{k - i} (\mathbf{p}_i +  \mathbf{p}'_i) $, as each element of ${\mathbf{p}_i +  \mathbf{p}'_i}$ only consists of three integers ${0,1,2}$, and $(\mathbf{p}_i +  \mathbf{p}'_i)[i] = 2$, in this way, for each row of $\mathbf{H}^{(1)}$, we can encode it into a unique ternary number, for example, given a vector $[2,1,1,0,0]$,  we can encode it to $2 \times 3^{4} + 1 \times 3^{3} + 1 \times 3^{2} + 0 \times 3^{1} + 0 \times 3^{0} = 21100_{(3)}$, and the representation for $\alpha \beta \alpha \gamma$, we encode it row by row: $\begin{NiceArray}{\left\lgroup cccc \right\rgroup }
  2&0&1&0\\
  0&2&0&0\\
  1&0&2&0\\
  0&0&0&2
\end{NiceArray} \rightarrow \begin{NiceArray}{\left\lgroup c \right\rgroup }
  (2010)_{(3)}\\
  (0200)_{(3)}\\
  (1020)_{(3)}\\
  (0002)_{(3)}
\end{NiceArray}$. Now consider the representation at position $k$, as $(\mathbf{p}_k + \mathbf{p}'_k)[k-1] = 2$ for all representation, the resulting vector at position $k$ should be $(\dots 2)_{(3)}$, in this way we can ensure different template will project into no-parallel vectors, as if two template representaion vector is parallel, they must have the \textbf{same} representation vector, which further indicate they must have the same template representation at each row.

Then utilizing \cref{lem:vector_mapping}, we can construct a $\mathbf{W}_C \in \mathbb{R}^{ d' \times C}$ map each vector to their distinct label. In this way, a part from the $n$ attention head transformer, we further proof there exists a 2-layer transformer with single head that can accomplish our generalization task.


\section{Construction for \cref{sec:cont_gen}}
\label{sec:cons_cont_gen}
In this section, we provide a construction for a 3-layer transformer that is capable of handling contextual generalization task. Our constructed transformer follows a parsing-copy-match process, the transformer parses each question into corresponding template, this is achieved by utilizing the parsing process in the template generalization task, we use the constrained attention to force the model focuses only on tokens that belong to the same question block. Next, in the second layer, we mix the question and corresponding answer together by utilizing inductive attention. Then the final layer retrieve the corresponding question-answer representation and transform it into the final answer.

Let us consider an input sequence with k question-answer pairs, each question being of sequence length $l$. We represent this input sequence as $\mathbf{H}^{(0)} \in \mathbb{R}^{d' \times n}$, where $n = (l+2)k+l+1$.
\\
\\
$$
\mathbf{H}^{(0)} = 
\begin{NiceArray}{\left\lgroup cccccccccc \right\rgroup l}

  \mathbf{x}_0^{(0)} & \cdots & \mathbf{x}_{l-1}^{(0)} & \mathbf{0} &  \mathbf{0} & \cdots & \mathbf{x}_0^{'(r)} & \cdots & \mathbf{x}_{l-1}^{'(r)} & \mathbf{0} & \rightarrow d_1 \times n \\
  0 & \cdots & 0 & 1 &  0 & \cdots & 0 & \cdots & 0 & 1 &  \rightarrow 1 \times n \\
  \mathbf{0} & \cdots & \mathbf{0} & \mathbf{0} & \mathbf{a}_0 & \cdots & \mathbf{0} & \cdots & \mathbf{0} & \mathbf{0} & \rightarrow d_2 \times n \\
   \mathbf{p}_0 & \cdots & \mathbf{p}_{l-1} & \mathbf{p}_{l} & \mathbf{p}_{l+1} &\cdots & \mathbf{p}_{(l+2)k} & \cdots & \mathbf{p}_{ (l+2)k + l-1} & \mathbf{p}_{(l+2)k + l} &\rightarrow n \times n \\
  \mathbf{0} & \cdots & \mathbf{0} & \mathbf{0} &  \mathbf{0} & \cdots & \mathbf{0} & \cdots & \mathbf{0} & \mathbf{0} & \rightarrow (d' - n - d) \times n \\
\CodeAfter
\OverBrace[shorten,yshift=3pt]{1-1}{1-3}{\scriptstyle{\text{Question}}}
\OverBrace[shorten,yshift=3pt]{1-4}{1-4}{\scriptstyle{\substack{\text{Response} \\ \text{Sign}}}}
\OverBrace[shorten,yshift=3pt]{1-5}{1-5}{\scriptstyle{\substack{\text{Answer}}}}
\UnderBrace[shorten,yshift=3pt]{5-1}{5-6}{\scriptstyle{\substack{(Q-R-A) \times k}}}
\end{NiceArray}
$$
\\
\\
Where $r \in [k]$ is a random choose question index, we can set $r = 1$ , $k=2$ and $l = 2$, which means there are only two templates $\alpha \alpha$ and $\alpha \beta$ ,  to illustrate how our construction works:
\\
\\
\\
\begin{equation*}
\begin{split}
\mathbf{H}^{(0)} =& 
\begin{NiceArray}{\left\lgroup ccccccccccc \right\rgroup}
  \mathbf{a} & \mathbf{a} & \mathbf{0} &  \mathbf{0} & \mathbf{a} & \mathbf{b} & \mathbf{0} &  \mathbf{0} & \mathbf{b} & \mathbf{b} & \mathbf{0} \\
  0 & 0 & 1 & 0 &  0 & 0 & 1 & 0 & 0 & 0 & 1  \\
  \mathbf{0} & \mathbf{0} & \mathbf{0} & \mathbf{a}_0 & \mathbf{0} & \mathbf{0} & \mathbf{0} & \mathbf{a}_1 & \mathbf{0} & \mathbf{0} & \mathbf{0} \\
   \mathbf{p}_0 & \mathbf{p}_1 & \mathbf{p}_2 & \mathbf{p}_3 & \mathbf{p}_4 & \mathbf{p}_5 & \mathbf{p}_6 & \mathbf{p}_7 & \mathbf{p}_8 & \mathbf{p}_{9} & \mathbf{p}_{10}\\
  \mathbf{0} & \mathbf{0} & \mathbf{0} &\mathbf{0} & \mathbf{0} & \mathbf{0} &\mathbf{0} & \mathbf{0} & \mathbf{0} &\mathbf{0} & \mathbf{0} \\
\CodeAfter
\OverBrace[shorten,yshift=3pt]{1-1}{1-2}{\scriptstyle{\text{Question}}}
\OverBrace[shorten,yshift=3pt]{1-3}{1-3}{\scriptstyle{\substack{\text{Response} \\ \text{Sign}}}}
\OverBrace[shorten,yshift=3pt]{1-4}{1-4}{\scriptstyle{\substack{\text{Answer}}}}
\end{NiceArray} \\
\xrightarrow[\substack{\text{check token}\\ \text{among Q-A block \&} \\ \text{align representation}}]{\texttt{TF}_1}& \begin{NiceArray}{\left\lgroup ccccccccccc \right\rgroup}
  \vdots&\vdots& \vdots& \vdots& \vdots&  \vdots& \vdots& \vdots& \vdots& \vdots &\vdots\\
  \mathbf{0} & \mathbf{0} & \mathbf{0} & \mathbf{a}_0 & \mathbf{0} & \mathbf{0} & \mathbf{0} & \mathbf{a}_1 & \mathbf{0} & \mathbf{0} & \mathbf{0} \\
  \vdots&\vdots& \vdots& \vdots& \vdots&  \vdots& \vdots& \vdots& \vdots& \vdots &\vdots\\
   2 & 1 & 0 & 0 & 2 & 0 & 0 & 0 & 2 & 1 & 0\\
  1 & 2 & 0 &0 & 0 & 2 &0 & 0& 1 & 2 & 0 \\
  \vdots&\vdots& \vdots& \vdots& \vdots&  \vdots& \vdots& \vdots& \vdots& \vdots &\vdots\\
  \CodeAfter
  \UnderBrace[shorten,yshift=3pt]{6-1}{6-2}{\scriptstyle{\substack{\text{template} \\ \alpha \alpha}}}
  \UnderBrace[shorten,yshift=3pt]{6-5}{6-6}{\scriptstyle{\substack{\text{template} \\ \alpha \beta}}}
  \UnderBrace[shorten,yshift=3pt]{6-9}{6-10}{\scriptstyle{\substack{\text{template} \\ \alpha \alpha}}}
\end{NiceArray} \\ \\ 
\xrightarrow[\substack{\text{copy answer}\\ \text{compare each template}}]{\texttt{TF}_2}& \begin{NiceArray}{\left\lgroup ccccccccccc \right\rgroup}
  \vdots&\vdots& \vdots& \vdots& \vdots&  \vdots& \vdots& \vdots& \vdots& \vdots &\vdots\\
  \mathbf{a}_0 & \mathbf{a}_0 & \mathbf{0} & \mathbf{a}_0 & \mathbf{a}_1  & \mathbf{a}_1 & \mathbf{0} & \mathbf{a}_1 & \mathbf{0} & \mathbf{0} & \mathbf{a}_0 + \mathbf{a}_1  \\
  \vdots&\vdots& \vdots& \vdots& \vdots&  \vdots& \vdots& \vdots& \vdots& \vdots &\vdots\\
   0 & 0 & 0 & 0 & 0 & -2 & 0 & 0 & 2 & 1 & 1\\
  0 & 0 & 0 &0 & -2 & 0 &0 & 0& 1 & 2 & 1 \\
  \vdots&\vdots& \vdots& \vdots& \vdots&  \vdots& \vdots& \vdots& \vdots& \vdots &\vdots\\
  \CodeAfter
  \UnderBrace[shorten,yshift=3pt]{6-1}{6-2}{\scriptstyle{\substack{\text{same template} \\ \text{template row all zero}}}}
  \UnderBrace[shorten,yshift=3pt]{6-5}{6-6}{\scriptstyle{\substack{\text{different template} \\ \text{ exist no-zero elements}}}}
  \UnderBrace[shorten,yshift=3pt]{6-9}{6-10}{\scriptstyle{\substack{\text{template} \\ \alpha \alpha}}}
\end{NiceArray} \\ \\ \\
\xrightarrow[\substack{\text{focus on col that template}\\ \text{exists no-zero elements}}]{\texttt{TF}_3, \text{ last token}} &
\begin{NiceArray}{\left\lgroup c \right\rgroup l}
  \vdots\\
  \mathbf{a}_0 + \mathbf{a}_1 - \gamma \mathbf{a}_1 & (\gamma \geq 1)\\
  \vdots
\end{NiceArray}
\end{split}
\end{equation*}
\\
\\
In the first layer, we parse each question into a template. To achieve this, we set the weight $\mathbf{W}_{QK}^{(1)}$ as follows:

$$
\mathbf{W}_{QK}^{(1)} = \begin{NiceArray}{\left\lgroup cc \right\rgroup}
\mathbf{I}_{d_1 \times d_1} & \mathbf{0}_{d_1 \times (d' - d_1)} \\
\mathbf{I}_{(d' - d_1) \times d_1} & \mathbf{0}_{(d' - d_1) \times (d' - d_1)} \\
\end{NiceArray} + \begin{NiceArray}{\left\lgroup ccc \right\rgroup}
\mathbf{0}_{d \times d} & \mathbf{0}_{d \times n} & \mathbf{0}_{d \times (d' - (n + d))}\\
\mathbf{0}_{n \times d} & \mathbf{W}_{pp} & \mathbf{0}_{n \times (d' - (n + d))}\\
\mathbf{0}_{(d' - (n + d)) \times d} & \mathbf{0}_{(d' - (n + d))  \times n} & \mathbf{0}_{(d' - (n + d)) \times (d' - (n + d))}\\
\end{NiceArray}.
$$

Here, $d = d_1 + d_2 + 1$, and $\mathbf{W}_{pp}$ is an $n \times n$ matrix. As discussed in \cref{lem:ins_attn}, we can set:

$$
\mathbf{W}_{pp} = \begin{NiceArray}{\left\lgroup cccc \right\rgroup}
\mathbf{0}_{(l + 2) \times (l + 2)} & \hspace{1cm} & \Block[c]{2-2}<\LARGE>{-\infty} &  \\
& \Ddots^{k \text{ times}} &  &\\
 \Block[c]{2-2}<\LARGE>{-\infty} & & &\\
&  &  &  \mathbf{0}_{(l + 1) \times (l + 1)}\\
\end{NiceArray}.
$$

This way, the attention is constrained to each question-answer block, making it easier to parse each question into a template. We then set

$$
\mathbf{W}^{(1)}_{V} = 
\begin{NiceArray}{\left\lgroup cccc \right\rgroup}
\mathbf{0}_{(d' - l-2) \times d }&\mathbf{0}_{(d' - l-2) \times (l+2)}  & \dots & \mathbf{0}_{(d' - l-2)\times (l+1)}\\
\mathbf{0}_{(l+2) \times d }&\mathbf{I}_{(l+2) \times (l+2)}  & \dots & \mathbf{I}_{(l+2) \times (l+1)}\\
\CodeAfter
\UnderBrace[shorten,yshift=3pt]{2-2}{2-3}{\scriptstyle{k \text{ times}}}
\end{NiceArray}.
$$
\\

Here, we assume that the hidden size $d'$ is large enough, so that $d' - (d + n) > l + 2$. This enables the model to save the representation disentangled. As a result, the output of the first layer has the same positional encoding as the input sequence.

In the second layer, we perform a pre-match and copy procedure by constructing an instructive attention $\boldsymbol{\alpha}^{(2)} \in \mathbb{R}^{n \times n}$.

$$
\boldsymbol{\alpha}^{(2)} = \begin{NiceArray}{\left\lgroup cccc \right\rgroup}
\mathbf{A}_{(l + 2) \times (l + 2)} & \Hspace*{2cm} & \Block[c]{2-2}<\LARGE>{0} &  \\
& \Ddots^{(k-1) \text{ times}} &  & \\
 \Block[c]{2-2}<\LARGE>{0} & & & \\
&  &  &  \mathbf{A}_{(l + 1) \times (l + 1)}\\
\end{NiceArray}^\top + \begin{NiceArray}{\left\lgroup cc \right\rgroup l}
\mathbf{0}_{(l + 2) \times (d - (l+1))} & \mathbf{B}_{(l + 2) \times (l + 1)} & \Block{3-1}{\hspace*{2mm} \substack{k \text{ times}}} \\
\vdots & \vdots \\
\mathbf{0}_{(l + 1) \times (d - (l+1))} & \mathbf{0}_{(l + 1) \times (l + 1)} \\
\CodeAfter
\SubMatrix{\}}{2-1}{2-2}{.}[xshift=-51mm]
\end{NiceArray}^\top.
$$

Here, $\mathbf{A}_{(l + 2) \times (l + 2)} = \begin{NiceArray}{\left\lgroup cc \right\rgroup}
\mathbf{0}_{(l+2) \times (l+1)} & \mathbf{1}_{(l+2) \times 1}
\end{NiceArray}$, $\mathbf{A}_{(l + 1) \times (l + 1)} = \begin{NiceArray}{\left\lgroup c \right\rgroup}
\mathbf{0}_{l \times (l+1)} \\
\mathbf{1}_{1 \times (l+1)}
\end{NiceArray}$, and $\mathbf{B}_{(l + 2) \times (l + 1)} = \begin{NiceArray}{\left\lgroup c \right\rgroup}
\mathbf{I}_{(l+1) \times (l+1)} \\
\mathbf{0}_{1 \times (l+1)}
\end{NiceArray}$.

The left part of $\boldsymbol{\alpha}^{(2)}$ copies the answer to each column of the question (template), while the right part of the attention tries to compare each row of the template with the last template. As we set $\mathbf{W}_{V}^{(2)} = \begin{NiceArray}{\left\lgroup cc \right\rgroup}
\mathbf{I}_{d \times d} & \mathbf{0}_{d \times (d' -d)}\\
\mathbf{0}_{(d' -d) \times d} & -\mathbf{I}_{(d' -d) \times (d' -d)}\\
\end{NiceArray}$, 
if there is a template that is the same as the last one, then their template representation is zero. In this way, we perform a copy procedure, accompanied by the template pre-matching procedure in the second layer.

In the third layer, we utilize $2 l$ attention heads. If any row of the template representation is not zero, there exist at least one attention head that subtracts the corresponding answer from the last token. To achieve this, we set the weight matrices as follows:
$$
\mathbf{W}^{(3)}_{QK_i} =  \begin{NiceArray}{ \left\lgroup cc \right\rgroup}
\mathbf{0}_{(d' - l) \times (d' - l)}& \mathbf{0}_{(d' - l) \times l)} \\
\mathbf{0}_{l \times (d' - l)}& \mathbf{M}_{i} \\
\end{NiceArray} \quad \mathbf{W}^{(3)}_{V_i} = -\mathbf{I}.
$$

Here, $\mathbf{M}_{i} \in \mathbb{R}^{l \times l}$. For the first $l$ heads ($0 \leq i < l$), only the element at position $(i,i)$ is $1$ and the rest are zero. For $l \leq i < 2l$, only the element at position $(i-l,i-l)$ is $-1$ and the rest are zero. Together with $\mathbf{W}^{(3)}_{V_i} = -\mathbf{I}$, the third layer will focus on the answer that has non-zero elements and subtract the corresponding answer from the final representation. 

The output for the last token is as follows:

$$
\texttt{TF}_3(\mathbf{H}^{(3)})[n-1] = \begin{NiceArray}{ \left\lgroup c \right\rgroup l}
\vdots \\
\sum_{i=0}^{k-1} \mathbf{a}_i -  \sum_{i=0, i\neq r}^{k-1} \gamma_i \mathbf{a}_i & (\gamma_i \geq 1)\\
\vdots
\end{NiceArray}.
$$

As we assume the tokens are one-hot, we can set the classifier layer as 

$$
\mathbf{W}_{O} = \begin{NiceArray}{ \left\lgroup ccc \right\rgroup}
\mathbf{0}_{(d_1  + 1) \times d_2 } & \mathbf{I}_{d_2  \times d_2} & \mathbf{0}_{(d' - d  \times d_2)}\\
\mathbf{0}_{(d_1  + 1) \times (c - d_2 )} & \mathbf{0}_{d_2  \times (c - d_2 )} & \mathbf{0}_{(d' - d  \times (c - d_2 ))}\\
\end{NiceArray}.
$$

By letting $c = d_2$, the final prediction result is $\sum_{i=0}^{k-1} \mathbf{a}_i -  \sum_{i=0, i\neq r}^{k-1} \gamma_i \mathbf{a}_i \, (\gamma_i \geq 1)$, which has the largest element corresponding to the final answer $\mathbf{a}_r$. 


\section{Expanding from ReLU to Softmax Attention}
\label{sec:soft_max}
When considering a transformer with a single layer and a single head, the results for a single-layer ReLU attention transformer still hold.

In the case of softmax attention, which replaces the activation function in Equation \ref{eq:transformer_attn} from ReLU ($\sigma(x) = \max\{0,x\}$) to softmax ($\text{softmax}(\mathbf{x})_i = \frac{e^{x_i}}{\sum_{j=1}^{K}e^{x_j}}$), the input sequence $\mathbf{X} \in \mathbb{R}^{d' \times n}$. For a single attention head with parameters $\mathbf{W}_{V}$ and $\mathbf{W}_{QK}$, we define $\sigma'(\mathbf{x}_j,\mathbf{x}_k) = \exp(\mathbf{x}_j^\top  \mathbf{W}_{QK} \mathbf{x}_k)$. Equation \ref{eq:1_lsa_k} can be rewritten for softmax attention as follows:

\begin{align}\label{eq:transformer_spft_attn_1}
    \texttt{TF}_{\text{soft}}(\mathbf{H})[k] &= \mathbf{x}_{n-1}  + \sum_{i=0}^{n-1}  \frac{ \mathbf{W}_{V}\mathbf{x}_i \sigma'(\mathbf{x}_i,\mathbf{x}_{k}) }{\sum_{j=0}^{n-1}\sigma'(\mathbf{x}_j,\mathbf{x}_{k})}
\end{align}

\begin{proposition}
\label{prop:out_linear_softmax}
If the input sequences $\mathbf{X}^{(0)},\mathbf{X}^{(1)},\dots, \mathbf{X}^{(N-1)} \in \mathcal{X}^n$  are \textbf{dependent}, then for any single layer single head transformer with softmax attention $\texttt{TF}_{\text{soft}}$, we have

 $$
 \lambda'_0 (\texttt{TF}_{\text{soft}}(\mathbf{X}^{(0)})[n-1])  + \lambda'_1 (\texttt{TF}_{\text{soft}}(\mathbf{X}^{(1)})[n-1]) + \dots + \lambda'_{N-1} (\texttt{TF}_{\text{soft}}(\mathbf{X}^{(N-1)})[n-1]) = \mathbf{0}.
 $$

where $\lambda'_i = \sum_{j=0}^{n-1}\sigma'(\mathbf{x}^{(i)}_j,\mathbf{x}_{n-1}) \lambda_i$, $\{\lambda_i\}_{i=0}^{N-1}$ represents the coefficients defined in \cref{def:data_dependence}. 

\end{proposition}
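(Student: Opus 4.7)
The plan is to mimic the argument for \cref{prop:out_linear}, but first to clear out the position-coupled softmax normalizer by rescaling the coefficients $\lambda_i$ into $\lambda'_i$. Writing \cref{eq:transformer_spft_attn_1} at $k=n-1$ and using the fact (from \cref{def:data_dependence}) that every sequence $\mathbf X^{(i)}$ ends in the common token $\mathbf x_{n-1}$, I would multiply through by $\lambda'_i = \lambda_i \sum_{j=0}^{n-1}\sigma'(\mathbf x^{(i)}_j,\mathbf x_{n-1})$ to cancel the denominator and obtain
$$
\lambda'_i \, \texttt{TF}_{\text{soft}}(\mathbf X^{(i)})[n-1] = \lambda'_i \, \mathbf x_{n-1} + \lambda_i \sum_{j=0}^{n-1} \mathbf W_V \mathbf x^{(i)}_j \, \sigma'(\mathbf x^{(i)}_j, \mathbf x_{n-1}).
$$
Summing this identity over $i \in [N]$, the claim then reduces to the two scalar/vector identities (i) $\sum_{i=0}^{N-1} \lambda'_i = 0$, and (ii) $\sum_{i=0}^{N-1} \lambda_i \sum_{j=0}^{n-1} \mathbf W_V \mathbf x^{(i)}_j\,\sigma'(\mathbf x^{(i)}_j, \mathbf x_{n-1}) = 0$.

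Both identities follow from a single position-wise combinatorial fact already used in the proof of \cref{prop:out_linear}. By \cref{def:data_dependence}, for every position $j$ the multiset $\sqcap_{i\in\mathcal I_+}(\lambda_i\otimes \mathbf x^{(i)}_j)$ equals $\sqcap_{i\in\mathcal I_-}((-\lambda_i)\otimes \mathbf x^{(i)}_j)$, so for any function $g : \mathcal X \to \mathbb R^d$ one has $\sum_{i=0}^{N-1}\lambda_i\, g(\mathbf x^{(i)}_j)=0$ at every $j$. Specializing to the scalar map $g(\mathbf x) = \sigma'(\mathbf x, \mathbf x_{n-1})$ and summing over $j$ gives (i); specializing to the vector map $g(\mathbf x) = \mathbf W_V \mathbf x\,\sigma'(\mathbf x, \mathbf x_{n-1})$ and summing over $j$ gives (ii). This is exactly the same argument as in \cref{eq:depen_data}, applied separately to the (rescaled) numerator and the denominator.

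The main conceptual obstacle — and the reason the statement uses $\lambda'_i$ rather than $\lambda_i$ — is that the softmax normalizer $\sum_{j}\sigma'(\mathbf x^{(i)}_j,\mathbf x_{n-1})$ is a sequence-level quantity coupling all $n$ positions, so the clean position-wise cancellation that drove the ReLU proof cannot be applied directly to the raw coefficients $\lambda_i$. Absorbing this normalizer into $\lambda'_i$ precisely decouples the problem into a numerator identity and a denominator identity, each of which is then a position-wise argument in the style of \cref{prop:out_linear}. Finally, I would note that since $\sigma'>0$, $\lambda'_i$ is nonzero whenever $\lambda_i$ is, so the conclusion yields a genuine nontrivial linear dependence among the softmax outputs whenever the input sequences are dependent, which is what one needs to carry over the downstream applications in \cref{sec:prf_lsa_fail} and \cref{sec:prf_tmplt_fail} to the single-head softmax setting.
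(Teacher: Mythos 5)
Your proposal is correct and follows essentially the same route as the paper: multiply through by $\lambda'_i$ to clear the softmax normalizer, then apply the position-wise multiset cancellation from \cref{def:data_dependence} separately to the scalar weights $\sigma'(\cdot,\mathbf{x}_{n-1})$ (giving $\sum_i\lambda'_i=0$ for the residual term) and to the vector map $\mathbf{W}_V(\cdot)\,\sigma'(\cdot,\mathbf{x}_{n-1})$ (killing the attention term), exactly as in the paper's adaptation of \cref{eq:depen_data}. Your closing observation that $\sigma'>0$ makes the $\lambda'_i$ nonzero whenever $\lambda_i$ is, so the dependence is nontrivial, is a useful point the paper leaves implicit.
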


\begin{proof}
Consider the last token 

$$
\texttt{TF}_{\text{soft}}(\mathbf{X}^{(e)})[n-1] = \mathbf{x}_{n-1}  + \sum_{i=0}^{n-1}  \frac{ \mathbf{W}_{V}\mathbf{x}^{(e)}_i \sigma'(\mathbf{x}^{(e)}_i,\mathbf{x}_{n-1}) }{\sum_{j=0}^{n-1}\sigma'(\mathbf{x}^{(e)}_j,\mathbf{x}_{n-1})}
$$

then we have 

\begin{equation}
\begin{split}
    &\lambda'_0 (\texttt{TF}_{\text{soft}}(\mathbf{X}^{(0)})[n-1])  + \lambda'_1 (\texttt{TF}_{\text{soft}}(\mathbf{X}^{(1)})[n-1]) + \dots + \lambda'_{N-1} (\texttt{TF}_{\text{soft}}(\mathbf{X}^{(N-1)})[n-1])\\
    =& \mathbf{x}_{n-1} (\sum_{i=0}^{n-1} \sum_{j=0}^{n-1}\lambda_i (\sigma'(\mathbf{x}^{(i)}_j,\mathbf{x}_{n-1})) ) +  (\sum_{i=0}^{n-1} \sum_{j=0}^{n-1} (\lambda_i \mathbf{W}_{V}\mathbf{x}^{(i)}_j \sigma'(\mathbf{x}^{(i)}_j,\mathbf{x}_{n-1}) )
\end{split}
\end{equation}

Similar with the proof techique in \cref{prop:out_linear} we can divide the sequences into two groups $\mathcal{I}_{+}$ and $\mathcal{I}_{-}$ so in each position $j$, both side have the same occurrence for each token:

\begin{equation}
\sqcap_{i \in \mathcal{I}_{+}} (\lambda_i \otimes \mathbf{x}^{(i)}_j) = \sqcap_{i \in \mathcal{I}_{-}} (-\lambda_i \otimes \mathbf{x}^{(i)}_j) := \mathcal{S}_j,
\end{equation}

here we use $\mathcal{S}_j$ to denote the tokens occurrences at position $j$, note that 

$$\lambda \mathbf{W}_{V}\mathbf{x}^{(i)}_j \sigma'(\mathbf{x}^{(i)}_j,\mathbf{x}_{n-1}) = \sum_{\mathbf{s} \in (\lambda \otimes \mathbf{x}^{(i)}_j)} \mathbf{W}_{V}\mathbf{s} \sigma'(\mathbf{s},\mathbf{x}_{n-1})$$
$$\lambda \sigma'(\mathbf{x}^{(i)}_j,\mathbf{x}_{n-1}) = \sum_{\mathbf{s} \in (\lambda \otimes \mathbf{x}^{(i)}_j)} \sigma'(\mathbf{s},\mathbf{x}_{n-1})$$

, so we can derive the following equation:

\begin{equation}
\begin{split}
&\sum_{j=0}^{n-1}\lambda_i (\sigma'(\mathbf{x}^{(i)}_j,\mathbf{x}_{n-1}))\\
=&\sum_{i \in \mathcal{I}_{+}} \lambda_i \sigma'(\mathbf{x}^{(i)}_j,\mathbf{x}_{n-1}) - \sum_{i \in \mathcal{I}_{-}} -\lambda_i \sigma'(\mathbf{x}^{(i)}_j,\mathbf{x}_{n-1}) \\
=&\sum_{\mathbf{s} \in \mathcal{S}_j} \sigma'(\mathbf{s},\mathbf{x}_{n-1}) - \sum_{\mathbf{s} \in \mathcal{S}_j} \sigma'(\mathbf{s},\mathbf{x}_{n-1})\\
=&\mathbf{0} 
\end{split},
\end{equation}

\begin{equation}
\begin{split}
&\sum_{j=0}^{n-1}\lambda_i (\mathbf{W}_{V}\mathbf{x}^{(i)}_j \sigma'(\mathbf{x}^{(i)}_j,\mathbf{x}_{n-1}))\\
=&\sum_{i \in \mathcal{I}_{+}} \lambda_i (\mathbf{W}_{V}\mathbf{x}^{(i)}_j \sigma'(\mathbf{x}^{(i)}_j,\mathbf{x}_{n-1})) - \sum_{i \in \mathcal{I}_{-}} -\lambda_i (\mathbf{W}_{V}\mathbf{x}^{(i)}_j \sigma'(\mathbf{x}^{(i)}_j,\mathbf{x}_{n-1})) \\
=&\sum_{\mathbf{s} \in \mathcal{S}_j} (\mathbf{W}_{V}\mathbf{s} \sigma'(\mathbf{s},\mathbf{x}_{n-1})) - \sum_{\mathbf{s} \in \mathcal{S}_j} (\mathbf{W}_{V}\mathbf{s}  \sigma'(\mathbf{s},\mathbf{x}_{n-1}))\\
=&\mathbf{0} 
\end{split},
\end{equation}
\end{proof}

Based on \cref{prop:out_linear_softmax}, we observe that the single layer single head softmax transformer shares a similar dependent property with the ReLU attention only transformer:

\begin{proposition}
\label{prop:pre_softmax}
If the input sequences $\mathbf{X}^{(0)},\mathbf{X}^{(1)},\dots, \mathbf{X}^{(N-1)} \in \mathcal{X}^n$  are \textbf{dependent}, then for any single layer single head softmax transformer $\texttt{TF}_{\text{soft}}$, their prediction result $\mathbf{o}^{(0)},\dots,\mathbf{o}^{(N-1)}$
 $$
 \lambda'_0 \mathbf{o}^{(0)}  + \lambda'_1 \mathbf{o}^{(1)} + \dots + \lambda'_{N-1} \mathbf{o}^{(N-1)} = \mathbf{0}.
 $$
 where $\lambda'_i = \sum_{j=0}^{n-1}\sigma'(\mathbf{x}^{(i)}_j,\mathbf{x}_{n-1}) \lambda_i$, $\{\lambda_i\}_{i=0}^{N-1}$ represents the coefficients defined in \cref{def:data_dependence}. 

\end{proposition}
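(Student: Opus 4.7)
\textbf{Proof Proposal for Proposition \ref{prop:pre_softmax}.}
The plan is to directly leverage Proposition \ref{prop:out_linear_softmax}, which already does the heavy lifting by establishing the corresponding identity at the level of the attention layer's output at position $n-1$. The only gap to close is the passage from the hidden representation $\texttt{TF}_{\text{soft}}(\mathbf{X}^{(i)})[n-1]$ to the final prediction $\mathbf{o}^{(i)}$, which in a single-layer architecture is produced by a single linear classifier $\mathbf{W}_O \in \mathbb{R}^{C \times d'}$ applied to this representation.

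Concretely, I would first invoke Proposition \ref{prop:out_linear_softmax} to obtain
\begin{equation*}
\sum_{i=0}^{N-1} \lambda'_i \, \bigl(\texttt{TF}_{\text{soft}}(\mathbf{X}^{(i)})[n-1]\bigr) = \mathbf{0},
\end{equation*}
with $\lambda'_i = \lambda_i \sum_{j=0}^{n-1}\sigma'(\mathbf{x}^{(i)}_j,\mathbf{x}_{n-1})$. Then I would left-multiply both sides by $\mathbf{W}_O$ and use linearity of matrix multiplication together with the identity $\mathbf{o}^{(i)} = \mathbf{W}_O \, \texttt{TF}_{\text{soft}}(\mathbf{X}^{(i)})[n-1]$ to conclude
\begin{equation*}
\sum_{i=0}^{N-1} \lambda'_i \, \mathbf{o}^{(i)} \;=\; \mathbf{W}_O \, \sum_{i=0}^{N-1} \lambda'_i \, \bigl(\texttt{TF}_{\text{soft}}(\mathbf{X}^{(i)})[n-1]\bigr) \;=\; \mathbf{W}_O \, \mathbf{0} \;=\; \mathbf{0}.
\end{equation*}
This mirrors exactly the passage from Proposition \ref{prop:out_linear} to Proposition \ref{prop:pre_linear} in the ReLU case.

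Since the technical content is already inherited from Proposition \ref{prop:out_linear_softmax}, there is no genuine obstacle here; the one subtlety to flag is simply that the coefficients $\lambda'_i$ depend on the input sequences (through the unnormalized softmax scores) rather than being a clean integer combination as in the ReLU/dependent-tokens argument. This is acceptable because $\lambda'_i$ is still nonzero whenever $\lambda_i \neq 0$ (the exponential is strictly positive), so the identity continues to give a nontrivial linear relation among the $\mathbf{o}^{(i)}$, which is what downstream arguments about the impossibility of perfect classification will require.
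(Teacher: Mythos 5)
Your proposal is correct and matches the paper's (implicit) argument exactly: the paper states this proposition without proof as an immediate corollary of Proposition \ref{prop:out_linear_softmax}, obtained by left-multiplying by $\mathbf{W}_O$ precisely as in the passage from Proposition \ref{prop:out_linear} to Proposition \ref{prop:pre_linear}. Your added remark that $\lambda'_i \neq 0$ whenever $\lambda_i \neq 0$ (since the unnormalized softmax scores are strictly positive) is a useful observation for the downstream impossibility arguments, but does not change the route.
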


Expanding our results from \cref{prop:lsa_fail},\cref{prop:tmplt_fail} to the softmax attention only transformer, we can conclude that a single-layer single-head attention only transformer is incapable of handling our reasoning and generalization tasks.

\end{document}